\definecolor{LightCyan}{rgb}{0.88,1,1}
\theoremstyle{plain}
\newtheorem{theorem}{Theorem}[section]
\newtheorem{proposition}[theorem]{Proposition}
\newtheorem{lemma}[theorem]{Lemma}
\newtheorem{corollary}[theorem]{Corollary}
\theoremstyle{definition}
\newtheorem{definition}[theorem]{Definition}
\newtheorem{assumption}[theorem]{Assumption}
\theoremstyle{remark}
\definecolor{gray}{rgb}{0.9, 0.9, 0.9}
\theoremstyle{plain}
\newcommand{\bignorm}[1]{\left\lVert#1\right\rVert}
\newcommand{\floor}[1]{\lfloor #1 \rfloor}
\newcommand{\ceil}[1]{\lceil #1 \rceil}
\newcommand{\wo}[1]{\widetilde{\mathcal{O}}\left( #1 \right)}
\newcommand{\bo}[1]{\mathcal{O}\left( #1 \right)}
\newcommand{\E}{\mathbb{E}}
\newcounter{relctr} 
\everydisplay\expandafter{\the\everydisplay\setcounter{relctr}{0}} 
\newcommand\labelrel[2]{%
  \begingroup
    \refstepcounter{relctr}%
    \stackrel{\textnormal{(\alph{relctr})}}{\mathstrut{#1}}%
    \originallabel{#2}%
  \endgroup
}
\icmltitlerunning{\hfill Multi-Level Monte Carlo Actor-Critic (MAC)\hfill\thepage}
\begin{document}
\doparttoc 
\faketableofcontents 


\twocolumn[
\icmltitle{Beyond Exponentially Fast Mixing in Average-Reward \\Reinforcement Learning via Multi-Level Monte Carlo Actor-Critic}
\icmlsetsymbol{equal}{*}
\begin{icmlauthorlist}
\icmlauthor{Wesley A. Suttle}{equal,arl}
\icmlauthor{Amrit Singh Bedi}{equal,umd}
\icmlauthor{Bhrij Patel}{umd}
\icmlauthor{Brian M. Sadler}{arl}
\icmlauthor{Alec Koppel}{amz}
\icmlauthor{Dinesh Manocha}{umd}
\end{icmlauthorlist}

\icmlaffiliation{umd}{Department of Computer Science, University of Maryland, College Park, USA. }
\icmlaffiliation{arl}{U.S. Army Research Laboratory, Adelphi, MD, USA.}
\icmlaffiliation{amz}{JP Morgan Chase AI Research,  USA. }

\icmlcorrespondingauthor{Wesley A. Suttle}{wesley.a.suttle.ctr@army.mil}

\icmlkeywords{Machine Learning, ICML}

\vskip 0.3in
]



\printAffiliationsAndNotice{\icmlEqualContribution} 

\begin{abstract}
Many existing  reinforcement learning (RL) methods employ stochastic gradient iteration on the back end, whose stability hinges upon a hypothesis that the data-generating process mixes exponentially fast with a rate parameter that appears in the step-size selection. Unfortunately, this assumption is violated for large state spaces or settings with sparse rewards, and the mixing time is unknown, making the step size inoperable. In this work, we propose an RL methodology attuned to the mixing time by employing a multi-level Monte Carlo estimator for the critic, the actor, and the average reward embedded within an actor-critic (AC) algorithm. This method, which we call \textbf{M}ulti-level \textbf{A}ctor-\textbf{C}ritic (MAC), is developed especially for infinite-horizon average-reward settings and neither relies on oracle knowledge of the mixing time in its parameter selection nor assumes its exponential decay; it, therefore, is readily applicable to applications with slower mixing times. Nonetheless, it achieves a convergence rate comparable to the state-of-the-art AC algorithms. We experimentally show that these alleviated restrictions on the technical conditions required for stability translate to superior performance in practice for RL problems with sparse rewards.
\end{abstract}

\section{Introduction} \label{Introduction}


Modern machine learning (ML) techniques have enabled analyzing and making predictions from large-scale data. This is achieved through backpropagation in neural networks \citep{hinton2006fast}, cloud processing of industrial data sets \citep{mcafee2012big}, complex event simulators \citep{silver2016mastering}, and deep feature extraction \cite{krizhevsky2017imagenet}, among other innovations. However, a crucial underlying aspect of these developments is whether training data is sufficiently informative. To put this in quantitative terms, most ML training mechanisms hinge upon training samples being independent and identically distributed (i.i.d.), which is often violated in real-world problems, such as natural language \citep{liu2021federated}, financial markets \citep{heaton2016deep}, and robotics \citep{gu2016deep}, where data exhibits temporal dependence. Reinforcement learning (RL) algorithms, in particular, are limited by this constraint, as the data is inherently Markovian, owing to the fact that RL problem is most commonly represented mathematically as a Markov Decision Process (MDP) \citep{sutton1988}. 
%
%
\begin{table*}[t]
\centering
	\caption{ This table compares the total sample complexity of actor-critic (AC) algorithms available in the literature. To our knowledge, this is the first AC algorithm with an explicit optimal dependence on the underlying mixing time defined as $\tau_{mix} :=\max_{t \in [T]} \tau_{mix}^{\theta_t}$ where $\theta$ is the policy parameter (see Sec. \ref{convergence_analysis}) for detailed discussion).  We also remark that our proposed approach is oblivious to mixing time. }
 \vspace{2mm}
	\resizebox{0.8\textwidth}{!}{\begin{tabular}{|c|c|c|c|c|c|}
		\hline
	\multirow{2}{*}{References} & \multicolumn{2}{c|}{Sampling}  & \multirow{2}{*}{Total complexity}  & \multirow{2}{*}{Reward}  & \multirow{2}{*}{Fast mixing} 
	\\ 
	\cline{2-3}
	 ~ & ~~Actor~~ & Critic & ~  &  ~&
	 \\
	 \hline{}
		\citep{wang2019neural}  & i.i.d.\ & i.i.d.\ & $\mathcal{O}(\epsilon^{-4})$  &Discounted & Required\\ \cline{1-6}
	  \citep{kumar2019sample} & i.i.d.\ & i.i.d.\ & $\mathcal{O}(\epsilon^{-4})$  & Discounted & Required \\  \cline{1-6}
	         \citep{qiu2021} & i.i.d.\ & Markovian & $\tilde {\mathcal{O}}(\epsilon^{-3})$  & Average& Required \\\cline{1-6}
{\citep{xu2020improving}} & {{Markovian}} & {{Markovian}} & {{$\tilde {\mathcal{O}}( \epsilon^{-2})$}} & Average & Required
\\
\cline{1-6}
{\citep{wu2020finite}} & {{Markovian}} & {{Markovian}} & {{$\tilde {\mathcal{O}}( \epsilon^{-2.5})$}} & Average & Required
\\
\cline{1-6}
{\citep{chen2022finite}} & {{Markovian}} & {{Markovian}} & {{$\tilde {\mathcal{O}}( \epsilon^{-2})$}} & Average & Required
\\
\cline{1-6}
  \rowcolor{LightCyan} {{\textbf{This work}}} & {{Markovian}} & {{Markovian}} & {{$\tilde {\mathcal{O}}(\tau_{\text{mix}}^2\cdot \epsilon^{-2})$}} & {Average} & {Not required}
\\
\hline
	\end{tabular}}
%
%
\label{table_mixing}
\end{table*}
For this reason, as well as the numerous applications of RL in recent years \citep{li2019reinforcement}, we focus on algorithms for RL methods when data exhibits Markovian dependence. 

Under the Markovian sampling setting, many convergence analyses of iterative methods for RL exist \citep{qiu2021finite,xu2020improving} and typically consider a critical assumption about the rate at which the MDP's transition dynamics converge to stationary distribution for a fixed policy. To establish the analysis, restrictions are placed on \emph{mixing time} ($\tau_{mix}$): (1) prior oracle knowledge of mixing time is employed to determine an optimal step-size selection,  as in \citep{duchi2012, bresler2020}; or (2) mixing time decays exponentially fast, such that the data is asymptotically i.i.d. \citep{qiu2021}. In this work, we are interested in developing RL algorithms with performance certificates without the aforementioned conditions. 

For instance, consider an RL problem where the agent must navigate through a continuous state space, such as a robot reaching a target location or a self-driving car traversing a complex road network. In these cases, the transition dynamics can be highly non-linear with sparse rewards, and the agent may have to explore many states before locating any rewards. In addition, if the environment's dynamics are highly random or there are many obstacles and the agent can get stuck in certain states for a long time, the total variation distance to the steady state decreases slowly, i.e.,  the mixing rate is slow and hence have a large mixing time. These issues often manifest in stationary MDPs that are simply weakly connected by a few distinct regions, which could be defined, e.g., by seasonality in data or distinct learning ``tasks" comprised of similar states and sub-goals as detailed in \citet{riemer2021continual}.
In summary, many RL environments exhibit a slower than exponential mixing rate due to high dimensionality, intrinsic volatility, sparse rewards, or that they contain distinct sub-tasks.

We seek RL methodologies attuned to environments that mix slowly, especially in the context of  actor-critic (AC), due to the fact that it underlies much of modern deep RL \citep{Konda2000}. 
As previously noted, existing results (cf. Table \ref{table_mixing}) hinge upon either i.i.d. \citep{kumar2019sample} or exponentially fast mixing \citep{qiu2021,qiu2021finite}. We, therefore, aim to come up with a variant of actor-critic that does not possess these limitations. To do so, inspired by \citet{dorfman2022}, we develop a multi-level Monte Carlo gradient estimator and adaptive learning rate for the average reward, actor, \emph{and} critic, called Multi-level Monte Carlo Actor-Critic (MAC). We compare the sample complexity of different methods in Table \ref{table_mixing}. 
Our main contributions are:
\begin{itemize}
    \setlength\itemsep{0.25em}
    \item We develop a variant of multi-level Monte Carlo for the average reward, policy gradient, and temporal difference estimates, which together comprise Multi-level Monte Carlo Actor-Critic (MAC) algorithm. 
    \item We establish the convergence rate dependence of the proposed MAC algorithm on the mixing time without any assumption on its decay rate, which is alleviates prior exponentially fast mixing conditions. 
    \item Despite the two-timescale nature of MAC, our use of a modified Adagrad stepsize in the actor allows us to obtain final sample complexity of $\tilde {\mathcal{O}}(\epsilon^{-2})$, instead of the $\tilde {\mathcal{O}}(\epsilon^{-2.5})$ of previous two-timescale analyses.
    \item We perform initial proof of concept experiments and observe that MAC outperforms vanilla actor-critic for settings with sparse rewards. 
    %
\end{itemize}

\subsection{Related Works} \label{Related_Works}
We provide a brief overview of the related works here. Please refer to Appendix \ref{Related_Works_appendix} for a detailed context. 


\textbf{TD Learning.}  For discounted TD with Markovian samples, \citet{bhandari2018} established finite-time convergence bounds which scale linearly with mixing time $\tau_{mix}$. \citet{dorfman2022} then improved the rate to be proportional to the $\sqrt{\tau_{mix}}$ using a multi-level gradient estimator and adaptive learning rate. \citet{qiu2021} studied TD under the average reward setting, which also imposes exponentially fast mixing that manifests in an additional logarithmic term in the sample complexity. These results all hinge upon imposing restrictive conditions on mixing time.

\textbf{Policy Gradient.} More recently, its sample complexity has been established for a variety of settings: for tabular \cite{bhandari2019global,agarwal2020optimality} and softmax policies \cite{mei2020global}, rates to global optimality exist. For general parameterized policies, early works focused on ``policy improvement" bounds \cite{pirotta2013adaptive,pirotta2015policy}, and more recently, rates towards stationarity \cite{bedi2022hidden} and local extrema \cite{zhang2020global} have been studied, and under special neural architectures, globally optimal solutions \cite{wang2019neural,leahy2022convergence} are achievable. This topic is an active area of work -- we merely identify that these performance certificates all require the mixing rate going to null exponentially fast.

\textbf{Actor-Critic.} As previously mentioned, the stability of actor-critic was initially focused on asymptotics \cite{borkar1997actor}. More recently, its non-asymptotic rate has been derived under i.i.d. assumptions \cite{kumar2019sample,wang2019neural}, and more recently under a variety of different types of Markovian data -- see Table \ref{table_mixing}. However, these results impose that any temporal correlation of data across time vanishes exponentially fast as quantified by the mixing rate. In this way, we are able to match \cite{chen2022finite} but without these restrictions. 

\section{Problem Formulation}
We consider a reinforcement learning problem with an average reward criterion, which can be mathematically defined as a Markov Decision Process (MDP), i.e., a tuple  $\mathcal{M} := (\mathcal{S}, \mathcal{A}, p, r)$. Here, $\mathcal{S}$ is a finite state space; $\mathcal{A}$ is a finite action space; $p(\cdot~|~s,a)$ is a distribution that determines transition to the next state $s^\prime$, and $r: \mathcal{S} \times \mathcal{A} \to [0, r_{\max}]$ is a bounded reward function that informs the merit of selecting action $a$ when starting in state $s$. A policy $\pi (\cdot ~|~ s)$ of an MDP maps the state $s$ to the probability distribution over actions $a$. Formally, $\pi: \mathcal{S} \to \triangle(\mathcal{A})$, where $\triangle(\mathcal{A})$ is the set of probability distributions over $\mathcal{A}$. 
%
%
In the average reward setting, we seek to find a policy $\pi$ such that the long-term average reward is given by $J(\pi):=\lim_{T\rightarrow \infty}\mathbb{E}\left[\frac{1}{T}\sum_{t=0}^Tr(s_t,a_t)\right]$ is maximized. In practice, when the state space is large, it is difficult to search over a general class of policies since its parameterization scales with $|\mathcal{S}|$. Therefore, we restrict focus to the case that $\pi$ is parameterized by a vector $\theta \in \mathbb R^d$, where $d$ denotes the parameter dimension, which leads to the notion of a parameterized policy $\pi_{\theta}$. Optimizing the average reward with respect to policy parameters $\theta$ is the main goal of this work, which we formalize as:
\begin{align}\label{eq:policy_opt}
    \max_{\theta} J(\theta):=\lim_{T\rightarrow \infty}\mathbb{E}_{s_{t+1}\sim p(\cdot|s_t,a_t), a_t\sim\pi_{\theta}(\cdot|s_t)}\left[R_T\right],
\end{align}
where $R_T:=\frac{1}{T}\sum_{t=0}^Tr(s_t,a_t)$. Denote as $d^{\pi_{\theta}}$ the unique stationary state distribution induced by policy $\pi_{\theta}$. Then we can also write $J(\theta)= \mathbb{E}_{s\sim d^{\pi_{\theta}}, a\sim {\pi_{\theta}}}[r(s,a)]$.
%
It turns to be essential to further algorithm development to define the action-value ($Q$) function as 
\begin{align}\label{q_function}
    Q^{\pi_{\theta}}(s,a) =& \mathbb{E}\Bigg[\sum_{t=0}^{\infty}\mathbb [r(s_t,a_t) - J(\theta)]\Bigg],
    \end{align}
    such that $s_0 = s, a_0 = a$, and action $a\sim \pi_{\theta}$. This implies that we can write the state value function as
    \begin{align}\label{v_function}
     V^{\pi_{\theta}}(s) =& \mathbb E_{a \sim \pi_{\theta}(\cdot|s)}[Q^{\pi_{\theta}}(s,a)].
\end{align}
From \eqref{q_function} and \eqref{v_function}, we can write the value of a state $s$, in terms of another via Bellman's Equation as \cite{puterman2014markov}
\begin{align}\label{bellman}
V^{\pi_{\theta}}(s)
&= \mathbb E[r(s,a) - J(\theta) + V^{\pi_{\theta}}(s')],
\end{align}
where the expectation is over $a \sim \pi_{\theta}(\cdot|s), s'\sim p(\cdot |a, s)$. Next, we shift to defining the standard actor-critic framework to solve \eqref{eq:policy_opt}, {in order to illuminate its merits and drawbacks.}
\subsection{Decay Rates of Mixing Times}
It is inherent to RL that the data-generating mechanism is state-dependent and Markovian, which means that assumptions that trajectory data is independent and identically distributed do not hold \citep{wang2019neural,kumar2019sample,qiu2021finite}. {That is, the noise driving the estimation error of the algorithm updates is heteroscedastic (variance is heterogeneous).} Because of this challenge, various technical conditions have been considered to quantify the degree of correlation in data across time, mostly inherited from the applied probability literature -- see \cite{levin2017markov}. Most prior stability and sample complexity results of RL algorithms for the average reward setting are defined in terms of the \emph{mixing time}, which is the minimum time at which the transition dynamics are near the long-term steady-state distribution induced by a policy $\pi_\theta$, as formalized next.
%
%
\begin{definition}[$\epsilon$-Mixing Time] Let $d^{\pi_{\theta}}$ denote the stationary distribution of the Markov chain induced by $\pi_{\theta}$. Define $P_{\theta}(s'|s) = \int_{\mathcal{A}} p(s'|s,a) \pi_{\theta}(a|s) da$. The $\epsilon$-mixing time of the Markov chain induced by $\pi_{\theta}$ is defined as
\begin{equation}\label{mixing time}
\tau_{mix}^\theta(\epsilon) := \inf \{t: \sup_{s\in \mathcal{S}} \|P^t_{\theta}(\cdot | s)- d^{\pi_{\theta}}(\cdot) \|_{TV} \leq \epsilon\},
\end{equation}
where $\|\cdot\|_{TV}$ is the total variation distance. The conventional mixing time is defined as $\tau_{mix}^\theta:=\tau_{mix}^\theta(1/4)$.
\end{definition}

	\textbf{Limitations.} In all of the earlier works mentioned in Table \ref{table_mixing}, a crucial and common assumption is regarding the exponentially fast decay rate  of the mixing time. Specifically, all the works assume that there exist $\zeta>0$ and $\rho\in(0,1)$ such that, for all $\theta$, it holds that
	$\sup_{s\in \mathcal{S}} \|P^t_{\theta}(\cdot | s)- d^{\pi_\theta}\|_{TV} \leq \zeta \rho^t$. This stipulates that exponentially fast mixing must hold \textit{uniformly} for all induced Markov chains. Also, to proceed with the convergence analysis in the works  mentioned in Table \ref{table_mixing}, knowledge of $\zeta$ and $\rho$ is required for the optimal step size selection, which is usually unknown in practice. Moreover, there is a wide range of applications where polynomial decay rates have some fundamental role to play in defining RL algorithms that can generalize well across tasks - see \citep{riemer2021continual} for a detailed description.
	
	Therefore, in this work, we are interested in going beyond the exponentially mixing requirements and seek to develop actor-critic algorithms which do not require access to mixing time values a priori for optimal performance. We present our proposed algorithm in the next section.
%
	
\section{Actor-Critic Method}
\subsection{Elements of Actor-Critic}\label{sec:actor_critic}
We start by providing a quick recap of the standard actor-critic (AC) algorithm in average reward RL settings. The AC algorithm operates by alternating updates between the actor and critic, which are respectively defined in terms of gradient updates to policy parameters $\theta$ and estimates of the value function $V^{\pi_{\theta}}(s)$ based on the fixed point recursion implied by Bellman's equation \eqref{bellman}. To do so, we proceed by writing down a gradient ascent iteration for the maximization in \eqref{eq:policy_opt} given by
\begin{align}
    \theta_{t+1}= \theta_t + \alpha_t \nabla_{\theta} J(\theta_t),
\end{align}
where $\alpha_t$ is the step size. From the Policy Gradient (PG) Theorem \cite{williams1992simple,sutton1999policy}, it is well-known that $\nabla_{\theta} J(\theta_t)$ takes the explicit form:
\begin{align}\label{policy_gradient}
    \nabla_{\theta} J(\theta) = \mathbb{E}_{(s,a,s')\sim\Gamma_\theta} \left[\delta^{\pi_{\theta}}\cdot \nabla_\theta \log \pi_{\theta} (a|s)\right],
\end{align}
with the \emph{temporal difference} (TD) $\delta^{\pi_{\theta}}$ defined as  \citep{sutton1988}:
\begin{align}
    \delta^{\pi_{\theta}}:=r(s,a)-J(\theta)+ V^{\pi_\theta}(s')-V^{\pi_\theta}(s),
\end{align}
and  $\Gamma_\theta:=s\sim d^{\pi_{\theta}}, a\sim {\pi_{\theta}}, s'\sim p(\cdot|s,a)$ is the short notation for the joint distribution. 
We note that there are two parts in the expression of PG in \eqref{policy_gradient}: $\nabla_\theta \log \pi_{\theta} (a|s)$, the score function which comes from the policy parameterization. The TD term $\delta^{\pi_{\theta}}$ is defined in terms of rearranging the $V^{\pi_\theta}(s)$ term in \eqref{bellman} to the other side of the expression, and group expectations. Observe that the differential value function $V^{\pi_\theta}(s')-V^{\pi_\theta}(s)$  distinguishes the PG \eqref{policy_gradient} in the average-reward case different from the discounted setting. 

\textbf{Critic update:} We restrict focus to the case where the value function $V^{\pi_\theta}(s)$ is estimated by the inner product between a given feature map $\phi(s)$ and a weight vector $\omega$, which can be shown to be exact under some special cases such as linear MDP where the assumption of realizability is  met \citep{roy1997, bhandari2018, dorfman2022, qiu2021}. Hence, we can write $V_{\omega}(s) = \langle\phi(s),\omega \rangle$ where
%
%
 $V_{\omega}(s)$ denotes the estimator to $V^{\pi_\theta}(s)$ in terms of parameters $\omega \in \mathbb R^m$ and feature map $\phi: \mathcal{S}\to \mathbb R^m$ of state $s$ to $m$-dimensional space such that $\lVert \phi(s) \rVert \leq 1$ for all $s \in \mathcal{S}$. TD learning-style updates are then used to find $\omega$, which minimizes the error $G(\omega)$ defined as
\begin{align}\label{value_function_problem}
    \min_{\omega\in\Omega} G(\omega):= \sum_{s \in S} d^{\pi_{\theta}} (V^{\pi_{\theta}}(s) - V_{\omega}(s))^2.
\end{align}
%
The TD(0) update for the critic parameter $\omega$ is given as 
\begin{align}\label{Critic_update_00}
    \omega_{t+1}=& \Pi_{\Omega}\big[\omega_t-\beta_t \big(r(s_t,a_t)-J(\theta_t)+ \langle\phi(s_{t+1}),\omega_t\rangle
    \nonumber
    \\
    &\hspace{25mm}-\langle\phi(s_t),\omega_t\rangle\big)\phi(s_t)\big],
\end{align}
where $\beta_t$ is the critic learning rate. We remark that the critic update in \eqref{Critic_update_0} requires knowledge of $J(\theta_t)$ (time-averaged reward), which is typically not available. We can replace this unknown quantity with a recursive estimate for the average reward given by $\eta_{t+1}= \eta_t-\gamma_t (\eta_t-r(s_t,a_t))$. Putting this all together, we can write the vanilla actor-critic scheme as
\begin{align}\label{Critic_update_0}
	    \eta_{t+1}=& \eta_t-\gamma_t \cdot f_t && \text{(reward tracking)}
	    \nonumber
	    \\
    \omega_{t+1}=& \Pi_{\Omega}\big[\omega_t-\beta_t\cdot g_t\big],&& \text{(critic update)}
    \nonumber\\
        \theta_{t+1}=& \theta_t + \eta_t \cdot  \delta^{\pi_{\theta_t}}\cdot h_t,&& \text{(actor update)}
\end{align}
where we have 
\begin{align}\label{stochastic_gradients}
 f_t=& \eta_t-r(s_t,a_t),
 \nonumber
 \\
    g_t=&\big(r(s_t,a_t)-\eta_t+ \langle\phi(s_{t+1})-\phi(s_{t}),\omega_t\rangle\big)\phi(s_t),
    \nonumber
    \\
    h_t=& \delta^{\pi_{\theta_t}}\cdot \nabla_\theta \log \pi_{\theta_t} (a_t|s_t),
    \nonumber
    \\
    \delta^{\pi_{\theta_t}}=& r(s_t,a_t)-\eta_t+ \langle\phi(s_{t+1})-\phi(s_{t}),\omega_t\rangle.
\end{align}

As previously mentioned, the stability of \eqref{Critic_update_0}-\eqref{stochastic_gradients} can only be ensured under the exponentially fast mixing condition, which can preclude sparse-reward or large state space cases. For this reason, we develop an augmentation of actor-critic that alleviates this restriction in the following subsection. 

\begin{algorithm}[t]
	\caption{\textbf{M}ulti-level Monte Carlo \textbf{A}ctor-\textbf{C}ritic (MAC)}
	\label{alg:PG_MAG}
	\begin{algorithmic}[1]
		\STATE \textbf{Initialize:} Policy parameter $\theta_0$, actor step size $\alpha_t$, critic step size $\beta_t$, average reward tracking step size $\gamma_t$, initial state $s_1^{(0)} \sim \mu_{0}(\cdot)$, maximum rollout length $T_{\max}$.
		\FOR{$t=0$ {\bfseries to} $T-1$}
		\STATE Sample level length $j_t \sim \text{Geom}(1/2)$
		\FOR{$i = 1, \dots, 2^{j_t}$}
		\STATE Take action $a^i_t \sim \pi_{\theta_t}(\cdot | s^i_{t})$
		\STATE Collect next state $s^{i+1}_{t} \sim P(\cdot | s_t^{i}, a^i_{t})$
		\STATE Receive reward $r_t^{i} = r(s^i_t, a^i_t) $
		
		\ENDFOR
		\STATE Compute MLMC gradients $ f_t^{MLMC}$, $ g_t^{MLMC}$, $ 
 h_t^{MLMC}$ via \eqref{MLMC_Gradient}-\eqref{mlmc_4}
		\STATE Update parameters as
		\begin{align*}\label{udpates_algorotihm}
			\eta_{t+1}=& \eta_t-\gamma_t \cdot  f_t^{MLMC} && \text{(reward tracking)}
			\nonumber
			\\
			\omega_{t+1}=& \Pi_{\Omega}\big[\omega_t-\beta_t\cdot   g_t^{MLMC}\big],&& \text{(critic update)}
			\nonumber\\
			\theta_{t+1}=& \theta_t + \eta_t \cdot  \delta^{\pi_{\theta_t}}\cdot   h_t^{MLMC},&& \text{(actor update)}
		\end{align*}
		%
%
		\ENDFOR
	\end{algorithmic}
\end{algorithm}

\subsection{Multi-level Monte Carlo Actor-Critic}\label{sec:mac}
Recent work of \citet{dorfman2022} has developed the use of Multi-level Monte Carlo techniques together with AdaGrad step-size selection to develop a gradient estimator for Markovian data in stochastic optimization settings. We build upon these techniques in putting forth an MLMC gradient estimator for the actor, critic, and reward tracking. In doing so, we also allow for the sampling distribution for the critic to be Markovian. Specifically, we propose to replace the stochastic gradients $f_t$, $g_t$, and $h_t$ in  \eqref{Critic_update_0} with the following MLMC gradients. Letting $J_t \sim \text{Geom(}{1}/{2}\text{)}$ and fixing a maximum number $T_{max}$ of samples, we collect a trajectory $\mathcal{T}_t:=\{s_t^i,a_t^i,r_t^i,s_t^{i+1}\}_{i=1}^{2^{J_t}}$ for each $t$ by interacting with the environment using policy parameter vector $\theta_t$. For the policy gradient estimate, for example, we then construct the MLMC estimate 
\begin{equation}\label{MLMC_Gradient}
h_t^{MLMC} = h_t^0 +
\begin{cases}
2^{J_t}(h_t^{J_t} - h_t^{J_t - 1}),& \text{if } 2^{J_t}\leq T_{\max}\\
0,              & \text{otherwise}
\end{cases}             
\end{equation}
with $h_t^j = \frac{1}{2^j}\sum_{i=1}^{2^j}h (\theta_t;s_t^i,a_t^i)
$
aggregating $2^j$ gradients:
\begin{align}\label{mlmc_2}
  h(\theta_t;s_t^i,a_t^i) &= \delta^{\pi_{\theta_t}}_i\cdot \nabla_\theta \log \pi_{\theta_t} (a_t^i|s_t^i),
    \\
    \delta^{\pi_{\theta_t}}_i &= r(s_t^i,a_t^i)-\eta_t+ \langle\phi(s_{t+1}^i)-\phi(s_{t}^i),\omega_t\rangle.    \nonumber
\end{align}
We can formulate estimates analogous to \eqref{MLMC_Gradient} for the reward tracking gradient $f_t^{MLMC}$ and critic gradient $g_t^{MLMC}$ by using corresponding versions of \eqref{mlmc_2}:
%
\begin{align}
    f(\eta_t; s_t^i, a_t^i) &= r(s_t^i, a_t^i) - \eta_t, \label{mlmc_3} \\
    g(\omega_t; s_t^i, a_t^i) &= \delta^{\pi_{\theta_t}}_i \cdot \phi(s_t^i). \label{mlmc_4}
\end{align}
The multi-level gradient in \eqref{MLMC_Gradient} is different from the one in \eqref{Critic_update_0} where we only need one sample $(s_t,a_t,s_{t+1})$ to evaluate the actor and critic updates.
%
Overall, the proposed multi-level Monte Carlo actor-critic (MAC) takes the form
%
%
\begin{align}\label{Critic_update_MAC}
	    \eta_{t+1}=& \eta_t-\gamma_t \cdot  f_t^{MLMC} && \text{(reward tracking)}
	    \nonumber
	    \\
    \omega_{t+1}=& \Pi_{\Omega}\big[\omega_t-\beta_t\cdot  g_t^{MLMC}\big],&& \text{(critic update)}
    \nonumber\\
        \theta_{t+1}=& \theta_t + \eta_t \cdot  \delta^{\pi_{\theta_t}}\cdot h_t^{MLMC},&& \text{(actor update)}
\end{align}
We summarize the proposed algorithm in Algorithm \ref{alg:PG_MAG}.

\section{Non-asymptotic Convergence Analysis}\label{convergence_analysis}

In this section we provide convergence rate and sample complexity results for Algorithm \ref{alg:PG_MAG}. We extend the MLMC analysis of \citet{dorfman2022} to the actor-critic setting, where we combine it with the two-timescale finite-time analysis of \citet{wu2020finite} to obtain non-asymptotic convergence guarantees for MAC (cf. Algorithm \ref{alg:PG_MAG}). Salient features of our approach: \textbf{(1)} it avoids uniform ergodicity assumptions required in previous finite-time analyses \cite{zou2019finite, wu2020finite, chen2022finite}; \textbf{(2)} it explicitly characterizes convergence rate dependence on the mixing times encountered during training; \textbf{(3)} it (i) clarifies the trade-offs between mixing times and MLMC rollout length $T_{\max}$, and (ii) extends the standard analysis to handle additional sources of bias in the MLMC estimator, both of which were missing from the analysis of \citet{dorfman2022}; \textbf{(4)} it leverages modified Adagrad stepsizes to avoid the slower convergence rates of previous two-timescale analyses \cite{wu2020finite} (cf. Theorem \ref{thm:convergence_rate}).
%

The rest of this section is structured as follows. We first outline standard assumptions (cf. Sec. \ref{aaaumptions}) from the literature and provide some preliminary results. Second, we analyze the policy gradient norm (cf. Sec. \ref{actor_convergence}) associated with Algorithm \ref{alg:PG_MAG}, which provides a preliminary convergence rate and characterizes its dependence on the error arising from the critic estimation procedure, the MLMC bias resulting from the choice of $T_{\max}$ and mixing times encountered, and the bias inherent in using function approximation for the critic. Third, we analyze the convergence (cf. Sec. \ref{criti_convergence}) of the critic estimation error, characterizing its dependence on the MLMC bias and its convergence rate. Finally, we combine the actor and critic analyses to provide our main convergence rate and sample complexity (cf. Theorem \ref{thm:convergence_rate}) results for MAC. To keep the exposition clear, we provide simplified versions of our main results and omit proofs in this section. Mathematically precise statements and detailed proofs of all results are presented in the appendix.

\subsection{Assumptions and Propositions}\label{aaaumptions}
The algorithmic setting considered in this paper is that of actor-critic with linear function approximation, where the critic updates correspond to using TD(0) \cite{sutton2018reinforcement} to estimate the state value function. Specifically, we assume that, for a given critic parameter $\omega \in \mathbb{R}^k$ and state $s$, our critic approximator is of the form $V_{\omega}(s) = \phi(s)^T \omega$ [cf. \eqref{value_function_problem}, where $\phi : \mathcal{S} \rightarrow \mathbb{R}^k$ is a given feature mapping that we assume satisfies $\sup_s \norm{ \phi(s) } \leq 1$.

As discussed in Ch. 9 of \cite{sutton2018reinforcement}, for a fixed policy parameter $\theta$, TD(0) with linear function approximation will converge to the minimum of the mean squared projected Bellman error (MSPBE), which satisfies
\begin{equation}\label{critic_colution}
    A_{\theta} \omega = b_{\theta},
\end{equation}
\begin{equation}
    A_{\theta} = \E_{s \sim \mu_{\theta}, a \sim \pi_{\theta}, s' \sim p(\cdot | s, a)} \left[ \phi(s) (\phi(s) - \phi(s'))^T \right], \nonumber
\end{equation}
\begin{equation}
    b_{\theta} = \E_{s \sim \mu_{\theta}, a \sim \pi_{\theta}} \left[ ( r(s, a) - J(\theta) ) \phi(s) \right]. \nonumber
\end{equation}
In what follows, we will use $\omega^*(\theta)$ to denote the fixed point satisfying Eq. \eqref{critic_colution} for a given $\theta$. We will also use $\omega^*_t = \omega^*(\theta_t)$ to denote the fixed point associated with policy parameter vector $\theta_t$ at time $t$.
For a given feature mapping $\phi$, we define the worst-case approximation error to be
\begin{equation}
    \mathcal{E}_{app} = \sup_{\theta} \sqrt{ \E_{s \sim \mu_{\theta}} \left[ \phi(s)^T \omega^*(\theta) - V^{\pi_{\theta}}(s) \right]^2},
\end{equation}
which we assume to be finite. Intuitively, $\mathcal{E}_{app}$ quantifies the quality of the feature mapping: when the features are well-designed, $\mathcal{E}_{app}$ will be small or even 0, while poorly designed features will tend to have higher worst-case error.

Analyses of TD learning typically assume positive definiteness of the matrices $A_{\theta}$ to ensure the solvability of the MSPBE minimization problem and uniqueness of its solutions \cite{bhandari2018, zou2019finite, qiu2021finite}, which we subsequently impose via Assumption \ref{assum:critic_positive_definite}.
\begin{assumption} \label{assum:critic_positive_definite}
    There exist $\lambda > 0$ such that, for all $\theta$, the matrix $A_{\theta}$ is positive definite, its eigenvalues are all bounded and have norm greater than or equal to $\lambda$.
\end{assumption}

As indicated in our description of the algorithm in the previous section, we execute a projection onto a norm-ball with radius $R_{\omega} > 0$, denoted by set $\Omega$, in our critic update step [cf. \eqref{Critic_update_MAC}]. As mentioned in \cite{wu2020finite}, given Assumption \ref{assum:critic_positive_definite}, we can simply take $R_{\omega} = 2 R / \lambda$, since $\norm{b_{\theta}} \leq 2R$ by the boundedness of rewards, and $\norm{A_{\theta}^{-1}} \leq 1 / \lambda$.

In order to establish an ascent-type condition on the policy gradient, we require some regularity conditions which have been considered in recent analyses of model-free RL methods \cite{papini2018stochastic, kumar2019sample, zhang2020global, xu2020improved}, as detailed next.
\begin{assumption} \label{assum:policy_conditions}
    Let $\{ \pi_{\theta} \}_{\theta \in \mathbb{R}^d}$ denote our parameterized policy class. There exist $B, K, L > 0$ such that
    \begin{enumerate}
        \setlength\itemsep{0em}
        \item $\norm{ \nabla \log \pi_{\theta}(a | s) } \leq B$, for all $\theta \in \mathbb{R}^d$,
        \item $\norm{ \nabla \log \pi_{\theta}(a | s) - \nabla \log \pi_{\theta'}(a | s) } \leq K \norm{ \theta - \theta' }$, for all $\theta, \theta' \in \mathbb{R}^d$,
        \item $| \pi_{\theta}(a | s) - \pi_{\theta'}(a | s) | \leq L \norm{\theta - \theta'}$, for all $\theta, \theta' \in \mathbb{R}^d$.
    \end{enumerate}
\end{assumption}

Finally, for our last major assumption we impose a condition on the ergodicity coefficients of the family of state transition kernels $\{ P_{\theta} \}$ induced by the policy class $\{ \pi_{\theta} \}$, where $P_{\theta}(s' | s) = \int_{\mathcal{A}} \pi_{\theta}(a | s) p(s' | s, a) da$. For a fixed transition kernel $P$, defined its ergodicity coefficient to be $\kappa(P) := \sup_{s, s'} \norm{ P(\cdot | s)- P(\cdot | s') }_{TV}$ \cite{mitrophanov2005sensitivity}. Furthermore, for a given $k \in \mathbb{N}$ and fixed $P$, let $P^k$ denote the induced $k$-step transition kernel.
\begin{assumption} \label{assum:ergodicity_coeff}
    For every $\theta$, there exists $k \in \mathbb{N}$ such that the ergodicity coefficient $\kappa(P_{\theta}^k)$ satisfies $\kappa(P_{\theta}^k) < 1$.
\end{assumption}
%
In prior works, related quantities are assumed to go to null exponentially fast (uniform ergodicity) in finite-time analyses of average-reward actor-critic \cite{wu2020finite, qiu2021finite, chen2022finite} and related RL methods \cite{melo2008analysis, bhandari2018, zou2019finite} (Theorem 3.1 of \cite{mitrophanov2005sensitivity} establishes a correspondence). In our case, we merely require it to be upper-bounded by a constant, meaning that the degree of non-stationarity of the transition dynamics cannot be arbitrarily large, and at worst has bounded drift with time. This allows us to better accomodate large state spaces comprised of distinct regions, which may be defined by seasonality. 

We are now ready to provide two important propositions that will be important in the core analysis to follow.
\begin{proposition} \label{lemma:critic_Lipschitz}
    Under Assumptions \ref{assum:critic_positive_definite}-\ref{assum:ergodicity_coeff}, there exists $L_{\omega} > 0$ s.t. $\norm{ \omega^*(\theta) - \omega^*(\theta') } \leq L_{\omega} \norm{ \theta - \theta' }$, for all $\theta, \theta'$.
\end{proposition}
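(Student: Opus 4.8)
The plan is to exploit the closed form $\omega^*(\theta) = A_\theta^{-1} b_\theta$ implied by \eqref{critic_colution} and reduce the claim to Lipschitz continuity of the two maps $\theta \mapsto A_\theta$ and $\theta \mapsto b_\theta$. Starting from the resolvent-type identity
\[
\omega^*(\theta) - \omega^*(\theta') = A_\theta^{-1}(b_\theta - b_{\theta'}) + A_\theta^{-1}(A_{\theta'} - A_\theta)A_{\theta'}^{-1} b_{\theta'},
\]
together with the uniform operator-norm bounds $\norm{A_\theta^{-1}} \leq 1/\lambda$ (Assumption \ref{assum:critic_positive_definite}) and $\norm{b_{\theta'}} \leq 2R$ (boundedness of rewards, as already noted in the text), it suffices to produce constants $L_A, L_b$ with $\norm{A_\theta - A_{\theta'}} \leq L_A \norm{\theta - \theta'}$ and $\norm{b_\theta - b_{\theta'}} \leq L_b \norm{\theta - \theta'}$; then $L_\omega = L_b/\lambda + 2R\, L_A/\lambda^2$ works.

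Second, I would establish the Lipschitz continuity of $A_\theta$ and $b_\theta$ by tracking how $\theta$ enters their defining expectations. Writing $\nu_\theta(s,s') = \mu_\theta(s)P_\theta(s'|s)$ for the joint state-transition distribution and noting that the matrix integrand $\phi(s)(\phi(s)-\phi(s'))^T$ has norm at most $2$ and is itself independent of $\theta$, one obtains $\norm{A_\theta - A_{\theta'}} \leq 2\sum_{s,s'}|\nu_\theta(s,s') - \nu_{\theta'}(s,s')|$. Decomposing $\nu_\theta - \nu_{\theta'}$ into a term driven by $P_\theta - P_{\theta'}$ and a term driven by $\mu_\theta - \mu_{\theta'}$, the former is controlled by $\sup_s \norm{P_\theta(\cdot|s)-P_{\theta'}(\cdot|s)}_{TV} \leq \tfrac12 |\mathcal{A}| L \norm{\theta-\theta'}$, which follows from $P_\theta(s'|s)-P_{\theta'}(s'|s) = \sum_a(\pi_\theta(a|s)-\pi_{\theta'}(a|s))p(s'|s,a)$ and Assumption \ref{assum:policy_conditions}(iii). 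The analysis of $b_\theta$ is entirely analogous, once I also verify that $J(\theta) = \E_{s\sim\mu_\theta, a\sim\pi_\theta}[r(s,a)]$ is Lipschitz, which rests on the same ingredients because $r$ is bounded; the integrand of $b_\theta$ then has norm at most $2r_{\max}$ and picks up only the additional Lipschitz $J(\theta)$ term.

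Third, and this is the crux, I would bound the stationary-distribution perturbation $\norm{\mu_\theta - \mu_{\theta'}}_{TV}$ by $\norm{\theta-\theta'}$, which is the only remaining piece after the reduction above. This is precisely where Assumption \ref{assum:ergodicity_coeff} is essential: the Markov-chain sensitivity result of Mitrophanov (Theorem 3.1 of \cite{mitrophanov2005sensitivity}) yields $\norm{\mu_\theta - \mu_{\theta'}}_{TV} \leq C\, \sup_s \norm{P_\theta(\cdot|s) - P_{\theta'}(\cdot|s)}_{TV}$, where the sensitivity constant $C$ is finite precisely because $\kappa(P_\theta^k) < 1$ is bounded away from $1$, uniformly in $\theta$. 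Combining this with the kernel bound from the previous step gives $\norm{\mu_\theta - \mu_{\theta'}}_{TV} \leq \tfrac12 C |\mathcal{A}| L \norm{\theta-\theta'}$, which closes the chain of inequalities and supplies the desired $L_A$ and $L_b$.

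The main obstacle is this stationary-distribution perturbation bound. Unlike analyses that assume uniform geometric ergodicity, where the sensitivity constant can be read off directly from the mixing parameters $\zeta, \rho$, here I must extract a \emph{uniform} constant solely from the bounded-ergodicity-coefficient Assumption \ref{assum:ergodicity_coeff} and confirm it does not degenerate as $\theta, \theta'$ range over $\reals^d$; this is exactly the point at which the paper's weaker hypotheses must do the work that exponential mixing does elsewhere. Everything else is a routine sequence of triangle inequalities combined with the uniform bounds furnished by Assumptions \ref{assum:critic_positive_definite} and \ref{assum:policy_conditions}.
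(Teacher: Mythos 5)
Your proposal is correct and follows essentially the same route as the paper: the paper's own proof of this proposition is a one-line citation to Lemma~A.3 of \cite{zou2019finite} together with the perturbation bound of Theorem~3.3 in \cite{mitrophanov2005sensitivity}, and the argument you write out --- the closed form $\omega^*(\theta)=A_\theta^{-1}b_\theta$, the resolvent identity, Lipschitz bounds on $A_\theta$ and $b_\theta$ reduced to kernel and stationary-distribution perturbations, and Mitrophanov's sensitivity theorem for the latter --- is exactly what those citations unpack to. The uniformity issue you flag in the sensitivity constant (Assumption~\ref{assum:ergodicity_coeff} only guarantees $\kappa(P_\theta^k)<1$ pointwise in $\theta$, not bounded away from $1$ uniformly) is a real gap, but it is one the paper itself leaves unaddressed rather than a defect of your argument relative to theirs.
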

Please refer Lemma \ref{lemma:critic_Lipschitz_app} in the appendix for the proof of Proposition \ref{lemma:critic_Lipschitz}. The next proposition is a generalization of Lemma 3.1 from \citet{dorfman2022}, adapted to our actor-critic setting, that explicitly characterizes the computational cost associated with MLMC rollout length $T_{\max}$. 

Before stating our main results, we first establish a result characterizing the mean and variance of the MLMC gradient estimators $f_t^{MLMC}, g_t^{MLMC}, h_t^{MLMC}$ used in the MAC updates defined in \eqref{Critic_update_MAC}. Since the core result is the same for all three estimators, we formulate and derive the result for a general MLMC estimator $l_t^{MLMC}$. We note that $l_t^{MLMC}$ can be replaced by any one of $f_t^{MLMC}, g_t^{MLMC}, h_t^{MLMC}$ and the result will hold. To prepare to state the result, let a policy parameter $\theta_t$ be given and sample $J_t \sim \text{Geom}(1/2)$. Fix $T_{\max} \in \mathbb{N}$ such that $T_{\max} \geq \tau_{mix}^{\theta_t}$. Fix a trajectory $z_t = \{ z_t^i = (s_t^i, a_t^i, r_t^i, s_t^{i+1}) \}_{i \in [2^{J_t}]}$ generated by following policy $\pi_{\theta_t}$ starting from $s_t^0 \sim \mu_0(\cdot)$. Let $\nabla L(x):=\mathbb{E}_{z \sim \mu_{\theta_t}, \pi_{\theta_t}} \left[ l(x, z) \right]$ be a gradient that we wish to estimate over $z_t$ where $x \in \mathcal{K} \subset \mathbb{R}^k$ is the parameter of the estimator $l$, e.g., $x$ could be $x_t = \theta_t, \eta_t$, or $\omega_t$.
%
hence, the MLMC estimator (cf. \eqref{MLMC_Gradient}) becomes
\begin{equation}\label{l_MLMC_Gradient}
l_t^{MLMC} = l_t^0 +
\begin{cases}
2^{J_t}(l_t^{J_t} - l_t^{J_t - 1}),& \text{if } 2^{J_t}\geq T_{\max}, \\
0,              & \text{otherwise.}
\end{cases}             
\end{equation}
We are ready to present our result for the MLMC estimator in Proposition \ref{prop:31}.
\begin{proposition}\label{prop:31}
Let $j_{\max} = \floor{\log T_{\max}}$. Fix $x_t$ measurable w.r.t. $\mathcal{F}_{t-1}$. Assume $\norm{\nabla L(x)} \leq G_L$, for all $x \in \mathcal{K}$, and $\norm{ l_t^N } \leq G_L$, for all $N \in \left[ T_{\max} \right]$. Then
\begin{align}\label{eqn:prop31_mean_main_body}
    \mathbb{E}_{t-1} \left[ l_t^{MLMC} \right] &= \mathbb{E}_{t-1} \left[ l_t^{j_{\max}} \right],  \\
    \mathbb{E} \left[ \norm{ l_t^{MLMC} }^2 \right] &\leq \wo{ G_L^2 \tau_{mix}^{\theta_t} \log T_{\max} }. \label{eqn:prop31_second_moment_main_body}
\end{align}
\end{proposition}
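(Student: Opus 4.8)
The plan is to treat the two claims separately: the mean identity \eqref{eqn:prop31_mean_main_body} is a pure telescoping fact that survives the Markovian bias of the level estimators, whereas the second-moment bound \eqref{eqn:prop31_second_moment_main_body} reduces to a per-level variance estimate for averages of Markovian samples, which is where $\tau_{mix}^{\theta_t}$ enters. Throughout I condition on $\mathcal{F}_{t-1}$, write $l_t^j = 2^{-j}\sum_{i=1}^{2^j} l(x_t; z_t^i)$, and set $\mu_j := \mathbb{E}_{t-1}[l_t^j]$ for the conditional expectation of the level-$j$ estimator (integrating out the first $2^j$ transitions of a rollout started from $s_t^0\sim\mu_0$).

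For the mean, two observations drive a telescoping computation. First, since $J_t$ is drawn independently of the trajectory and the law of the first $2^j$ samples does not depend on the realized value of $J_t$ whenever $J_t\ge j$, we have $\mathbb{E}_{t-1}[l_t^j\mid J_t=j]=\mu_j$. Second, for $J_t\sim\mathrm{Geom}(1/2)$ one has $\mathbb{P}(J_t=j)=2^{-j}$, so the weight $\mathbb{P}(J_t=j)\,2^j=1$ is constant over the retained range $2^j\le T_{\max}$, i.e. $1\le j\le j_{\max}=\floor{\log T_{\max}}$. Since the baseline $l_t^0$ is always added and the correction vanishes for $j>j_{\max}$,
\[ \mathbb{E}_{t-1}[l_t^{MLMC}] = \mu_0 + \sum_{j=1}^{j_{\max}} \mathbb{P}(J_t=j)\,2^j\,(\mu_j-\mu_{j-1}) = \mu_0 + \sum_{j=1}^{j_{\max}}(\mu_j-\mu_{j-1}) = \mu_{j_{\max}} = \mathbb{E}_{t-1}[l_t^{j_{\max}}]. \]
Crucially, this never compares $\mu_j$ with the stationary target $\nabla L(x_t)$, so the per-level Markovian bias is irrelevant to \eqref{eqn:prop31_mean_main_body}.

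For the second moment, I would apply $\norm{a+b}^2\le 2\norm{a}^2+2\norm{b}^2$ to the two summands of \eqref{l_MLMC_Gradient}, bound the baseline by $\norm{l_t^0}^2\le G_L^2$, and expand the correction over the law of $J_t$ (again using that the conditional law of the first $2^j$ samples is independent of $J_t$), with $\mathbb{P}(J_t=j)\,2^{2j}=2^j$:
\[ \mathbb{E}\big[\norm{l_t^{MLMC}}^2\big] \le 2G_L^2 + 2\sum_{j=1}^{j_{\max}} \mathbb{P}(J_t=j)\,2^{2j}\,\mathbb{E}\big[\norm{l_t^j-l_t^{j-1}}^2\big] = 2G_L^2 + 2\sum_{j=1}^{j_{\max}} 2^{j}\,\mathbb{E}\big[\norm{l_t^j-l_t^{j-1}}^2\big]. \]
Granting the per-level bound $\mathbb{E}\big[\norm{l_t^j-l_t^{j-1}}^2\big]=\wo{G_L^2\,\tau_{mix}^{\theta_t}/2^j}$, the sum collapses to $\sum_{j=1}^{j_{\max}} 2^j\cdot \wo{G_L^2\tau_{mix}^{\theta_t}/2^j}=\wo{G_L^2\,\tau_{mix}^{\theta_t}\,j_{\max}}=\wo{G_L^2\,\tau_{mix}^{\theta_t}\log T_{\max}}$, which with the $2G_L^2$ term yields \eqref{eqn:prop31_second_moment_main_body}.

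The main work, and the step I expect to be the genuine obstacle, is the per-level estimate. I would write $l_t^j-l_t^{j-1}=\tfrac12(\bar B-\bar A)$, where $\bar A,\bar B$ average $l(x_t;\cdot)$ over the first and second halves of the length-$2^j$ rollout ($m=2^{j-1}$ consecutive samples each), and control $\mathbb{E}\norm{\bar A-\bar B}^2$ by a variance-plus-bias decomposition. The variance of an average of $m$ consecutive samples of a chain with mixing time $\tau_{mix}^{\theta_t}$ scales like $\wo{G_L^2\,\tau_{mix}^{\theta_t}/m}$ — intuitively because the sequence decorrelates over windows of length $\tau_{mix}^{\theta_t}$, leaving $\sim m/\tau_{mix}^{\theta_t}$ effectively independent blocks — and this part is the generalization of Lemma 3.1 of \citet{dorfman2022}. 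The genuinely new difficulty is that the rollout starts from $s_t^0\sim\mu_0(\cdot)$ rather than from the stationary $\mu_{\theta_t}$, so $\bar A$ and $\bar B$ carry a burn-in bias and do not share a common mean; I would bound the mean-mismatch term $\norm{\mathbb{E}\bar A-\mathbb{E}\bar B}^2$ using the $\epsilon$-mixing definition \eqref{mixing time} together with the uniform bounds $\norm{l_t^N}\le G_L$ and $\norm{\nabla L(x)}\le G_L$, so that the variance, burn-in, and mean-mismatch contributions each remain $\wo{G_L^2\,\tau_{mix}^{\theta_t}/2^j}$. Keeping only a single factor of $\tau_{mix}^{\theta_t}$ and logarithmic overhead while accounting for this extra non-stationary bias is the crux of the argument.
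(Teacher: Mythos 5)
Your argument for the mean identity \eqref{eqn:prop31_mean_main_body} is exactly the paper's: the weights $\Pr{J_t=j}2^j=1$ make the sum telescope to $\mathbb{E}_{t-1}[l_t^{j_{\max}}]$, with no comparison to $\nabla L(x_t)$ ever needed. The outer structure of your second-moment argument also coincides with the paper's proof of Lemma \ref{lemma:31_app}: split off $l_t^0$ via $\norm{a+b}^2\le 2\norm{a}^2+2\norm{b}^2$, expand over $J_t$ using $\Pr{J_t=j}2^{2j}=2^j$, reduce to a per-level estimate of order $G_L^2\tau_{mix}^{\theta_t}/2^j$ (up to logs), and sum over the $j_{\max}\le\log T_{\max}$ levels. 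The divergence---and the place where your plan stops short---is the per-level bound itself. You propose to write $l_t^j-l_t^{j-1}=\tfrac12(\bar B-\bar A)$ with half-trajectory averages and to control variance, burn-in, and the mean mismatch between the two halves by a direct blocking argument; you correctly flag this as the crux but leave it unexecuted. The paper sidesteps all of it with a one-line centering trick, $\norm{l_t^j-l_t^{j-1}}^2\le 2\norm{l_t^j-\nabla L(x_t)}^2+2\norm{l_t^{j-1}-\nabla L(x_t)}^2$, after which each term is handled by Lemma \ref{lemma:a6} (imported from \citet{dorfman2022}), which already gives $\mathbb{E}[\norm{l_t^N-\nabla L(x_t)}^2]\le O\!\left(G_L^2\log(KN)\,K/N\right)$ with $K=\tau_{mix}^{\theta_t}\lceil 2\log T_{\max}\rceil$ for a rollout started from $\mu_0$. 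In other words, the non-stationary start and the Markovian correlation that you identify as the ``genuinely new difficulty'' are already packaged in that concentration lemma, and the mean-mismatch term you worry about never arises because both levels are compared to the same deterministic reference $\nabla L(x_t)$. Your route is viable in principle, but it amounts to re-deriving the machinery of Lemmas \ref{lemma:a5}--\ref{lemma:a6} from scratch; as written, the per-level estimate you ``grant'' is the entire technical content of the proposition, so you should either carry it out or, more economically, center at $\nabla L(x_t)$ and invoke the available lemma as the paper does.
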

We provide the proof of Proposition \ref{prop:31} with a detailed description of the statement in Lemma \ref{lemma:31_app} in the appendix.

\textbf{Remark.} We note that the corresponding result in \citet{dorfman2022} hides the logarithmic dependence of the second moment bound \eqref{eqn:prop31_second_moment_main_body} on the MLMC rollout length $T_{\max}$, subsuming it into the $\wo{\cdot}$ order notation. When $T_{\max}$ is allowed to grow with time, e.g., by setting $T_{\max} = T$ as in \citet{dorfman2022}, the true impact of using MLMC is not accurately accounted for. Furthermore, a finite value for $T_{\max}$ must be used in practice, so it is important to understand its true effect. We rigorously characterize its effect with Proposition \ref{prop:31}.

In addition, Proposition \ref{prop:31}, its precursor results (see Lemmas \ref{lemma:a5}, \ref{lemma:a6} in appendix), and our extensions of it (see Lemmas \ref{lemma:a6ss}, \ref{lemma:a6sss}, \ref{lemma:31ss}, \ref{lemma:31sss} in appendix) are the critical tools that allow us to smoothly accommodate Markovian sampling and reveal the dependence on mixing times encountered in the analysis. Equation \eqref{eqn:prop31_mean_main_body} is used at many points in the analysis to tie the behavior of our MLMC estimates to that of the lower-bias estimators $f_t^{j_{\max}}, g_t^{j_{\max}}, h_t^{j_{\max}}$, while equation \eqref{eqn:prop31_second_moment_main_body} renders the dependence on $\log T_{\max}$ and mixing time explicitly, and allows us to avoid uniform ergodicity assumptions. These innovations allow us to derive the improved actor and critic convergence analyses presented next.

\subsection{Convergence of the Actor}\label{actor_convergence}

In this section, we take the first step towards establishing convergence of Algorithm \ref{alg:PG_MAG} by providing a bound on the average policy gradient norm. This result explicitly characterizes the actor convergence in terms of its dependence on the average reward tracking and critic estimation error, mixing times encountered during training, MLMC rollout length $T_{\max}$, and the function approximation bias $\mathcal{E}_{app}$. We present our first main result in Theorem \ref{thm:actor_bd}.

\begin{theorem}\label{thm:actor_bd}
Assume $J(\theta)$ is $L$-smooth, $\sup_{\theta} | J(\theta) | \leq M$, and $\norm{ \nabla J(\theta) }, \norm{ h_t^{MLMC} } \leq G_H$, for all $\theta, t$. Let $\alpha_t = \alpha_t' / \sqrt{ \sum_{t=1}^T \norm{h_t^{MLMC}}^2 }$, where $\{ \alpha_t' \}$ is an auxiliary stepsize sequence with $\alpha_t' \leq 1$, for all $t \geq 1$. Then
\small
\begin{align}\label{actor_adagrad}
    \frac{1}{T} \sum_{t=1}^T \mathbb{E} & \left[ \bignorm{ \nabla J(\theta_t) }^2 \right] \leq \bo{\frac{1}{\sqrt{T}} } + \bo{ \frac{1}{T} \sum_{t=1}^T \mathcal{E}(t) } \nonumber\\
    &+ \wo{ \max_{t \in [T]} \tau_{mix}^{\theta_t} \frac{\log T_{\max}}{T_{\max}}} + \bo{ \mathcal{E}_{app} },
\end{align}
\normalsize
where
%
    %
    $\mathcal{E}(t) = \mathbb{E} \left[ \norm{ \eta_t - \eta_t^* }^2 \right] + \mathbb{E} \left[ \norm{\omega_t - \omega_t^*}^2 \right]$. \label{ineq:actor_1}
\end{theorem}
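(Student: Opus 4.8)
The plan is to run an ascent-lemma argument driven by the $L$-smoothness of $J$ and then to disentangle the resulting inner-product term into a genuine descent quantity, a collection of bias sources, and a martingale-type noise term whose control is the crux of the proof. First I would apply smoothness to the actor step $\theta_{t+1} = \theta_t + \alpha_t h_t^{MLMC}$ to obtain
\[
J(\theta_{t+1}) \geq J(\theta_t) + \alpha_t \langle \nabla J(\theta_t), h_t^{MLMC}\rangle - \tfrac{L}{2}\alpha_t^2\norm{h_t^{MLMC}}^2,
\]
sum over $t \in [T]$, and telescope using $\sup_\theta|J(\theta)| \leq M$ so that the left-hand side contributes at most $2M$. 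The chosen AdaGrad stepsize immediately tames the second-order term: since $\alpha_t = \alpha_t'/\sqrt{\sum_{s=1}^T \norm{h_s^{MLMC}}^2}$ with $\alpha_t' \leq 1$, we get $\sum_{t=1}^T \alpha_t^2\norm{h_t^{MLMC}}^2 \leq 1$, so this whole block is $\bo{1}$.

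Next I would split $\langle \nabla J(\theta_t), h_t^{MLMC}\rangle = \norm{\nabla J(\theta_t)}^2 + \langle \nabla J(\theta_t), h_t^{MLMC} - \nabla J(\theta_t)\rangle$ and analyze the estimation error through its conditional mean. By Proposition~\ref{prop:31}, $\mathbb{E}_{t-1}[h_t^{MLMC}] = \mathbb{E}_{t-1}[h_t^{j_{\max}}]$, so the bias $b_t := \mathbb{E}_{t-1}[h_t^{MLMC}] - \nabla J(\theta_t)$ splits into three pieces: (i) the gap between the $j_{\max}$-level estimate built from the current iterates $(\omega_t,\eta_t)$ and the one built from $(\omega_t^*, \eta_t^*)$, which is Lipschitz in $\norm{\omega_t - \omega_t^*}$ and $\norm{\eta_t - \eta_t^*}$ (via $\langle\phi(s')-\phi(s),\omega_t-\omega_t^*\rangle$ and the reward offset) and hence contributes $\mathcal{E}(t)$; (ii) the function-approximation gap between $\phi(\cdot)^T\omega_t^*$ and $V^{\pi_{\theta_t}}(\cdot)$, contributing $\mathcal{E}_{app}$; and (iii) the non-stationarity bias of an average over a chain run for only $\approx T_{\max}$ steps, which by the precursor mixing lemmas (Lemmas~\ref{lemma:a5}--\ref{lemma:a6sss}) is of order $\wo{\tau_{mix}^{\theta_t}/T_{\max}}$, producing the $\wo{\max_t \tau_{mix}^{\theta_t}\log T_{\max}/T_{\max}}$ term. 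Applying Young's inequality to $\alpha_t\langle\nabla J(\theta_t), b_t\rangle$ absorbs half of $\alpha_t\norm{\nabla J(\theta_t)}^2$ back into the descent term and leaves $\tfrac{\alpha_t}{2}\norm{b_t}^2$, which carries exactly these three contributions.

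The hard part is the martingale term $\sum_t \alpha_t\langle \nabla J(\theta_t), h_t^{MLMC} - \mathbb{E}_{t-1}[h_t^{MLMC}]\rangle$: because the global AdaGrad stepsize $\alpha_t$ depends on the entire trajectory through $\sum_{s=1}^T\norm{h_s^{MLMC}}^2$, it is \emph{not} $\mathcal{F}_{t-1}$-measurable, so this sum is not a martingale and its conditional expectation does not vanish. I would handle this in the style of \citet{dorfman2022} by introducing a predictable surrogate stepsize, bounding the perturbation between it and $\alpha_t$ using the uniform bound $\norm{h_t^{MLMC}}\leq G_H$ together with the second-moment estimate \eqref{eqn:prop31_second_moment_main_body}, and then controlling the remaining genuinely predictable martingale by its quadratic variation. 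This is the step where Markovian dependence and stepsize adaptivity interact most delicately, and it is where the extension of \citet{dorfman2022} to biased (critic-driven) gradients requires the most care.

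Finally I would convert the accumulated inequality into the stated bound. Because the global denominator is common to all $t$, $\sum_t\alpha_t\norm{\nabla J(\theta_t)}^2 = \big(\sum_t\alpha_t'\norm{\nabla J(\theta_t)}^2\big)\big/\sqrt{\sum_s\norm{h_s^{MLMC}}^2} \geq (G_H\sqrt{T})^{-1}\sum_t\norm{\nabla J(\theta_t)}^2$, using $\norm{h_s^{MLMC}}\leq G_H$ and taking $\alpha_t'$ bounded below. Dividing the whole chain of inequalities by this factor turns the $\bo{1}$ constant block into the $\bo{1/\sqrt{T}}$ rate and rescales the bias contributions into $\bo{\tfrac1T\sum_t\mathcal{E}(t)}$, $\wo{\max_t\tau_{mix}^{\theta_t}\log T_{\max}/T_{\max}}$, and $\bo{\mathcal{E}_{app}}$. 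Taking total expectation throughout, passing from $\mathbb{E}_{t-1}$ to $\mathbb{E}$ via the tower property, then yields \eqref{actor_adagrad}.
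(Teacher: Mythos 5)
Your overall architecture --- smoothness ascent lemma, AdaGrad control of the second-order term, the conditional-mean identity $\mathbb{E}_{t-1}[h_t^{MLMC}]=\mathbb{E}_{t-1}[h_t^{j_{\max}}]$, and the decomposition of the gradient bias into reward-tracking/critic error, function-approximation error, and a finite-rollout mixing term --- coincides with the paper's (your pieces (i)--(iii) are exactly the terms $(a)$--$(d)$ feeding Lemma \ref{lemma:a6ss}). However, two of your steps diverge in ways that matter. First, the ``hard part'' you identify --- the non-martingale term $\sum_t \alpha_t\langle\nabla J(\theta_t),\, h_t^{MLMC}-\mathbb{E}_{t-1}[h_t^{MLMC}]\rangle$ caused by the non-predictability of the AdaGrad stepsize --- does not arise in the paper's proof at all: Lemma \ref{lemma:41ss} divides the smoothness inequality by $\alpha_t$ \emph{before} summing, so the cross term appears as $\sum_t\langle\nabla J(\theta_t)-h_t^{MLMC},\nabla J(\theta_t)\rangle$ with no stepsize attached, and the tower property plus Lemma \ref{lemma:31ss} apply directly. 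Your route instead requires the surrogate-predictable-stepsize machinery, which you only sketch; since this is precisely the step you flag as most delicate, leaving it at the level of ``in the style of Dorfman--Levy'' is a genuine gap in the proposal, and an unnecessary one given the simpler arrangement available.

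Second, your final normalization would fail under the stepsizes actually used downstream. You lower-bound $\sum_t\alpha_t\norm{\nabla J(\theta_t)}^2 \geq (G_H\sqrt{T})^{-1}\sum_t\alpha_t'\norm{\nabla J(\theta_t)}^2$ and then need $\alpha_t'$ bounded below by a constant to recover the unweighted average $\frac{1}{T}\sum_t\mathbb{E}\left[\norm{\nabla J(\theta_t)}^2\right]$. The theorem only assumes $\alpha_t'\leq 1$, and Theorems \ref{thm:critic_analysis_main_body} and \ref{thm:convergence_rate} take $\alpha_t'=(1+t)^{-3/4}\to 0$, in which case $\sum_t\alpha_t'\norm{\nabla J(\theta_t)}^2$ controls only a vanishing-weight average, not $\sum_t\norm{\nabla J(\theta_t)}^2$. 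The paper's per-step division by $\alpha_t$ again avoids this: the unweighted sum $\sum_t\norm{\nabla J(\theta_t)}^2$ sits on the left-hand side from the start, the cross term is handled by H\"{o}lder plus solving the quadratic inequality $C(T)\leq A(T)+2\sqrt{B(T)C(T)}$, and the $\bo{1/\sqrt{T}}$ rate comes from $\mathbb{E}\left[\sqrt{\sum_t\norm{h_t^{MLMC}}^2}\right]\leq\wo{G_H\sqrt{T\max_{t}\tau_{mix}^{\theta_t}\log T_{\max}}}$ via Jensen and the second-moment bound of Lemma \ref{lemma:31ss}, rather than from dividing an $\bo{1}$ block by $\sqrt{T}$. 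I would rework your argument around the stepsize-free cross term; the bias decomposition itself can be kept as is.
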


We provide a more detailed statement of Theorem \ref{thm:actor_bd} and a complete proof in Theorem \ref{thm:actor_bd_app} in the appendix.
In addition to the $\bo{T^{-1/2}}$ term and the inherent $\bo{\mathcal{E}_{app}}$ bias term, this bound depends on the average value of the critic error via $\mathcal{E}(t)$ and Markovian sampling through $\max_{t \in [T]} \tau_{mix}^{\theta_t} \frac{\log T_{\max}}{T_{\max}}$. As we will see in Theorem \ref{thm:critic_analysis_main_body} in the following subsection, the $\mathcal{E}(t)$ term dies to 0 at a favorable rate. The presence of the Markovian sampling term, however, marks the point where our work departs significantly from previous work.

\textbf{Remark.} Interestingly, we note that the right-hand side of \eqref{actor_adagrad} no longer depends upon the step size rate as in \citet[Theorem 4.5]{wu2020finite} due to the use of our modified Adagrad stepsize in the actor update. This allows us to derive an improved overall sample complexity in Theorem \ref{thm:convergence_rate}.

An important consequence of Theorem \ref{thm:actor_bd} is that the level of bias resulting from Markov sampling can be controlled by choosing $T_{\max}$ appropriately. When the maximum mixing time likely to be encountered during training -- captured here by the term $\max_{t \in [T]} \tau_{mix}^{\theta_t}$, is small -- it makes sense to choose $T_{\max}$ to be relatively small as well. When mixing times are long, on the other hand, choosing $T_{\max}$ accordingly keeps the Markovian sampling bias manageable.

\subsection{Convergence of the Critic}\label{criti_convergence}

We turn next to characterizing the convergence of the critic error term arising in bound \eqref{ineq:actor_1} of Theorem \ref{thm:actor_bd}. Similar to that theorem, the resulting bound expresses critic convergence in terms of mixing times encountered during training as well as MLMC rollout length $T_{\max}$. This result is also where our actor-critic scheme explicitly becomes two-timescale due to our choice of stepsize sequences.

\begin{theorem} \label{thm:critic_analysis_main_body}
Assume $\gamma_t = (1 + t)^{-\nu}, \alpha = \alpha_t' / \sqrt{\sum_{k=1}^t \norm{h^{MLMC}_t}^2}$, and $\alpha_t' = (1 + t)^{-\sigma}$, where $0 < \nu < \sigma < 1$. Then
\small
\begin{align}
    \frac{1}{T} \sum_{t=1}^T  \mathcal{E}(t) \leq &\bo{T^{\nu - 1}} + \bo{T^{-2(\sigma - \nu)}} \nonumber\\
    &+ \wo{ \max_{t \in [T]} \tau_{mix}^{\theta_t} \log T_{\max}} \bo{T^{-\nu}} \nonumber\\
    %
    %
    &\quad+ \wo{ \max_{t \in [T]} \tau_{mix}^{\theta_t} \frac{\log T_{\max}}{T_{\max}} }. \label{ineq:critic_1}
\end{align}
\normalsize
\end{theorem}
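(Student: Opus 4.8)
The plan is to control the two summands of $\mathcal{E}(t)$ separately and then add the resulting bounds. The engine throughout is the two-timescale separation: under the modified Adagrad stepsize of Theorem \ref{thm:actor_bd}, the actor increment satisfies $\norm{\theta_{t+1}-\theta_t}=\alpha_t\norm{h_t^{MLMC}}\le\alpha_t'=(1+t)^{-\sigma}$, so by $L$-smoothness of $J$ and the critic-Lipschitz bound of Proposition \ref{lemma:critic_Lipschitz}, the moving targets drift slowly: $|\eta_{t+1}^\ast-\eta_t^\ast|=|J(\theta_{t+1})-J(\theta_t)|=\bo{(1+t)^{-\sigma}}$ and $\norm{\omega_{t+1}^\ast-\omega_t^\ast}\le L_\omega\norm{\theta_{t+1}-\theta_t}=\bo{(1+t)^{-\sigma}}$. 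Since $\sigma>\nu$, the targets move on a strictly slower scale than the $\eta$- and $\omega$-updates, which is exactly what lets the tracking errors contract.

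First I would treat the reward-tracking error $z_t:=\eta_t-\eta_t^\ast$. Writing $z_{t+1}=(z_t+(\eta_t^\ast-\eta_{t+1}^\ast))-\gamma_t f_t^{MLMC}$, squaring, and conditioning on $\mathcal{F}_{t-1}$, the mean identity \eqref{eqn:prop31_mean_main_body} replaces $\mathbb{E}_{t-1}[f_t^{MLMC}]$ by the low-bias estimate $\mathbb{E}_{t-1}[f_t^{j_{\max}}]$, which equals $z_t$ up to a Markovian bias $b_t^f$ bounded (for $T_{\max}\ge\tau_{mix}^{\theta_t}$, as required by Proposition \ref{prop:31}) by $|b_t^f|=\wo{\tau_{mix}^{\theta_t}\log T_{\max}/T_{\max}}$. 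The cross term is then contractive in $z_t$; the second-moment bound \eqref{eqn:prop31_second_moment_main_body} controls the noise $\gamma_t^2\,\mathbb{E}\norm{f_t^{MLMC}}^2=\gamma_t^2\,\wo{\tau_{mix}^{\theta_t}\log T_{\max}}$; the drift contributes $\bo{(1+t)^{-2\sigma}/\gamma_t}$ after a Young step absorbs part of $\gamma_t\norm{z_t}^2$; and, crucially, I would bound the bias cross term \emph{linearly} as $\gamma_t|z_t|\,|b_t^f|\le\gamma_t\,\bar z\,|b_t^f|$ using boundedness of $z_t$ (bounded rewards), so that the bias enters at first order rather than squared.

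Summing the recursion $\mathbb{E}\norm{z_{t+1}}^2\le(1-\tfrac{\gamma_t}{2})\mathbb{E}\norm{z_t}^2+C_t$, dividing by $\gamma_t$, and applying Abel summation (using that $\gamma_t^{-1}$ is increasing and $\norm{z_t}$ is bounded) yields exactly the four advertised terms: the telescoped initial condition gives $\bo{\gamma_T^{-1}/T}=\bo{T^{\nu-1}}$; the noise sums to $\wo{\tau_{mix}\log T_{\max}}\,\tfrac1T\sum_t\gamma_t=\wo{\tau_{mix}\log T_{\max}}\,\bo{T^{-\nu}}$; the drift sums to $\tfrac1T\sum_t(1+t)^{-2(\sigma-\nu)}=\bo{T^{-2(\sigma-\nu)}}$; and the linear bias sums to $\wo{\tau_{mix}\log T_{\max}/T_{\max}}$. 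For the critic error $y_t:=\omega_t-\omega_t^\ast$ I would run the same argument with $\beta_t=\Theta((1+t)^{-\nu})$ on the fast timescale, discarding the projection via non-expansiveness of $\Pi_{\Omega}$ (valid since $\omega_t^\ast\in\Omega$ under $R_\omega=2R/\lambda$) and invoking Assumption \ref{assum:critic_positive_definite} ($A_{\theta}\succeq\lambda I$) for the contraction factor $1-2\lambda\beta_t$. The one new feature is that $g_t^{MLMC}$ uses $\eta_t$ rather than $J(\theta_t)$, so $\mathbb{E}_{t-1}[g_t^{MLMC}]$ carries an extra coupling term of size $\bo{|z_t|}$; a Young step bounds its contribution by $\tfrac{\lambda\beta_t}{2}\norm{y_t}^2+\bo{\beta_t\lambda^{-1}}\norm{z_t}^2$, and since $\tfrac1T\sum_t\mathbb{E}\norm{z_t}^2$ is already controlled and of the same order, the coupling introduces no new terms. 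Adding the $\eta$- and $\omega$-bounds gives \eqref{ineq:critic_1}.

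The main obstacle I anticipate is the bias bookkeeping: establishing that $\mathbb{E}_{t-1}[l_t^{j_{\max}}]$ approximates the true stationary gradient $\nabla L$ up to $\wo{\tau_{mix}^{\theta_t}\log T_{\max}/T_{\max}}$ — the ``additional source of bias'' absent from \citet{dorfman2022} — and then propagating it \emph{linearly} through the Lyapunov recursion so that it appears at the stated order and not its square. This has to be reconciled with the moving-target drift so that the separation $\sigma>\nu$ keeps every contribution summable with the advertised exponents; tuning the constants in the Young steps to leave a genuine net contraction while still isolating the $\tau_{mix}$- and $T_{\max}$-dependence explicitly is the delicate part.
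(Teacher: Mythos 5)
Your architecture is essentially the paper's: the paper also splits $\mathcal{E}(t)$ into the reward-tracking and critic errors (Theorems \ref{thm:reward_analysis_app} and \ref{thm:critic_bd_app}), expands the squared error recursion, invokes strong convexity/positive definiteness of $A_\theta$ for the contraction, uses the MLMC mean identity to pass to $l_t^{j_{\max}}$, the second-moment bound for the $\gamma_t^2\norm{f_t}^2$ noise term, and the Lipschitz drift $|\eta_{t+1}^*-\eta_t^*|,\norm{\omega_{t+1}^*-\omega_t^*}=\bo{\alpha_t'}$ for the moving targets. Your substitution of Young's inequality for the paper's H\"older-plus-quadratic-resolution step on the drift cross term is a cosmetic difference yielding the same $\bo{T^{-2(\sigma-\nu)}}$ exponent.

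The one substantive discrepancy is your treatment of the Markovian bias. You assert that the conditional bias $b_t^f=\E_{t-1}[f_t^{j_{\max}}]-F'(\eta_t)$ satisfies $|b_t^f|=\wo{\tau_{mix}^{\theta_t}\log T_{\max}/T_{\max}}$ and then propagate it linearly. The tool actually available (Lemma \ref{lemma:a6}) controls the \emph{expected deviation} $\E\bigl[\norm{l_t^{j_{\max}}-\nabla L(x_t)}\bigr]$ only at the rate $\wo{\sqrt{\tau_{mix}^{\theta_t}\log T_{\max}/T_{\max}}}$; plugging that into your linear cross-term bound $\gamma_t\bar z\,\E|b_t^f|$ produces the square-root rate, which is exactly what the paper's appendix statement of the reward-tracking bound (Theorem \ref{thm:reward_analysis_app}) carries. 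The paper only obtains the non-square-root rate $\wo{\tau_{mix}\log T_{\max}/T_{\max}}$ in the critic analysis, and it does so by a different mechanism: H\"older pairs $\sqrt{\sum_t\E\norm{\omega_t-\omega_t^*}^2}$ against the \emph{second}-moment deviation $\E\norm{g_t^{j_{\max}}-\nabla G(\omega_t)}^2=\wo{\tau_{mix}\log T_{\max}/T_{\max}}$, and the subsequent resolution of the implicit inequality $Z\le A+2\sqrt{Z}\sqrt{G}$ lands the second moment (not its square root) in the final bound. So your plan, executed with the lemmas the paper provides, proves the theorem with the last term replaced by its square root; to get the stated linear rate you would need either a genuine first-moment (conditional-bias) estimate of order $\tau_{mix}/T_{\max}$ — which the paper never establishes and which is the real content of your anticipated ``main obstacle'' — or to reroute the bias through the second moment as the paper does for the critic. (The paper is itself inconsistent here: its main-body statement \eqref{ineq:critic_1} drops the square root that its own appendix proof of the reward-tracking half retains.)
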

For the proof of Theorem \ref{thm:critic_analysis_main_body}, refer to Theorems \ref{thm:reward_analysis_app} and \ref{thm:critic_bd_app}) in the appendix. Unlike the actor bound \eqref{ineq:actor_1}, the only term in \eqref{ineq:critic_1} that does not diminish with $T$ is the Markovian sampling term containing $\max_{t \in [T]} \tau_{mix}^{\theta_t} \frac{\log T_{\max}}{T_{\max}}$. As in the actor case, this bias can be controlled via the proper selection of $T_{\max}$. As we will see in the final result of this section, this Markovian sampling term will ultimately be absorbed into the analogous term from Theorem \ref{thm:actor_bd}.

\begin{figure*}[t]
  \centering
  \subfigure[]{\includegraphics[width=0.92\columnwidth]{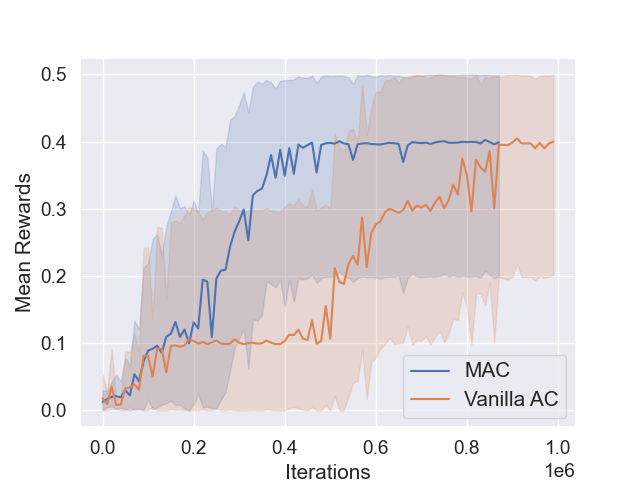}}
    \subfigure[]{\includegraphics[width=0.92\columnwidth]{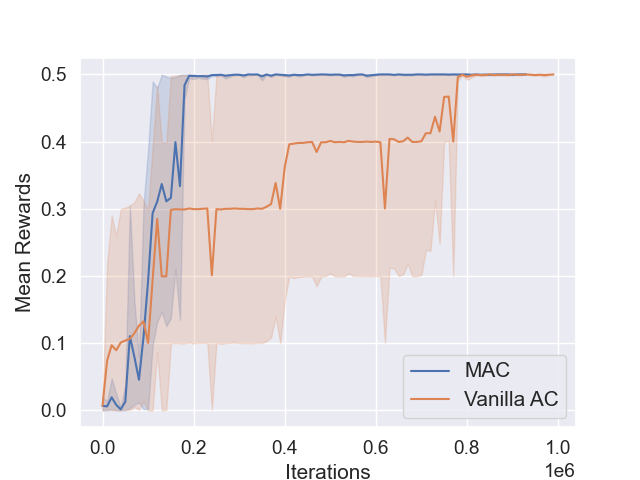}}
  \caption{(a) Mean Rewards over 3 million samples with $T_{\max} = 8$ for MAC and rollout = 3 for Vanilla AC with $6 \times 6$ grid. (b) Mean Rewards over 4 million samples with $T_{\max} = 16$ for MAC and rollout = 4 for Vanilla AC with $10 \times 10$ grid. }
  \label{fig:experiments}
\end{figure*}
\subsection{Convergence Rate and Sample Complexity}

We now present our main result characterizing the convergence rate of Algorithm \ref{alg:PG_MAG} in terms of only the total number of iterations, mixing times encountered and $T_{\max}$ used during training, and the function approximation bias $\mathcal{E}_{app}$. We present the result in Theorem \ref{thm:convergence_rate} next, which follows directly from Theorems \ref{thm:actor_bd} and \ref{thm:critic_analysis_main_body}.
\begin{theorem}(Convergence Rate) \label{thm:convergence_rate}
    Under the assumptions of Theorems \ref{thm:actor_bd} and \ref{thm:critic_analysis_main_body} and with selection $\sigma=0.75$ and $\nu=0.5$, we have
    \begin{align}\label{final_bound}
    \frac{1}{T} \sum_{t=1}^T \mathbb{E}
     \left[ \bignorm{ \nabla J(\theta_t) }^2 \right]\leq & \bo{ \mathcal{E}_{app} }+\wo{ \frac{\tau_{mix} \log T_{\max}}{\sqrt{T}}} \nonumber
    \\
    &\hspace{-4mm} +\wo{ { \frac{\tau_{mix}\log T_{\max}}{T_{\max}}} } ,
    \end{align}
     where $\tau_{mix} :=\max_{t \in [T]} \tau_{mix}^{\theta_t}$. 
\end{theorem}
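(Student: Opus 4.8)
The plan is to obtain this result by direct composition: substitute the critic error bound of Theorem~\ref{thm:critic_analysis_main_body} into the $\frac{1}{T}\sum_{t=1}^T \mathcal{E}(t)$ term appearing in the actor bound of Theorem~\ref{thm:actor_bd}, then specialize the stepsize exponents to $\sigma = 0.75$ and $\nu = 0.5$ and collect terms. No new probabilistic or analytic machinery is needed; the entire content lies in how the two rates interact once the exponents are fixed.

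First I would verify admissibility of the chosen exponents: Theorem~\ref{thm:critic_analysis_main_body} requires $0 < \nu < \sigma < 1$, and indeed $0 < 0.5 < 0.75 < 1$, so both prior theorems apply. Next, I would plug $\nu = 1/2$ and $\sigma = 3/4$ into the three polynomially-decaying factors on the right-hand side of \eqref{ineq:critic_1}. The key observation is that all three collapse to the same rate: $T^{\nu - 1} = T^{-1/2}$, $T^{-2(\sigma - \nu)} = T^{-2(1/4)} = T^{-1/2}$, and $T^{-\nu} = T^{-1/2}$. Consequently the critic bound simplifies to
\begin{align*}
\frac{1}{T}\sum_{t=1}^T \mathcal{E}(t) \leq \wo{ \frac{\tau_{mix}\log T_{\max}}{\sqrt{T}} } + \wo{ \frac{\tau_{mix}\log T_{\max}}{T_{\max}} },
\end{align*}
where the residual $\bo{T^{-1/2}}$ coming from the first two terms is absorbed into the first $\wo{\cdot}$ term using $\tau_{mix} \geq 1$ and $\log T_{\max} \geq 1$, and the $\wo{\tau_{mix}\log T_{\max}}\cdot\bo{T^{-1/2}}$ contribution is rewritten as $\wo{\tau_{mix}\log T_{\max}/\sqrt{T}}$.

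Then I would substitute this simplified critic bound back into the $\bo{\frac{1}{T}\sum_t \mathcal{E}(t)}$ term of \eqref{actor_adagrad}. The $\bo{1/\sqrt{T}}$ term native to the actor bound merges with the $\wo{\tau_{mix}\log T_{\max}/\sqrt{T}}$ term (again since $\tau_{mix}\log T_{\max} \geq 1$), and the two separate $\wo{\tau_{mix}\log T_{\max}/T_{\max}}$ contributions — one native to the actor bound and one inherited from the critic — coalesce into a single such term. What survives is precisely
\begin{align*}
\frac{1}{T}\sum_{t=1}^T \E\left[\bignorm{\nabla J(\theta_t)}^2\right] \leq \bo{\mathcal{E}_{app}} + \wo{\frac{\tau_{mix}\log T_{\max}}{\sqrt{T}}} + \wo{\frac{\tau_{mix}\log T_{\max}}{T_{\max}}},
\end{align*}
which is the claimed bound with $\tau_{mix} = \max_{t\in[T]}\tau_{mix}^{\theta_t}$.

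There is no deep obstacle here, since the theorem follows mechanically from the two established bounds; the only care required is bookkeeping in the $\wo{\cdot}$ and $\bo{\cdot}$ absorptions. What is worth stating explicitly, however, is \emph{why} these particular exponents are the right ones: they are exactly the values that balance the three critic rates against one another. Setting $-\nu = \nu - 1$ forces $\nu = 1/2$, and then setting $-2(\sigma - \nu) = -\nu$ forces $\sigma = 3/4$; any other admissible choice leaves at least one of the three polynomial terms decaying strictly slower than $T^{-1/2}$, degrading the final rate. This balancing argument is the conceptual heart of an otherwise routine computation, and I would include it to justify the specific constants rather than presenting them as arbitrary.
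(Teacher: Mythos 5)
Your proposal is correct and follows essentially the same route as the paper's own proof: substitute the critic bound of Theorem~\ref{thm:critic_analysis_main_body} into the $\frac{1}{T}\sum_t \mathcal{E}(t)$ term of Theorem~\ref{thm:actor_bd}, set $\nu=0.5$, $\sigma=0.75$ so that all three polynomial rates collapse to $T^{-1/2}$, and absorb constants. Your closing remark explaining that these exponents are forced by balancing $T^{\nu-1}$, $T^{-2(\sigma-\nu)}$, and $T^{-\nu}$ is a worthwhile addition the paper omits, but it does not change the argument.
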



The proof of Theorem \ref{thm:convergence_rate} is provided in Appendix \ref{prof_thm:convergence_rate}. The result in Theorem \ref{thm:convergence_rate} provides an explicit dependence of the final convergence rate on the maximum mixing time $\tau_{mix}$ encountered during training as well as rollout length $T_{\max}$. The first term is $\bo{ \mathcal{E}_{app} }$ on the right-hand side of \eqref{final_bound} is unavoidable due to the use of linear function approximation for the critic, but can be kept small or even driven to zero with appropriate feature selection. The second term shows the dependence on the mixing rate and shows that we recover the original iid rates if $\tau_{mix}=1$. The last term on the right-hand side of \eqref{final_bound} is interesting because that is the final bias we are incurring due to the use of finite length rollout trajectories $T_{\max}$. If we make $T_{\max}=T$ as in \citet{dorfman2022}, we will recover the rate of $\bo{ \mathcal{E}_{app} }+\wo{ \frac{\tau_{mix} \log T_{\max}}{\sqrt{T}}}$.

We present the sample complexity result next.
\begin{corollary} \label{corollary}
Let us consider $T_{\max}=\sqrt{T}$ and $\mathcal{E}_{app}\leq \epsilon$. Absorbing the logarithmic terms in the $\tilde{\mathcal{O}}$ notation, it holds that to achieve $\min_{1\leq t\leq T} \mathbb{E}
     \left[ \bignorm{ \nabla J(\theta_t) }^2 \right]\leq \epsilon$, we need 
%
   $ T\geq \tilde{\mathcal{O}}\left(\frac{\tau_{mix}^2}{\epsilon^2}\right)$.
\end{corollary}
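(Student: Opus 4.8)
The plan is to obtain the corollary directly from the convergence bound \eqref{final_bound} of Theorem \ref{thm:convergence_rate} by specializing the rollout length and then counting iterations. First I would pass from the time-averaged gradient norm to the minimum: since the minimum of a finite collection never exceeds its average, $\min_{1 \leq t \leq T} \mathbb{E}[\norm{\nabla J(\theta_t)}^2] \leq \frac{1}{T}\sum_{t=1}^T \mathbb{E}[\norm{\nabla J(\theta_t)}^2]$, so it suffices to drive the right-hand side of \eqref{final_bound} below $\epsilon$.

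Next I would substitute $T_{\max} = \sqrt{T}$ into \eqref{final_bound}. The key observation is that this choice balances the two non-approximation terms. Since $\log T_{\max} = \tfrac{1}{2}\log T$ is absorbed into the $\widetilde{\mathcal{O}}$ notation, the first term satisfies $\wo{\tau_{mix}\log T_{\max}/\sqrt{T}} = \wo{\tau_{mix}/\sqrt{T}}$, while the MLMC truncation-bias term becomes $\wo{\tau_{mix}\log T_{\max}/T_{\max}} = \wo{\tau_{mix}\log T_{\max}/\sqrt{T}} = \wo{\tau_{mix}/\sqrt{T}}$. Both therefore collapse to the same order, and the bound reduces to $\bo{\mathcal{E}_{app}} + \wo{\tau_{mix}/\sqrt{T}}$.

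The hypothesis $\mathcal{E}_{app} \leq \epsilon$ handles the first term, so the entire expression is of order $\epsilon + \wo{\tau_{mix}/\sqrt{T}}$. To force the residual sampling term below $\epsilon$ I would impose $\wo{\tau_{mix}/\sqrt{T}} \leq \epsilon$, that is $\sqrt{T} \geq \wo{\tau_{mix}/\epsilon}$; squaring yields the claimed iteration count $T \geq \wo{\tau_{mix}^2/\epsilon^2}$. Finally I would verify consistency with the standing requirement $T_{\max} \geq \tau_{mix}^{\theta_t}$ of Proposition \ref{prop:31}: with $T_{\max} = \sqrt{T}$ this demands $T \geq \tau_{mix}^2$, which is automatically implied by $T \geq \wo{\tau_{mix}^2/\epsilon^2}$ whenever $\epsilon \leq 1$.

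I do not expect any single calculation to be the obstacle; the delicate part is the bookkeeping of logarithmic factors across the two $\widetilde{\mathcal{O}}$ terms and the recognition that $T_{\max} = \sqrt{T}$ is precisely the choice equalizing the variance-type term and the truncation bias, so that neither dominates and the overall complexity is the advertised $\wo{\tau_{mix}^2/\epsilon^2}$. One should also confirm that absorbing $\mathcal{E}_{app} \leq \epsilon$ into the target is the intended reading, so that the corollary quantifies the iterations needed to match the unavoidable approximation floor rather than to beat it.
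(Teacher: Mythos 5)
Your derivation is correct and is exactly the argument the paper intends: the paper states that the corollary ``follows directly from the statement of Theorem \ref{thm:convergence_rate},'' and your substitution of $T_{\max}=\sqrt{T}$ into \eqref{final_bound}, absorption of the logarithmic factors, use of $\min \leq$ average, and the resulting requirement $\sqrt{T}\gtrsim \tau_{mix}/\epsilon$ is precisely that direct computation. Your additional check that $T_{\max}=\sqrt{T}\geq \tau_{mix}$ is automatically satisfied for $\epsilon\leq 1$ is a sensible verification the paper leaves implicit.
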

The proof of Corollary \ref{corollary} follows directly from the statement of Theorem \ref{thm:convergence_rate}. We remark that, even for fast mixing settings where we can ignore the dependence on $\tau_{mix}^2$ in Corollary \ref{corollary}, our proposed algorithm achieves sample complexity $\tilde{\mathcal{O}}\left(\frac{1}{\epsilon^2}\right)$, which improves upon the state of the art result of $\tilde{\mathcal{O}}\left(\frac{1}{\epsilon^{2.5}}\right)$  in \citet{wu2020finite}. This improvement is due to the use of Adagrad step size in the actor update.

\textbf{Remark.} It is interesting to note that the analysis presented in this section recovers results for the simplified i.i.d. sampling setting: since mixing occurs immediately, $\max_{t \in [T]} \tau_{mix}^{\theta_t} = 1$, so we can simply choose $T_{max} = 1$. At the other extreme, when mixing is very slow we intuitively expect that single- or few-sample estimates of the policy gradient like those considered in \cite{wu2020finite, xu2020improving, qiu2021finite, chen2022finite} will be highly inaccurate due to the failure of the fast mixing condition of Assumption 4.2 of \cite{wu2020finite} and Assumption 2 of \cite{xu2020improving}, for example, making a larger number of samples imperative. Theorems \ref{thm:actor_bd}, \ref{thm:critic_analysis_main_body}, and \ref{thm:convergence_rate} are the first results to shed light on this trade-off.

\section{Experiments}\label{sec:experiments}
In this section, we perform preliminary proof of concept experiments to  evaluate the performance of the proposed MAC algorithm and compare it against the vanilla actor-critic.  While we concede that numerous enhancements to actor-critic have been considered, based on Nesterov acceleration \cite{kumar2019sample}, parallelization (Asynchronous Advantage Actor-Critic \cite{mnih2016asynchronous}), and offline processing of prior trajectory information (Soft Actor-Critic \cite{haarnoja2018soft}), our focus is on revealing the experimental dependence of actor-critic's stability on the environment's mixing time. Therefore, for carefully controlled experimentation, we only compare against Vanilla actor-critic as detailed in Sec. \ref{sec:actor_critic}.
We consider an $n \times n$ grid with a starting position at the top left and a goal at the bottom right. There are five actions: stay, up, down, left, and right. An action that results in the goal state gives the agent a $+1$ reward and $+0$ for all other states. In Figure \ref{fig:experiments}, we report algorithm performance in terms of mean reward returns over $5$ trials with $95\%$ confidence intervals. 

We compare MAC against Vanilla AC with a standard gradient estimator. In practice, we use a constant learning rate for the actor, critic, and reward estimation. For comparison, we ran the Vanilla AC for $1$ million iterations setting its constant rollout length to the largest integer under the average rollout length of MAC. For $T_{\max} = 8$, the average rollout length is $3.42$, so the rollout length for Vanilla AC is $3$. 
Thus, $3$ million samples were observed for the Vanilla AC. To have a similar number of observed samples, we ran MAC for $877192$ iterations. Similarly when $T_{\max} = 16$, the average rollout length is $4.26$. Therefore, we ran MAC for $936768$ iterations. The details table of hyperparameters is provided in Appendix \ref{experiments_details}.
In Figure \ref{fig:experiments} (a) we set $n = 6$ and $T_{\max} = 8$ for MAC. For MAC and Vanilla AC, we set the learning rate for actor, critic, and reward estimation to $.01$. In Figure \ref{fig:experiments} (b), $n = 10$ and $T_{\max} = 16$ and learning rate is $.005$. We observe that for both experiments, MAC converges faster to the maximum reward than Vanilla AC, showing MLMC's advantage over a standard gradient estimator.


\section{Conclusions and Limitations }
In this work, for the first time, we established the explicit dependence of the convergence rate of the actor-critic algorithm on the mixing time of the underlying Markov transitions induced by the policy. This allows us to remove the fast mixing assumptions in the existing literature and utilize actor-critic algorithms for applications even with slower mixing times which are popular in robotics, finance, etc. To establish the results, we propose a multi-level Monte Carlo-based gradient estimator for the actor, critic, and average reward estimator. This helps to establish the convergence rate in terms of mixing time. 
As a limitation, our current dependence on mixing time is not the sharpest possible. One can further improve the dependence on mixing time from linear to sublinear, which is a valid scope of future research. 
\bibliography{example_paper}

\begin{thebibliography}{47}
\providecommand{\natexlab}[1]{#1}
\providecommand{\url}[1]{\texttt{#1}}
\expandafter\ifx\csname urlstyle\endcsname\relax
  \providecommand{\doi}[1]{doi: #1}\else
  \providecommand{\doi}{doi: \begingroup \urlstyle{rm}\Url}\fi

\bibitem[Agarwal et~al.(2020)Agarwal, Kakade, Lee, and
  Mahajan]{agarwal2020optimality}
Agarwal, A., Kakade, S.~M., Lee, J.~D., and Mahajan, G.
\newblock Optimality and approximation with policy gradient methods in markov
  decision processes.
\newblock In \emph{Conference on Learning Theory}, pp.\  64--66. PMLR, 2020.

\bibitem[Bedi et~al.(2022)Bedi, Chakraborty, Parayil, Sadler, Tokekar, and
  Koppel]{bedi2022hidden}
Bedi, A.~S., Chakraborty, S., Parayil, A., Sadler, B.~M., Tokekar, P., and
  Koppel, A.
\newblock On the hidden biases of policy mirror ascent in continuous action
  spaces.
\newblock In \emph{International Conference on Machine Learning}, pp.\
  1716--1731. PMLR, 2022.

\bibitem[Bertsekas(2011)]{bertsekas2011approximate}
Bertsekas, D.~P.
\newblock Approximate policy iteration: A survey and some new methods.
\newblock \emph{Journal of Control Theory and Applications}, 9\penalty0
  (3):\penalty0 310--335, 2011.

\bibitem[Bhandari \& Russo(2019)Bhandari and Russo]{bhandari2019global}
Bhandari, J. and Russo, D.
\newblock Global optimality guarantees for policy gradient methods.
\newblock \emph{arXiv preprint arXiv:1906.01786}, 2019.

\bibitem[Bhandari et~al.(2018)Bhandari, Russo, and Singal]{bhandari2018}
Bhandari, J., Russo, D., and Singal, R.
\newblock A finite time analysis of temporal difference learning with linear
  function approximation.
\newblock \emph{CoRR}, abs/1806.02450, 2018.
\newblock URL \url{http://arxiv.org/abs/1806.02450}.

\bibitem[Borkar \& Konda(1997)Borkar and Konda]{borkar1997actor}
Borkar, V.~S. and Konda, V.~R.
\newblock The actor-critic algorithm as multi-time-scale stochastic
  approximation.
\newblock \emph{Sadhana}, 22:\penalty0 525--543, 1997.

\bibitem[Borkar \& Meyn(2000)Borkar and Meyn]{borkar2000ode}
Borkar, V.~S. and Meyn, S.~P.
\newblock The ode method for convergence of stochastic approximation and
  reinforcement learning.
\newblock \emph{SIAM Journal on Control and Optimization}, 38\penalty0
  (2):\penalty0 447--469, 2000.

\bibitem[Chen \& Zhao(2022)Chen and Zhao]{chen2022finite}
Chen, X. and Zhao, L.
\newblock Finite-time analysis of single-timescale actor-critic, 2022.
\newblock URL \url{https://arxiv.org/abs/2210.09921}.

\bibitem[Dorfman \& Levy(2022)Dorfman and Levy]{dorfman2022}
Dorfman, R. and Levy, K.~Y.
\newblock Adapting to mixing time in stochastic optimization with {M}arkovian
  data.
\newblock In Chaudhuri, K., Jegelka, S., Song, L., Szepesvari, C., Niu, G., and
  Sabato, S. (eds.), \emph{Proceedings of the 39th International Conference on
  Machine Learning}, volume 162 of \emph{Proceedings of Machine Learning
  Research}, pp.\  5429--5446. PMLR, 17--23 Jul 2022.
\newblock URL \url{https://proceedings.mlr.press/v162/dorfman22a.html}.

\bibitem[Duchi et~al.(2012)Duchi, Agarwal, Johansson, and Jordan]{duchi2012}
Duchi, J.~C., Agarwal, A., Johansson, M., and Jordan, M.~I.
\newblock Ergodic mirror descent.
\newblock \emph{SIAM Journal on Optimization}, 22\penalty0 (4):\penalty0
  1549--1578, 2012.
\newblock \doi{10.1137/110836043}.
\newblock URL \url{https://doi.org/10.1137/110836043}.

\bibitem[Gu et~al.(2016)Gu, Holly, Lillicrap, and Levine]{gu2016deep}
Gu, S., Holly, E., Lillicrap, T., and Levine, S.
\newblock Deep reinforcement learning for robotic manipulation.
\newblock \emph{arXiv preprint arXiv:1610.00633}, 1, 2016.

\bibitem[Haarnoja et~al.(2018)Haarnoja, Zhou, Abbeel, and
  Levine]{haarnoja2018soft}
Haarnoja, T., Zhou, A., Abbeel, P., and Levine, S.
\newblock Soft actor-critic: Off-policy maximum entropy deep reinforcement
  learning with a stochastic actor.
\newblock In \emph{International conference on machine learning}, pp.\
  1861--1870. PMLR, 2018.

\bibitem[Heaton et~al.(2016)Heaton, Polson, and Witte]{heaton2016deep}
Heaton, J., Polson, N.~G., and Witte, J.
\newblock Deep portfolio theory.
\newblock \emph{arXiv preprint arXiv:1605.07230}, 2016.

\bibitem[Hinton et~al.(2006)Hinton, Osindero, and Teh]{hinton2006fast}
Hinton, G.~E., Osindero, S., and Teh, Y.-W.
\newblock A fast learning algorithm for deep belief nets.
\newblock \emph{Neural computation}, 18\penalty0 (7):\penalty0 1527--1554,
  2006.

\bibitem[Konda \& Tsitsiklis(1999)Konda and Tsitsiklis]{Konda2000}
Konda, V. and Tsitsiklis, J.
\newblock Actor-critic algorithms.
\newblock In Solla, S., Leen, T., and M\"{u}ller, K. (eds.), \emph{Advances in
  Neural Information Processing Systems}, volume~12. MIT Press, 1999.
\newblock URL
  \url{https://proceedings.neurips.cc/paper/1999/file/6449f44a102fde848669bdd9eb6b76fa-Paper.pdf}.

\bibitem[Krizhevsky et~al.(2017)Krizhevsky, Sutskever, and
  Hinton]{krizhevsky2017imagenet}
Krizhevsky, A., Sutskever, I., and Hinton, G.~E.
\newblock Imagenet classification with deep convolutional neural networks.
\newblock \emph{Communications of the ACM}, 60\penalty0 (6):\penalty0 84--90,
  2017.

\bibitem[Kumar et~al.(2019)Kumar, Koppel, and Ribeiro]{kumar2019sample}
Kumar, H., Koppel, A., and Ribeiro, A.
\newblock On the sample complexity of actor-critic method for reinforcement
  learning with function approximation.
\newblock \emph{arXiv preprint arXiv:1910.08412}, 2019.

\bibitem[Leahy et~al.(2022)Leahy, Kerimkulov, Siska, and
  Szpruch]{leahy2022convergence}
Leahy, J.-M., Kerimkulov, B., Siska, D., and Szpruch, L.
\newblock Convergence of policy gradient for entropy regularized mdps with
  neural network approximation in the mean-field regime.
\newblock In \emph{International Conference on Machine Learning}, pp.\
  12222--12252. PMLR, 2022.

\bibitem[Levin \& Peres(2017)Levin and Peres]{levin2017markov}
Levin, D.~A. and Peres, Y.
\newblock \emph{Markov chains and mixing times}, volume 107.
\newblock American Mathematical Soc., 2017.

\bibitem[Li(2019)]{li2019reinforcement}
Li, Y.
\newblock Reinforcement learning applications.
\newblock \emph{arXiv preprint arXiv:1908.06973}, 2019.

\bibitem[Liu et~al.(2021)Liu, Ho, Wang, Gao, Jin, and Zhang]{liu2021federated}
Liu, M., Ho, S., Wang, M., Gao, L., Jin, Y., and Zhang, H.
\newblock Federated learning meets natural language processing: A survey.
\newblock \emph{arXiv preprint arXiv:2107.12603}, 2021.

\bibitem[McAfee et~al.(2012)McAfee, Brynjolfsson, Davenport, Patil, and
  Barton]{mcafee2012big}
McAfee, A., Brynjolfsson, E., Davenport, T.~H., Patil, D., and Barton, D.
\newblock Big data: the management revolution.
\newblock \emph{Harvard business review}, 90\penalty0 (10):\penalty0 60--68,
  2012.

\bibitem[Mei et~al.(2020)Mei, Xiao, Szepesvari, and Schuurmans]{mei2020global}
Mei, J., Xiao, C., Szepesvari, C., and Schuurmans, D.
\newblock On the global convergence rates of softmax policy gradient methods.
\newblock In \emph{International Conference on Machine Learning}, pp.\
  6820--6829. PMLR, 2020.

\bibitem[Melo et~al.(2008)Melo, Meyn, and Ribeiro]{melo2008analysis}
Melo, F.~S., Meyn, S.~P., and Ribeiro, M.~I.
\newblock An analysis of reinforcement learning with function approximation.
\newblock In \emph{Proceedings of the 25th international conference on Machine
  learning}, pp.\  664--671, 2008.

\bibitem[Mitrophanov(2005)]{mitrophanov2005sensitivity}
Mitrophanov, A.~Y.
\newblock Sensitivity and convergence of uniformly ergodic markov chains.
\newblock \emph{Journal of Applied Probability}, 42\penalty0 (4):\penalty0
  1003--1014, 2005.

\bibitem[Mnih et~al.(2016)Mnih, Badia, Mirza, Graves, Lillicrap, Harley,
  Silver, and Kavukcuoglu]{mnih2016asynchronous}
Mnih, V., Badia, A.~P., Mirza, M., Graves, A., Lillicrap, T., Harley, T.,
  Silver, D., and Kavukcuoglu, K.
\newblock Asynchronous methods for deep reinforcement learning.
\newblock In \emph{International conference on machine learning}, pp.\
  1928--1937. PMLR, 2016.

\bibitem[Nagaraj et~al.(2020)Nagaraj, Wu, Bresler, Jain, and
  Netrapalli]{bresler2020}
Nagaraj, D., Wu, X., Bresler, G., Jain, P., and Netrapalli, P.
\newblock Least squares regression with markovian data: Fundamental limits and
  algorithms.
\newblock In Larochelle, H., Ranzato, M., Hadsell, R., Balcan, M., and Lin, H.
  (eds.), \emph{Advances in Neural Information Processing Systems 33: Annual
  Conference on Neural Information Processing Systems 2020, NeurIPS 2020,
  December 6-12, 2020, virtual}, 2020.
\newblock URL
  \url{https://proceedings.neurips.cc/paper/2020/hash/c22abfa379f38b5b0411bc11fa9bf92f-Abstract.html}.

\bibitem[Papini et~al.(2018)Papini, Binaghi, Canonaco, Pirotta, and
  Restelli]{papini2018stochastic}
Papini, M., Binaghi, D., Canonaco, G., Pirotta, M., and Restelli, M.
\newblock Stochastic variance-reduced policy gradient.
\newblock In \emph{International conference on machine learning}, pp.\
  4026--4035. PMLR, 2018.

\bibitem[Pirotta et~al.(2013)Pirotta, Restelli, and
  Bascetta]{pirotta2013adaptive}
Pirotta, M., Restelli, M., and Bascetta, L.
\newblock Adaptive step-size for policy gradient methods.
\newblock \emph{Advances in Neural Information Processing Systems}, 26, 2013.

\bibitem[Pirotta et~al.(2015)Pirotta, Restelli, and
  Bascetta]{pirotta2015policy}
Pirotta, M., Restelli, M., and Bascetta, L.
\newblock Policy gradient in lipschitz markov decision processes.
\newblock \emph{Machine Learning}, 100:\penalty0 255--283, 2015.

\bibitem[Puterman(2014)]{puterman2014markov}
Puterman, M.~L.
\newblock \emph{Markov decision processes: discrete stochastic dynamic
  programming}.
\newblock John Wiley \& Sons, 2014.

\bibitem[Qiu et~al.(2021{\natexlab{a}})Qiu, Yang, Ye, and Wang]{qiu2021}
Qiu, S., Yang, Z., Ye, J., and Wang, Z.
\newblock On finite-time convergence of actor-critic algorithm.
\newblock \emph{IEEE Journal on Selected Areas in Information Theory},
  2\penalty0 (2):\penalty0 652--664, 2021{\natexlab{a}}.
\newblock \doi{10.1109/JSAIT.2021.3078754}.

\bibitem[Qiu et~al.(2021{\natexlab{b}})Qiu, Yang, Ye, and Wang]{qiu2021finite}
Qiu, S., Yang, Z., Ye, J., and Wang, Z.
\newblock On finite-time convergence of actor-critic algorithm.
\newblock \emph{IEEE Journal on Selected Areas in Information Theory},
  2\penalty0 (2):\penalty0 652--664, 2021{\natexlab{b}}.

\bibitem[Riemer et~al.(2021)Riemer, Raparthy, Cases, Subbaraj, Touzel, and
  Rish]{riemer2021continual}
Riemer, M., Raparthy, S.~C., Cases, I., Subbaraj, G., Touzel, M.~P., and Rish,
  I.
\newblock Continual learning in environments with polynomial mixing times.
\newblock \emph{arXiv preprint arXiv:2112.07066}, 2021.

\bibitem[Silver et~al.(2016)Silver, Huang, Maddison, Guez, Sifre, Van
  Den~Driessche, Schrittwieser, Antonoglou, Panneershelvam, Lanctot,
  et~al.]{silver2016mastering}
Silver, D., Huang, A., Maddison, C.~J., Guez, A., Sifre, L., Van Den~Driessche,
  G., Schrittwieser, J., Antonoglou, I., Panneershelvam, V., Lanctot, M.,
  et~al.
\newblock Mastering the game of go with deep neural networks and tree search.
\newblock \emph{nature}, 529\penalty0 (7587):\penalty0 484--489, 2016.

\bibitem[Sutton(1988)]{sutton1988}
Sutton, R.
\newblock Learning to predict by the method of temporal differences.
\newblock \emph{Machine Learning}, 3:\penalty0 9--44, 08 1988.
\newblock \doi{10.1007/BF00115009}.

\bibitem[Sutton \& Barto(2018)Sutton and Barto]{sutton2018reinforcement}
Sutton, R.~S. and Barto, A.~G.
\newblock \emph{Reinforcement learning: An introduction}.
\newblock MIT press, 2018.

\bibitem[Sutton et~al.(1999)Sutton, McAllester, Singh, and
  Mansour]{sutton1999policy}
Sutton, R.~S., McAllester, D., Singh, S., and Mansour, Y.
\newblock Policy gradient methods for reinforcement learning with function
  approximation.
\newblock \emph{Advances in neural information processing systems}, 12, 1999.

\bibitem[Tadi{\'c}(2001)]{tadic2001convergence}
Tadi{\'c}, V.
\newblock On the convergence of temporal-difference learning with linear
  function approximation.
\newblock \emph{Machine learning}, 42:\penalty0 241--267, 2001.

\bibitem[Tsitsiklis \& Van~Roy(1997)Tsitsiklis and Van~Roy]{roy1997}
Tsitsiklis, J. and Van~Roy, B.
\newblock An analysis of temporal-difference learning with function
  approximation.
\newblock \emph{IEEE Transactions on Automatic Control}, 42\penalty0
  (5):\penalty0 674--690, 1997.
\newblock \doi{10.1109/9.580874}.

\bibitem[Wang et~al.(2019)Wang, Cai, Yang, and Wang]{wang2019neural}
Wang, L., Cai, Q., Yang, Z., and Wang, Z.
\newblock Neural policy gradient methods: Global optimality and rates of
  convergence.
\newblock In \emph{International Conference on Learning Representations}, 2019.

\bibitem[Williams(1992)]{williams1992simple}
Williams, R.~J.
\newblock Simple statistical gradient-following algorithms for connectionist
  reinforcement learning.
\newblock \emph{Machine learning}, 8\penalty0 (3):\penalty0 229--256, 1992.

\bibitem[Wu et~al.(2020)Wu, Zhang, Xu, and Gu]{wu2020finite}
Wu, Y.~F., Zhang, W., Xu, P., and Gu, Q.
\newblock A finite-time analysis of two time-scale actor-critic methods.
\newblock \emph{Advances in Neural Information Processing Systems},
  33:\penalty0 17617--17628, 2020.

\bibitem[Xu et~al.(2020{\natexlab{a}})Xu, Gao, and Gu]{xu2020improved}
Xu, P., Gao, F., and Gu, Q.
\newblock An improved convergence analysis of stochastic variance-reduced
  policy gradient.
\newblock In \emph{Uncertainty in Artificial Intelligence}, pp.\  541--551.
  PMLR, 2020{\natexlab{a}}.

\bibitem[Xu et~al.(2020{\natexlab{b}})Xu, Wang, and Liang]{xu2020improving}
Xu, T., Wang, Z., and Liang, Y.
\newblock Improving sample complexity bounds for (natural) actor-critic
  algorithms.
\newblock In \emph{Proceedings of the 34th International Conference on Neural
  Information Processing Systems}, NIPS'20, Red Hook, NY, USA,
  2020{\natexlab{b}}. Curran Associates Inc.
\newblock ISBN 9781713829546.

\bibitem[Zhang et~al.(2020)Zhang, Koppel, Zhu, and Basar]{zhang2020global}
Zhang, K., Koppel, A., Zhu, H., and Basar, T.
\newblock Global convergence of policy gradient methods to (almost) locally
  optimal policies.
\newblock \emph{SIAM Journal on Control and Optimization}, 58\penalty0
  (6):\penalty0 3586--3612, 2020.

\bibitem[Zou et~al.(2019)Zou, Xu, and Liang]{zou2019finite}
Zou, S., Xu, T., and Liang, Y.
\newblock Finite-sample analysis for sarsa with linear function approximation.
\newblock \emph{Advances in neural information processing systems}, 32, 2019.

\end{thebibliography}
\bibliographystyle{icml2022}

\newpage
\onecolumn
\appendix
\addcontentsline{toc}{section}{Appendix} 
\part{Appendix} \label{appendix}
\parttoc 

\section{Detailed Context of Related Works}\label{Related_Works_appendix}

Actor-critic by  \citet{Konda2000} comprises algorithms that alternate between value function estimation (critic) and policy search updates (actor), which may be seen as a form of policy iteration \cite{bertsekas2011approximate} that incorporates stochastic approximation \cite{borkar1997actor}. We discuss each facet separately, before launching into their fusion.


\textbf{TD Learning} To evaluate the policy update direction, an estimate of the value function is required. To compute this estimate, stochastic fixed point iterations are considered to solve Bellman's equation \citet{sutton1988}, whose stability under linear function approximation was established in \citet{roy1997}. Since then, a plethora of works has studied the stability properties of TD-based policy evaluation. Initially, their asymptotic convergence was prioritized \cite{tadic2001convergence}, but more recently, non-asymptotic results have gained salience. For discounted TD with Markovian samples, \citet{bhandari2018} established finite-time convergence bounds which scale linearly with mixing time $\tau_{mix}$. \citet{dorfman2022} then improved the rate to be proportional to the $\sqrt{\tau_{mix}}$ using a multi-level gradient estimator and adaptive learning rate. \citet{qiu2021} studied TD under the average reward setting, which also imposes exponentially fast mixing that manifests in an additional logarithmic term in the sample complexity. These results all hinge upon imposing restrictive conditions on the mixing time.

\textbf{Policy Gradient} With a value function estimate in hand, one can multiple this quantity together with the gradient of the log-likelihood of a policy, i.e., the score function, to evaluate an estimate of the policy gradient \cite{williams1992simple,sutton1999policy}. Then, gradient ascent steps are taken with respect to policy parameters. The convergence of policy gradient has been studied extensively. Similar to TD, early work \cite{borkar2000ode} focused on asymptotic stability via tools from dynamical systems \cite{borkar2000ode}. More recently, its sample complexity has been established for a variety of settings: for tabular \cite{bhandari2019global,agarwal2020optimality} and softmax policies \cite{mei2020global}, rates to global optimality exist. For general parameterized policies, early works focused on ``policy improvement" bounds \cite{pirotta2013adaptive,pirotta2015policy}, and more recently, rates towards stationarity \cite{bedi2022hidden} and local extrema \cite{zhang2020global} have been studied, and under special neural architectures, globally optimal solutions \cite{wang2019neural,leahy2022convergence} are achievable. This topic is an active area of work, and covering all related sub-topics is beyond our scope. We merely identify that these performance certificates all hinge upon the mixing time of the induced Markov chain going to null exponentially fast.

\textbf{Actor-Critic} As previously mentioned, the stability of actor-critic was initially focused on asymptotics \cite{borkar1997actor}. More recently, its non-asymptotic rate has been derived under i.i.d. assumptions \cite{kumar2019sample,wang2019neural}, and more recently under a variety of different types of Markovian data -- see Table \ref{table_mixing}. However, these results impose that any temporal correlation of data across time vanishes exponentially fast as quantified by the mixing rate. In this way, we are able to match \cite{chen2022finite} but without this restriction. 


\section{Preliminaries}

%
Before proceeding with our analysis of Algorithm \ref{alg:PG_MAG}, we need some preliminary results and assumptions.

\subsection{Preliminary Results}
The statements of the results in this section have been adapted from \cite{dorfman2022} to fit the setting considered in our paper. Except in the case of Lemma \ref{lemma:31_app}, their proofs follow directly from that work. First, we need the following concentration bound concerning gradient estimation from Markovian data.
\begin{lemma}{Lemma A.5, \cite{dorfman2022}.} \label{lemma:a5}
Fix $K, N \in \mathbb{N}$ such that $N \geq 2K$. Let a policy parameter $\theta_t \in \Theta$ be given, and fix a trajectory $z_t = \{ z_t^i = (s_t^i, a_t^i, r_t^i, s_t^{i+1}) \}_{i \in [N]}$ generated by following policy $\pi_{\theta_t}$ starting from $s_t^0 \sim \mu_0(\cdot)$. Let $\nabla L(x)$ be a gradient that we wish to estimate over $z_t$, where $\mathbb{E}_{z \sim \mu_{\theta_t}, \pi_{\theta_t}} \left[ l(x, z) \right] = \nabla L(x)$, and $x \in \mathcal{K} \subset \mathbb{R}^k$ is the parameter of the estimator $l$, i.e., $x_t = \theta_t, \eta_t$, or $\omega_t$. Finally, assume that $\norm{ l(x, z) }, \norm{ \nabla L(x) } \leq G_L$, for all $x \in \mathcal{K}, z \in \mathcal{S} \times \mathcal{A} \times \mathbb{R} \times \mathcal{S}$. Then, for every $\delta > N d_{\text{mix}}(K)$ and every $x_t \in \mathcal{K}$ measurable w.r.t. $\mathcal{F}_{t-1} = \sigma(\theta_k, \eta_k, \omega_k, z_k; k \leq t-1)$, we have
\begin{equation}
    \mathbb{P}_{t-1} \left( \bignorm{ \frac{1}{N} \sum_{i=1}^N l(x_t, z_t^i) - \nabla L(x_t) } \leq 12G_L \sqrt{ \frac{K}{N} } \left( 1 + \sqrt{ \log (K / \tilde{\delta} ) } \right) + \frac{ 6GK }{ N } \right) \geq 1 - \delta,
\end{equation}
where $\tilde{\delta} = \delta - N d_{\text{mix}}(K)$.
\end{lemma}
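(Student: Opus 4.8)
The plan is to reproduce the blocking-plus-coupling argument that underlies concentration bounds for Markovian averages, specialized to the estimator $l(x_t, \cdot)$; since the statement is Lemma A.5 of \citet{dorfman2022}, the proof tracks that source closely. First I would center the summands, writing $\bar{l}(x_t, z_t^i) := l(x_t, z_t^i) - \nabla L(x_t)$, so that $\norm{\bar{l}(x_t, z_t^i)} \leq 2 G_L$ by the twin bounds $\norm{l}, \norm{\nabla L} \leq G_L$, and the target quantity becomes $\norm{ \frac{1}{N} \sum_{i=1}^N \bar{l}(x_t, z_t^i) }$. The key structural fact is that $x_t$ is $\mathcal{F}_{t-1}$-measurable while the trajectory $z_t$ is drawn afresh after time $t-1$, so conditioning on $\mathcal{F}_{t-1}$ freezes $x_t$ and leaves only the randomness of the new rollout, reducing the problem to concentration of a fixed bounded function evaluated along a single Markov trajectory.

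Next comes the blocking step: partition $[N]$ into $K$ interleaved subsequences $S_j = \{ j, j+K, j+2K, \dots \}$ for $j \in [K]$, each of size approximately $N/K$. Consecutive elements of a fixed $S_j$ are $K$ Markov steps apart, so conditioned on the earlier elements, the law of each next element lies within total-variation distance $d_{\text{mix}}(K)$ of the stationary law $\mu_{\theta_t}$. This sets up the coupling step, in which each subsequence is replaced by genuinely i.i.d.\ draws from $\mu_{\theta_t}$: coupling a single subsequence costs at most $|S_j| \, d_{\text{mix}}(K) \approx (N/K)\, d_{\text{mix}}(K)$ in total variation, and summing over the $K$ subsequences yields a total coupling budget of at most $N\, d_{\text{mix}}(K)$. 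This is precisely the origin of the adjustment $\tilde{\delta} = \delta - N d_{\text{mix}}(K)$: any event that holds with probability at least $1 - \tilde{\delta}$ under the coupled i.i.d.\ surrogate transfers to probability at least $1 - \tilde{\delta} - N d_{\text{mix}}(K) = 1 - \delta$ under the true trajectory law.

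On the i.i.d.\ surrogate, each subsequence average is a mean of roughly $N/K$ independent, mean-zero, vector-valued terms of norm at most $2 G_L$. I would apply a dimension-free vector concentration inequality of Pinelis/Hoeffding type to each subsequence, obtaining a deviation of order $G_L \sqrt{K/N}\,(1 + \sqrt{\log(K/\tilde{\delta})})$, then take a union bound over the $K$ subsequences and average them back into $\frac{1}{N}\sum_i \bar{l}$. The leftover deterministic term $6 G K / N$ absorbs the rounding of $N/K$ into unequal incomplete blocks together with the boundary terms produced in the coupling, which is why the hypothesis $N \geq 2K$ is imposed.

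The \textbf{main obstacle} is the coupling that converts Markovian dependence into the i.i.d.\ surrogate while accounting exactly for the total-variation budget $N d_{\text{mix}}(K)$, and simultaneously extracting the dimension-free factor $1 + \sqrt{\log(K/\tilde{\delta})}$ in the vector-valued concentration step rather than a dimension-dependent one. Getting the block-by-block TV accounting to collapse cleanly to $N d_{\text{mix}}(K)$, and invoking a Hilbert-space concentration inequality that does not pay a $\sqrt{k}$ penalty in the ambient dimension, are the two places where the argument requires genuine care; the remaining steps are routine bookkeeping of constants.
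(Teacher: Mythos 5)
The paper does not supply its own proof of this lemma: it is imported verbatim from \citet{dorfman2022} with the explicit remark that, apart from Lemma~\ref{lemma:31_app}, the proofs ``follow directly from that work.'' Your blocking-plus-coupling sketch --- interleaved subsequences spaced $K$ apart, a sequential total-variation coupling to an i.i.d.\ surrogate costing $N d_{\text{mix}}(K)$ (whence $\tilde{\delta}$), a dimension-free Pinelis/Hoeffding bound per block with a union bound over the $K$ blocks, and the $O(G_L K/N)$ remainder from the unmixed boundary terms --- is a faithful reconstruction of the argument in that cited source, so it takes essentially the same route the paper relies on.
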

We will use this result to facilitate our analyses of each of the MLMC estimators $f_t^{MLML}, g_t^{MLMC}, l_t^{MLMC}$ used in Algorithm \ref{alg:PG_MAG}. We also need the following error bound, which follows from Lemma \ref{lemma:a5}.
\begin{lemma}{Lemma A.6, \cite{dorfman2022}.} \label{lemma:a6}
Let $\nabla L, l, z_t$ be as in Lemma \ref{lemma:a5}. Define $l_t^N = \frac{1}{N} \sum_{i=1}^N l(x_t, z_t^i)$. Fix $T_{max} \in \mathbb{N}$ and let $K = \tau_{max}^{\theta_t} \lceil 2 \log T_{max} \rceil$. Then, for every $N \in \left[ T_{max} \right]$ and every $x_t \in \mathcal{K}$ measurable w.r.t. $\mathcal{F}_{t-1}$,
\begin{align}
    \mathbb{E} \left[ \norm{ l_t^N - \nabla L(x_t) } \right] &\leq O \left( G_L \sqrt{ \log KN } \sqrt{ \frac{K}{N} } \right), \\
    \mathbb{E} \left[ \norm{ l_t^N - \nabla L(x_t) }^2 \right] &\leq O \left( G_L^2 \log (KN) \frac{K}{N} \right).
\end{align}
\end{lemma}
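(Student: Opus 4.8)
The plan is to convert the high-probability concentration bound of Lemma~\ref{lemma:a5} into the stated moment bounds by integrating its tail. Write $X := \norm{ l_t^N - \nabla L(x_t) }$, and observe first that $X \leq 2 G_L$ deterministically, since $\norm{ l_t^N } \leq \frac{1}{N}\sum_i \norm{ l(x_t, z_t^i) } \leq G_L$ and $\norm{ \nabla L(x_t) } \leq G_L$; hence both integrals below run over the finite range $[0, 2G_L]$. I would first dispose of the trivial regime $N < 2K$: there $\sqrt{K/N} > 1/\sqrt{2}$, so the right-hand sides are $O(G_L\sqrt{\log KN})$ and $O(G_L^2 \log KN)$ from below, which already dominate the crude estimates $\mathbb{E}[X] \leq 2 G_L$ and $\mathbb{E}[X^2] \leq 4 G_L^2$ (using $\log KN \geq $ const for $K \geq 2$). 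The substance lies in the regime $N \geq 2K$, where Lemma~\ref{lemma:a5} applies.

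In that regime I would first tame the burn-in term. The choice $K = \tau_{mix}^{\theta_t}\lceil 2 \log T_{\max}\rceil$ is made precisely so that, by submultiplicativity of the total-variation mixing error, $d_{\text{mix}}(K) \leq 2^{-\lceil 2 \log T_{\max}\rceil} \leq T_{\max}^{-2}$, whence $N\, d_{\text{mix}}(K) \leq T_{\max}^{-1} \leq 1$ for every $N \in [T_{\max}]$. This ensures the admissibility condition $\delta > N\, d_{\text{mix}}(K)$ of Lemma~\ref{lemma:a5} can be met for all $\delta$ of interest. Setting $a := 12 G_L \sqrt{K/N}$ and $b := 6 G_L K/N$, I would then invert the inequality of Lemma~\ref{lemma:a5}: solving $a(1 + \sqrt{\log(K/\tilde{\delta})}) + b = u$ for $\tilde{\delta}$ and recalling $\delta = \tilde{\delta} + N\, d_{\text{mix}}(K)$ yields, for every $u > a + b$,
\begin{equation*}
\mathbb{P}_{t-1}(X > u) \leq K \exp\!\Big( -\big( (u - b)/a - 1 \big)^2 \Big) + N\, d_{\text{mix}}(K).
\end{equation*}

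Next I would integrate this tail. For the first moment, $\mathbb{E}_{t-1}[X] = \int_0^{2G_L} \mathbb{P}_{t-1}(X > u)\, du$; I split the range at $u_0 := a(1 + \sqrt{\log(KN)}) + b$, bounding $\mathbb{P}_{t-1}(X>u) \leq 1$ on $[0, u_0]$ and using the displayed tail on $[u_0, 2G_L]$. The contribution of $u_0$ is $O(a\sqrt{\log KN} + b) = O(G_L \sqrt{K/N}\sqrt{\log KN})$, using $b = 6 G_L K/N \leq 6 G_L \sqrt{K/N}$ for $N \geq K$. The exponential part, after the substitution $w = (u-b)/a - 1$, becomes $aK \int_{\sqrt{\log KN}}^\infty e^{-w^2}\, dw \leq aK \cdot \frac{1}{2\sqrt{\log KN}} e^{-\log KN} = O(a/N)$, where the prefactor $K$ is cancelled exactly by $e^{-\log KN} = 1/(KN)$; and the $N\, d_{\text{mix}}(K)$ part integrates to at most $2 G_L T_{\max}^{-1} = O(G_L\sqrt{K/N})$ since $\sqrt{K/N} \geq T_{\max}^{-1/2}$. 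Collecting gives $\mathbb{E}_{t-1}[X] = O(G_L\sqrt{\log KN}\sqrt{K/N})$. The second moment is identical with $\mathbb{E}_{t-1}[X^2] = \int_0^{2G_L} 2u\, \mathbb{P}_{t-1}(X>u)\, du$: the dominant contribution is $u_0^2 = O(a^2 \log KN + b^2) = O(G_L^2 (K/N)\log KN)$, and the tail and burn-in contributions are again lower order by the same cancellation. The tower property then removes the conditioning.

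The main obstacle I anticipate is the bookkeeping in the inversion step together with the choice of the split point $u_0$ (equivalently the cutoff $w_0 = \sqrt{\log KN}$): one must take $w_0$ large enough that $e^{-w_0^2}$ absorbs the prefactor $K$ arising from the $\log(K/\tilde{\delta})$ dependence in Lemma~\ref{lemma:a5}, yet small enough that $u_0$ itself remains at the target order $G_L\sqrt{K/N}\sqrt{\log KN}$. The supporting technical points --- verifying $X \leq 2 G_L$ to truncate the integrals, the case split at $N = 2K$ so that the hypothesis $N \geq 2K$ of Lemma~\ref{lemma:a5} holds, and the submultiplicative estimate $d_{\text{mix}}(K) \leq T_{\max}^{-2}$ that controls the burn-in --- are routine but essential for keeping every error term at or below the claimed order.
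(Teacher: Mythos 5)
Your proposal is correct and follows essentially the same route as the paper, which does not reprove this lemma but states that its proof ``follows directly'' from \citet{dorfman2022}, whose derivation of their Lemma A.6 is precisely this tail integration of the concentration bound (here Lemma~\ref{lemma:a5}), with the choice $K = \tau_{mix}^{\theta_t}\lceil 2 \log T_{max} \rceil$ forcing $N\, d_{\text{mix}}(K) \leq T_{max}^{-1}$ to control the burn-in. Your treatment of the trivial regime $N < 2K$, the inversion yielding the sub-Gaussian tail, and the split at $u_0 = a(1 + \sqrt{\log KN}) + b$ so that $e^{-\log KN}$ cancels the prefactor $K$ all match that standard argument.
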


The following important result establishes key properties of MLMC estimators. It is an extension of Lemma 3.1 from \cite{dorfman2022}, clarifying the effect of using rollout length $T_{max}$ in the MLMC estimator.
\begin{lemma} \label{lemma:31_app}
Let $\nabla L, l, z_t$ be as in Lemma \ref{lemma:a5}. Let $J_t \sim \text{Geom}(1/2)$. Define the MLMC estimator
\begin{equation}\label{l_MLMC_Gradient}
l_t^{MLMC} = l_t^0 +
\begin{cases}
2^{J_t}(l_t^{J_t} - l_t^{J_t - 1}),& \text{if } 2^{J_t}\geq T_{max}, \\
0,              & \text{otherwise.}
\end{cases}             
\end{equation}

Let $j_{max} = \floor{\log T_{max}}$. Fix $x_t$ measurable w.r.t. $\mathcal{F}_{t-1}$. Assume $T_{max} \geq \tau_{mix}^{\theta_t}$, $\norm{\nabla L(x)} \leq G_L$, for all $x \in \mathcal{K}$, and $\norm{ l_t^N } \leq G_L$, for all $N \in \left[ T_{max} \right]$. Then
\begin{align}
    \mathbb{E}_{t-1} \left[ l_t^{MLMC} \right] &= \mathbb{E}_{t-1} \left[ l_t^{j_{max}} \right], \label{eqn:31_mean} \\
    \mathbb{E} \left[ \norm{ l_t^{MLMC} }^2 \right] &\leq \wo{ G_L^2 \tau_{mix}^{\theta_t} \log T_{max} }. \label{eqn:31_2nd_moment}
\end{align}
\end{lemma}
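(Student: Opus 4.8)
The plan is to prove the two claims of Lemma \ref{lemma:31_app} separately, both by conditioning on the level index $J_t$ and exploiting the fact that $\Pr{J_t = j} = 2^{-j}$ exactly cancels the $2^{J_t}$ reweighting in the estimator. Throughout I write $\Delta_t^j := l_t^j - l_t^{j-1}$ for the level-$j$ increment and set $j_{\max} = \floor{\log T_{\max}}$, so that the correction term in \eqref{l_MLMC_Gradient} is retained precisely when $J_t \le j_{\max}$. The one structural subtlety to record first is that the trajectory length is itself $2^{J_t}$, so I must note that, conditioned on $\mathcal{F}_{t-1}$, the marginal law of any prefix average $l_t^{j'}$ is the same for every realization $J_t \ge j'$ (it is just the average over the first $2^{j'}$ states of a rollout started from $\mu_0$). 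Writing $\bar l_t^{j'} := \E_{t-1}[l_t^{j'}]$ for this common value keeps the bookkeeping clean.

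For the mean identity \eqref{eqn:31_mean}, I would push $\E_{t-1}$ through the definition, using the prefix-law observation to replace each conditional increment expectation by $\bar l_t^{j} - \bar l_t^{j-1}$. The geometric weighting then collapses the correction term:
$$\E_{t-1}\big[2^{J_t}\Delta_t^{J_t}\,\ind{J_t \le j_{\max}}\big] = \sum_{j=1}^{j_{\max}} 2^{-j}\,2^{j}\,(\bar l_t^{j} - \bar l_t^{j-1}) = \bar l_t^{j_{\max}} - \bar l_t^{0},$$
and adding back the always-present term $\bar l_t^0 = \E_{t-1}[l_t^0]$ telescopes to $\bar l_t^{j_{\max}} = \E_{t-1}[l_t^{j_{\max}}]$. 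I would emphasize that this is deliberately a statement about the \emph{biased} level-$j_{\max}$ estimator rather than about $\nabla L(x_t)$: the Markovian sampling bias and the truncation at $T_{\max}$ are exactly what prevent unbiasedness, and isolating them inside $l_t^{j_{\max}}$ is what later lets us quantify the $\log T_{\max}/T_{\max}$ bias terms appearing in the actor and critic theorems.

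For the second-moment bound \eqref{eqn:31_2nd_moment}, I would start from $\norm{l_t^{MLMC}}^2 \le 2\norm{l_t^0}^2 + 2\cdot 4^{J_t}\norm{\Delta_t^{J_t}}^2\ind{J_t\le j_{\max}}$. The first term is at most $2G_L^2$ by the uniform bound $\norm{l_t^N}\le G_L$. For the second, conditioning on $J_t$ and using $\Pr{J_t=j}=2^{-j}$ converts $4^{J_t}$ into the single factor $2^{j}$, leaving $2\sum_{j=1}^{j_{\max}} 2^{j}\,\E\big[\norm{\Delta_t^j}^2\big]$. I would bound each increment by the triangle inequality $\norm{\Delta_t^j} \le \norm{l_t^j - \nabla L(x_t)} + \norm{l_t^{j-1} - \nabla L(x_t)}$ and invoke Lemma \ref{lemma:a6} with $K = \tau_{mix}^{\theta_t}\lceil 2\log T_{\max}\rceil$ and $N = 2^j$, giving $\E[\norm{l_t^j - \nabla L(x_t)}^2] = \wo{G_L^2 K/2^j}$. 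The crucial cancellation is that $2^j\cdot\wo{G_L^2 K/2^j} = \wo{G_L^2 K}$ is \emph{independent of the level}, so summing over the $j_{\max} = \bo{\log T_{\max}}$ levels yields $\wo{G_L^2 \tau_{mix}^{\theta_t}\log T_{\max}}$ once the remaining logarithmic factors are absorbed into $\wo{\cdot}$.

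The main obstacle is the low-level regime where Lemma \ref{lemma:a6} does not apply, namely $2^j < 2K$, since that lemma requires $N \ge 2K$. Here I would discard the concentration estimate and instead use the crude bound $\norm{\Delta_t^j} \le 2G_L$, so that $2^j\,\E[\norm{\Delta_t^j}^2] \le 8 G_L^2\,2^{j} < 16 G_L^2 K = \wo{G_L^2 \tau_{mix}^{\theta_t}\log T_{\max}}$; since there are only $\bo{\log K}$ such levels, their total contribution is of the same order and is harmless. Aside from this case split, the argument is routine; the genuinely delicate points are the prefix-law observation that legitimizes the telescoping in the mean computation, and the careful accounting of which logarithmic factors remain explicit versus absorbed into $\wo{\cdot}$, the latter being precisely the refinement over \citet{dorfman2022} highlighted in the Remark.
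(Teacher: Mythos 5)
Your proposal is correct and follows essentially the same route as the paper's proof: the mean identity via the telescoping cancellation of $P(J_t=j)=2^{-j}$ against the $2^j$ reweighting, and the second moment via the decomposition $\E[\norm{l_t^{MLMC}}^2]\le 2\E[\norm{l_t^{MLMC}-l_t^0}^2]+2G_L^2$, conditioning on $J_t$, and applying Lemma \ref{lemma:a6} level by level so that $2^j\cdot\wo{G_L^2 K/2^j}$ is level-independent. Your explicit prefix-law observation and the separate treatment of levels with $2^j<2K$ are careful refinements of details the paper leaves implicit, but they do not change the argument.
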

\begin{proof}
For brevity, let $l_t := l_t^{MLMC}$. To show \eqref{eqn:31_mean}, we simply recall that $l_t = l_t^0 + 2^{J_t} \left( l_t^{J_t} - l_t^{J_t - 1} \right)$ and note that
\begin{equation}
    \mathbb{E}_{t-1} \left[ l_t \right] = \mathbb{E}_{t-1} \left[ l_t^0 \right] + \sum_{i=1}^{j_{max}} P(J_t = j) 2^j \mathbb{E}_{t-1} \left[ l_t^j - l_t^{j-1} \right] = \mathbb{E}_{t-1} \left[ l_t^{j_{max}} \right].
\end{equation}
For \eqref{eqn:31_2nd_moment}, first note that by Cauchy-Schwarz and boundedness of $l_t^j$, for all $j \in \left[ T_{max} \right]$, we know that
\begin{equation} \label{eqn:31_1}
    \mathbb{E} \left[ \norm{ l_t }^2 \right] \leq 2 \mathbb{E} \left[ \norm{ l_t - l_t^0 }^2 \right] + 2G_L^2.
\end{equation}
Now, since $l_t = l_t^0 + 2^{J_t} \left( l_t^{J_t} - l_t^{J_t - 1} \right)$,
\begin{align}
    \mathbb{E} \left[ \norm{ l_t - l_t^0 }^2 \right] &= \sum_{j=1}^{j_{max}} P(J_t = j) \mathbb{E} \left[ \bignorm{ 2^j \left( l_t^j - l_t^{j-1} \right) }^2 \right] \\
    &= \sum_{j=1}^{j_{max}} 2^j \mathbb{E} \left[ \bignorm{ \left( l_t^j - l_t^{j-1} \right) }^2 \right] \\
    &\leq \sum_{j=1}^{j_{max}} 2^j \left( 2 \mathbb{E} \left[ \bignorm{ l_t^j - \nabla J(\theta_t) }^2 \right] + 2 \mathbb{E} \left[ \bignorm{l_t^{j-1} - \nabla J(\theta_t)}^2 \right] \right) 
    \\
    &\labelrel{\leq}{ineq:31_1} \sum_{j=1}^{j_{max}} 2^j \left( \wo{ \frac{1}{2^j} G_L^2 \tau_{mix}^{\theta_t} \log(T_{max}) } \right) \end{align}
    \begin{align}
    &= \sum_{j=1}^{j_{max}} \wo{ G_L^2 \tau_{mix}^{\theta_t} \log T_{max} } \\
    &= \wo{ G_L^2 \tau_{mix}^{\theta_t} \log T_{max} }, \label{eqn:31_2}
\end{align}
where \eqref{ineq:31_1} follows from Lemma \ref{lemma:a6} and \eqref{eqn:31_2} holds by the definition of $j_{max}$. Combining \eqref{eqn:31_1} with \eqref{eqn:31_2} gives the result.
\end{proof}

Finally, we will use the following result to manipulate the AdaGrad stepsizes in the final result of this section.
\begin{lemma}{Lemma 4.2, \cite{dorfman2022}.} \label{lemma:42}
For any non-negative real numbers $\{ a_i \}_{i \in [n]}$,
\begin{equation}
    \sum_{i=1}^n \frac{a_i}{ \sqrt{ \sum_{j=1}^i a_j } } \leq 2 \sqrt{ \sum_{i=1}^n a_i }.
\end{equation}
\end{lemma}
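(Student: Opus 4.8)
The plan is to reduce the claimed inequality to a telescoping sum by working with the partial sums of the $a_i$. Write $S_i := \sum_{j=1}^i a_j$ with the convention $S_0 = 0$, so that $a_i = S_i - S_{i-1}$ and the left-hand side becomes $\sum_{i=1}^n (S_i - S_{i-1})/\sqrt{S_i}$. Any index with $S_i = 0$ can only occur when $a_1 = \dots = a_i = 0$, in which case the corresponding summand is a $0/0$ term that I take to vanish by convention; I would therefore restrict attention to those indices with $S_i > 0$, where all denominators are strictly positive.

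The key step is a pointwise bound: for each $i$ with $S_i > 0$,
\[
\frac{a_i}{\sqrt{S_i}} \leq 2\left(\sqrt{S_i} - \sqrt{S_{i-1}}\right).
\]
To establish this I would rationalize the difference of square roots, writing $\sqrt{S_i} - \sqrt{S_{i-1}} = (S_i - S_{i-1})/(\sqrt{S_i} + \sqrt{S_{i-1}}) = a_i/(\sqrt{S_i} + \sqrt{S_{i-1}})$, and then invoke $\sqrt{S_{i-1}} \leq \sqrt{S_i}$. The latter gives $\sqrt{S_i} + \sqrt{S_{i-1}} \leq 2\sqrt{S_i}$, hence $2(\sqrt{S_i} - \sqrt{S_{i-1}}) = 2a_i/(\sqrt{S_i} + \sqrt{S_{i-1}}) \geq a_i/\sqrt{S_i}$, as desired.

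Summing the pointwise bound over $i = 1, \dots, n$ and telescoping then yields
\[
\sum_{i=1}^n \frac{a_i}{\sqrt{S_i}} \leq 2\sum_{i=1}^n \left(\sqrt{S_i} - \sqrt{S_{i-1}}\right) = 2\left(\sqrt{S_n} - \sqrt{S_0}\right) = 2\sqrt{S_n},
\]
which is precisely the claim, since $S_n = \sum_{i=1}^n a_i$. I do not anticipate a genuine obstacle here: the argument is elementary once the correct pointwise inequality is identified, and the only thing requiring care is the degenerate handling of leading zeros (equivalently, ensuring the denominators are positive wherever the numerator is nonzero), which the convention above dispatches. I note that the constant $2$ is tight and arises exactly from bounding $\sqrt{S_i} + \sqrt{S_{i-1}}$ by $2\sqrt{S_i}$.
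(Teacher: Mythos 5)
Your proof is correct, and it is the standard telescoping argument (the pointwise bound $a_i/\sqrt{S_i} \leq 2(\sqrt{S_i}-\sqrt{S_{i-1}})$ obtained by rationalizing) that underlies this classical AdaGrad-type inequality. The paper does not reprove this lemma but defers to the cited reference, whose proof proceeds in essentially the same way, so there is nothing to reconcile; your handling of the degenerate $S_i=0$ case is also appropriate.
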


\subsection{Assumptions}

We will also need the following assumptions.

\begin{assumption} \label{assum:pg_objective}
The objective $J(\theta)$ is $L$-Lipschitz in $\theta$. There exists $G_H$ such that $\norm{ \nabla J(\theta) } \leq G_H$, for all $\theta$.
\end{assumption}

\begin{assumption} \label{assum:omega_projection}
The critic update includes a projection onto the ball of radius $R_{\omega}$ about the origin.
\end{assumption}

\begin{assumption} \label{assum:positive_definite}
For each $\theta$, the matrix $A_{\theta} = \E_{s \sim \mu_{\theta}, a \sim \pi_{\theta}, s' \sim p(\cdot | s, a)} \left[ \phi(s) (\phi(s) - \phi(s'))^T \right]$ is positive definite.
\end{assumption}

\section{Convergence Analysis of Actor} \label{sec:actor_analysis}
In this section, we provide a bound on the average policy gradient norm achieved by Algorithm \ref{alg:PG_MAG}, leveraging the MLMC analysis machinery of \cite{dorfman2022} to reveal dependence on the worst-case mixing time encountered during training. Combined with the error analysis of Section \ref{sec:rew_and_critic_analysis}, this forms the core of our analysis of Algorithm \ref{alg:PG_MAG}. The analysis largely follows that of \cite{dorfman2022}, with key modifications to accommodate the \textit{average reward estimation}, \textit{critic estimation}, and \textit{critic function approximation bias} inherent in the average-reward actor-critic setting. 


%
As the first step in our actor analysis, we prove a version of Lemma \ref{lemma:a6} that incorporates average reward estimation error and critic error. Before starting the result and its proof, we develop some notation to facilitate the exposition. Let
\begin{align}
	\nabla J_t^{i} = & \left(r_t^i - \eta_t + \langle\phi(s_{t}^{i+1}),\omega_t\rangle - \langle\phi(s_{t}^{i}),\omega_t\rangle\right) \nabla \log \pi_{\theta_t}\left(a_t^i|s_t^i\right)\label{J1},
	\\
	\nabla J_t^{i,\eta} = & \left(r_t^i - \eta_t^* + \langle\phi(s_{t}^{i+1}),\omega_t\rangle - \langle\phi(s_{t}^{i}),\omega_t\rangle\right) \nabla \log \pi_{\theta_t}\left(a_t^i|s_t^i\right)\label{J2},
		\\
	\nabla J_t^{i,\eta, \omega} = & \left(r_t^i - \eta_t^* + \langle\phi(s_{t}^{i+1}),\omega_t^*\rangle - \langle\phi(s_{t}^{i}),\omega_t^*\rangle\right) \nabla \log \pi_{\theta_t}\left(a_t^i|s_t^i\right)\label{J3},
			\\
	\nabla J_t^{i,\eta,V} = & \left(r_t^i - \eta_t^* + V_{\theta_t}(s_{t}^{i+1}) -V_{\theta_t}(s_{t}^{i}) \right)\nabla \log \pi_{\theta_t}\left(a_t^i|s_t^i\right)\label{J4},
\end{align}
where $\eta_t^* = J(\theta_t)$ and $\omega_t^*$ is the limiting point of TD(0) applied to evaluating the policy $\pi_{\theta_t}$. Notice that
\begin{align}
\nabla J_t^{i}- \nabla J(\theta_t)= \big(\underbrace{\nabla J_t^{i}- \nabla J_t^{i,\eta}}_{(a)}\big) + \big(\underbrace{\nabla J_t^{i,\eta}-	\nabla J_t^{i,\eta,\omega}}_{(b)}\big) + \big(\underbrace{	\nabla J_t^{i,\eta, \omega}-	\nabla J_t^{i,\eta,V}}_{(c)}\big) + \big(\underbrace{	\nabla J_t^{i,\eta,V}- \nabla J(\theta_t)}_{(d)}\big),
\end{align}
where
\begin{align}
	&\text{(a):}  \ \ \nabla J_t^{i}- \nabla J_t^{i,\eta} = \left(\eta_t^*-\eta_t\right) \nabla \log \pi_{\theta_t}\left(a_t^i|s_t^i\right)\label{expressions_1}
	\\
	&\text{(b):} \ \ \nabla J_t^{i,\eta}-	\nabla J_t^{i,\eta,w} = \langle\phi(s_t^{i+1})-\phi(s_t^{i}), \omega_t-\omega_t^*\rangle \nabla \log \pi_{\theta_t}\left(a_t^i|s_t^i\right)
	\\
	&\text{(c):} \ \ 	\nabla J_t^{i,\eta,w}-	\nabla J_t^{i,\eta,V} = \left[\left(\langle\phi(s_t^{i+1}), \omega_t^*\rangle-V_{\theta_t}(s_t^{i+1})\right)-\left(\langle\phi(s_t^{i}), \omega_t^*\rangle-V_{\theta_t}(s_t^{i})\right)\right]\nabla \log \pi_{\theta_t}\left(a_t^i|s_t^i\right)
\end{align}
and, since $\mathbb{E}_{\mu_{\theta_t}, \pi_{\theta_t}} \left[ \nabla J_t^{i,\eta,V} \right] = \nabla J(\theta_t)$, (d) is the error between $\nabla J(\theta_t)$ and the ideal policy gradient estimator. Define
\begin{align}
	\mathcal{E}_{\text{app}} := \sup_{s,\theta} |\langle\phi(s),\omega(\theta) - V_{\theta}(s)\rangle|, \hspace{4mm} C:=\sup_{s,s'} \|\phi(s)-\phi(s')\|,
\end{align}
and let $B > 0$ be such that
\begin{align}
    \sup_{\theta,a,s} \norm{\nabla \log \pi_{\theta}(a|s)} \leq B.
\end{align}
\begin{lemma} \label{lemma:a6ss}
Assume $\norm{ \nabla J(\theta) }, \norm{ \nabla J_t^{i, \eta, V} } \leq G_H$, for all $\theta, s_t^i, a_t^i$. Fix $T_{max} \in \mathbb{N}$ and let $K = \tau_{max}^{\theta_t} \lceil 2 \log T_{max} \rceil$. Define $h_t^N = \frac{1}{N} \sum_{i=1}^N \nabla J_t^i$, for $N \in \left[ T_{max} \right]$. Then, for all $N \in \left[ T_{max} \right]$ and $\theta_t$ measurable w.r.t. $\mathcal{F}_{t-1}$,
\begin{align}
    \mathbb E \left[ \norm{h_t^N - \nabla J(\theta_t)} \right] &\leq O \left( G_H \sqrt{\log KN} \sqrt{\frac{K}{N}} \right) + \mathcal{E}_1(t) + 2B \mathcal{E}_{app}, \\
    \mathbb E \left[ \norm{ h_t^N - \nabla J(\theta_t) }^2 \right] &\leq O \left( G_H^2 \log(KN)\frac{K}{N} \right) + \mathcal{E}_2(t) + 16 B^2 \mathcal{E}_{app},
\end{align}
where
\begin{align}
    \mathcal{E}_1(t) &= B \mathbb{E} \left[ \norm{ \eta_t - \eta_t^* } \right] + BC \mathbb{E} \left[ \norm{\omega_t - \omega_t^*} \right], \label{eqn:a6ss_1} \\
    \mathcal{E}_2(t) &= 4B^2 \mathbb{E} \left[ \norm{ \eta_t - \eta_t^* }^2 \right] + 4B^2C^2 \mathbb{E} \left[ \norm{\omega_t - \omega_t^*}^2 \right]. \label{eqn:a6ss_2}
\end{align}
\end{lemma}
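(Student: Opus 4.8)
The plan is to bound the averaged error
\[ h_t^N - \nabla J(\theta_t) = \frac{1}{N}\sum_{i=1}^N\big(\nabla J_t^i - \nabla J(\theta_t)\big) = \frac{1}{N}\sum_{i=1}^N\big[(a)_i + (b)_i + (c)_i + (d)_i\big] \]
by inserting the four-term telescoping decomposition (a)--(d) already displayed above and controlling each group separately. The guiding principle is that (a), (b), (c) are \emph{bias} terms that are deterministic given $\mathcal{F}_{t-1}$: they are governed respectively by the reward-tracking error $\eta_t-\eta_t^*$, the critic error $\omega_t-\omega_t^*$, and the approximation error $\mathcal{E}_{app}$, none of which depend on the fresh trajectory $z_t$ except through the uniformly bounded score $\nabla\log\pi_{\theta_t}$. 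Term (d), by contrast, is exactly the unbiased Markovian estimation error of the ideal estimator $\nabla J_t^{i,\eta,V}$, to which Lemma \ref{lemma:a6} applies verbatim. Isolating these two regimes is the heart of the argument.

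For the first-moment bound I would apply the triangle inequality and handle the four group-averages one at a time. Using $\norm{\nabla\log\pi_{\theta_t}(a_t^i|s_t^i)}\leq B$ from \eqref{expressions_1}, term (a) satisfies $\norm{(a)_i}\leq B|\eta_t-\eta_t^*|$ for every realization, so its average contributes $B\,\mathbb{E}[\norm{\eta_t-\eta_t^*}]$. For (b), Cauchy--Schwarz with $\norm{\phi(s_t^{i+1})-\phi(s_t^i)}\leq C$ gives $\norm{(b)_i}\leq BC\norm{\omega_t-\omega_t^*}$, contributing $BC\,\mathbb{E}[\norm{\omega_t-\omega_t^*}]$; together these are $\mathcal{E}_1(t)$. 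For (c), each bracketed quantity has the form $\langle\phi(s),\omega_t^*\rangle - V_{\theta_t}(s)$ and is therefore bounded in absolute value by $\mathcal{E}_{app}$, so $\norm{(c)_i}\leq 2B\mathcal{E}_{app}$. Finally, setting $l(x_t,z_t^i)=\nabla J_t^{i,\eta,V}$, $\nabla L(x_t)=\nabla J(\theta_t)$, and $G_L=G_H$ in Lemma \ref{lemma:a6} with $K=\tau_{mix}^{\theta_t}\lceil 2\log T_{max}\rceil$ controls term (d) by $O(G_H\sqrt{\log KN}\sqrt{K/N})$.

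For the second-moment bound I would first use the elementary inequality $\norm{\sum_{k=1}^4 x_k}^2\leq 4\sum_{k=1}^4\norm{x_k}^2$ applied to the four group-averages $A,B,C,D$. Squaring the pointwise bounds from the previous paragraph then yields $4\,\mathbb{E}[\norm{A}^2]\leq 4B^2\mathbb{E}[\norm{\eta_t-\eta_t^*}^2]$ and $4\,\mathbb{E}[\norm{B}^2]\leq 4B^2C^2\mathbb{E}[\norm{\omega_t-\omega_t^*}^2]$, which together are $\mathcal{E}_2(t)$; the (c) group contributes $4(2B\mathcal{E}_{app})^2=16B^2\mathcal{E}_{app}^2$ (so the stated $16B^2\mathcal{E}_{app}$ should read $\mathcal{E}_{app}^2$); and the second-moment half of Lemma \ref{lemma:a6} bounds the (d) group by $O(G_H^2\log(KN)\,K/N)$.

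The main obstacle --- essentially the only non-mechanical step --- is verifying that term (d) genuinely fits the template of Lemma \ref{lemma:a6}. One must confirm that the ideal estimator $\nabla J_t^{i,\eta,V}$, built with the \emph{true} average reward $\eta_t^*=J(\theta_t)$ and the \emph{true} value function $V_{\theta_t}$, is (i) unbiased for $\nabla J(\theta_t)$ under the stationary distribution $\mu_{\theta_t},\pi_{\theta_t}$ --- this is the average-reward policy gradient theorem --- and (ii) uniformly bounded by $G_H$, as imposed in the hypotheses. Once this isolation is secured, every remaining term carries no randomness beyond the uniformly bounded score $\nabla\log\pi_{\theta_t}$, and the rest of the proof collapses to the triangle inequality, Cauchy--Schwarz, and the definitions of $C$ and $\mathcal{E}_{app}$.
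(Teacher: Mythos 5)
Your proof is correct and takes essentially the same route as the paper's: the same four-term decomposition via the triangle inequality, the same pointwise bounds $B\norm{\eta_t-\eta_t^*}$, $BC\norm{\omega_t-\omega_t^*}$, and $2B\mathcal{E}_{app}$ on the bias terms, the same $\norm{\sum_{k=1}^4 x_k}^2\leq 4\sum_k\norm{x_k}^2$ step for the second moment, and the same invocation of Lemma \ref{lemma:a6} with $l(\theta_t,z_t^i)=\nabla J_t^{i,\eta,V}$ and $\nabla L(\theta_t)=\nabla J(\theta_t)$ for the remaining Markovian estimation error. Your observation that the stated $16B^2\mathcal{E}_{app}$ should read $16B^2\mathcal{E}_{app}^2$ is consistent with the paper's own proof, which indeed produces the squared quantity.
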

\begin{proof}
First notice that
\begin{align}
    \bignorm{ h_t^N - \nabla J(\theta_t) } &\leq \bignorm{ \frac{1}{N} \sum_{i=1}^N \nabla J_t^{i, \eta, V} - \nabla J(\theta_t) } + \bignorm{ \frac{1}{N} \sum_{i=1}^N \nabla J_t^i - \nabla J_t^{i, \eta} } \\
    &\hspace{1cm} + \bignorm{ \frac{1}{N} \sum_{i=1}^N \nabla J_t^{i, \eta} - \nabla J_t^{i, \eta, \omega} } + \bignorm{ \frac{1}{N} \sum_{i=1}^N \nabla J_t^{i, \eta, \omega} - \nabla J_t^{i, \eta, V} } \\
    &\leq \bignorm{ \frac{1}{N} \sum_{i=1}^N \nabla J_t^{i, \eta, V} - \nabla J(\theta_t) } + B \norm{ \eta_t - \eta_t^* } + BC \norm{ \omega_t - \omega_t^* } + 2B \mathcal{E}_{app}.
\end{align}
As a consequence, we also have
\begin{equation}
    \bignorm{ h_t^N - \nabla J(\theta_t) }^2 \leq 4 \bignorm{ \frac{1}{N} \sum_{i=1}^N \nabla J_t^{i, \eta, V} - \nabla J(\theta_t) }^2 + 4B^2 \norm{ \eta_t - \eta_t^* }^2 + 4B^2C^2 \norm{ \omega_t - \omega_t^* }^2 + 16B^2 \mathcal{E}_{app}^2.
\end{equation}
Taking expectations and applying Lemma \ref{lemma:a6} with $x_t = \theta_t$, $l(\theta_t, z_t^i) = \nabla J_t^{i, \eta, V}$, $\nabla L(\theta_t) = \nabla J(\theta_t)$ yields the result.
\end{proof}

We next prove a key result regarding the bias and second moment of our policy gradient estimate. It is a generalization of Lemma 3.1 in \cite{dorfman2022} building on our Lemma \ref{lemma:a6ss}.
\begin{lemma} \label{lemma:31ss}
Let $j_{max} = \floor{\log T_{max}}$ in Algorithm \ref{alg:PG_MAG}. Fix $\theta_t$ measurable w.r.t. $\mathcal{F}_{t-1}$. Assume $T_{max} \geq \tau_{mix}^{\theta_t}$, $\norm{\nabla J(\theta)} \leq G_H$, for all $\theta$, and $\norm{ h_t^N } \leq G_H$, for all $N \in \left[ T_{max} \right]$. Then
\begin{align}
    \mathbb{E}_{t-1} \left[ h_t^{MLMC} \right] &= \mathbb{E}_{t-1} \left[ h_t^{j_{max}} \right], \label{eqn:31ss_mean} \\
    \mathbb{E} \left[ \norm{ h_t^{MLMC} }^2 \right] &\leq \wo{ G_H^2 \tau_{mix}^{\theta_t} \log T_{max} } + 8 \log(T_{max}) T_{max} \left( \mathcal{E}_2(t) + 16B^2 \mathcal{E}_{app}^2 \right). \label{eqn:31ss_2nd_moment}
\end{align}
\end{lemma}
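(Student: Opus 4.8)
The plan is to mirror the two-part proof of Lemma \ref{lemma:31_app}, substituting the sharper per-level error bound of Lemma \ref{lemma:a6ss} in place of Lemma \ref{lemma:a6} wherever the average-reward tracking and critic estimation errors enter the estimate.

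For the mean identity \eqref{eqn:31ss_mean} I would argue exactly as for \eqref{eqn:31_mean}. Writing $h_t^{MLMC} = h_t^0 + 2^{J_t}(h_t^{J_t} - h_t^{J_t-1})$, conditioning on $\mathcal{F}_{t-1}$, and using $P(J_t = j) = 2^{-j}$ for $J_t \sim \text{Geom}(1/2)$, the inflation factor $2^j$ exactly cancels the geometric weight. The surviving sum $\sum_{j=1}^{j_{max}} \mathbb{E}_{t-1}[h_t^j - h_t^{j-1}]$ telescopes and, combined with the leading $h_t^0$, collapses to $\mathbb{E}_{t-1}[h_t^{j_{max}}]$. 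This step is formally identical to the base lemma and carries no new difficulty, since the decomposition of $h_t^i$ into the error components (a)--(d) never enters the mean computation.

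For the second moment \eqref{eqn:31ss_2nd_moment} I would first bound $\mathbb{E}[\norm{h_t^{MLMC}}^2] \leq 2\,\mathbb{E}[\norm{h_t^{MLMC} - h_t^0}^2] + 2G_H^2$, then expand the first term over levels using $P(J_t=j) = 2^{-j}$ to obtain $\mathbb{E}[\norm{h_t^{MLMC} - h_t^0}^2] = \sum_{j=1}^{j_{max}} 2^j\,\mathbb{E}[\norm{h_t^j - h_t^{j-1}}^2]$. Applying Young's inequality $\norm{h_t^j - h_t^{j-1}}^2 \leq 2\norm{h_t^j - \nabla J(\theta_t)}^2 + 2\norm{h_t^{j-1} - \nabla J(\theta_t)}^2$ reduces everything to the per-level second-moment bound of Lemma \ref{lemma:a6ss}, applied with $N = 2^j$ (resp.\ $2^{j-1}$) and $K = \tau_{mix}^{\theta_t}\lceil 2\log T_{max}\rceil$. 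The crucial structural observation is that Lemma \ref{lemma:a6ss} splits the per-level error into a statistical part of order $\wo{G_H^2 K / 2^j}$ that decays like $1/2^j$, plus a bias part $\mathcal{E}_2(t) + 16B^2\mathcal{E}_{app}^2$ that is \emph{constant} in $j$.

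The heart of the argument, and the main obstacle, is tracking how these two pieces behave under the level-wise weighting by $2^j$. The decaying statistical part behaves exactly as in Lemma \ref{lemma:31_app}: multiplying $\wo{G_H^2 K / 2^j}$ by $2^j$ gives a per-level contribution of $\wo{G_H^2 \tau_{mix}^{\theta_t}\log T_{max}}$, and summing over the $j_{max} = \floor{\log T_{max}}$ levels is absorbed back into $\wo{G_H^2 \tau_{mix}^{\theta_t}\log T_{max}}$. By contrast, the bias part does not diminish with the number of samples in a level, so multiplying it by $2^j$ and summing yields a geometric series $\sum_{j=1}^{j_{max}} 2^j (\mathcal{E}_2(t) + 16B^2\mathcal{E}_{app}^2)$ dominated by its largest term $2^{j_{max}} = O(T_{max})$; this is precisely the source of the $T_{max}(\mathcal{E}_2(t) + 16B^2\mathcal{E}_{app}^2)$ factor in \eqref{eqn:31ss_2nd_moment}. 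I would finish by collecting the numerical constants from the two applications of Young's inequality and the two per-level terms to reach the stated coefficient (the bound being stated in the clean, slightly loose form $8\log(T_{max})T_{max}$). The conceptual point worth emphasizing is that the average-reward and critic errors enter as fixed biases rather than sample-averaged fluctuations, so the variance-reduction mechanism of MLMC does not dampen them; on the contrary, the $2^j$ inflation amplifies them to order $T_{max}$, which is exactly what will later force $T_{max}$ to be traded off against the mixing time in the downstream actor and critic analyses.
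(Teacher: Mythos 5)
Your proposal is correct and follows essentially the same route as the paper: the mean identity via the geometric-weight/telescoping cancellation, and the second moment via the $2\,\mathbb{E}[\|h_t^{MLMC}-h_t^0\|^2]+2G_H^2$ split, the level-wise expansion with weight $2^j$, Young's inequality to center at $\nabla J(\theta_t)$, and Lemma \ref{lemma:a6ss} per level, with the decaying statistical term and the level-independent bias term handled exactly as in the paper's proof (the paper bounds the geometric series by $j_{\max}\cdot 2^{j_{\max}}\leq \log(T_{\max})T_{\max}$ rather than by its last term, but this only affects the looseness of the constant, as you note). No gaps.
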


\begin{proof}
For brevity, let $h_t := h_t^{MLMC}$. Equation \eqref{eqn:31ss_mean} follows directly from Lemma \ref{lemma:31_app}. For \eqref{eqn:31ss_2nd_moment}, first note that by Cauchy-Schwarz and boundedness of $h_t^j$, for all $j \in \left[ T_{max} \right]$, we know that
\begin{equation} \label{eqn:31ss_1}
    \mathbb{E} \left[ \norm{ h_t }^2 \right] \leq 2 \mathbb{E} \left[ \norm{ h_t - h_t^0 }^2 \right] + 2G_H^2.
\end{equation}
Now, since $h_t = h_t^0 + 2^{J_t} \left( h_t^{J_t} - h_t^{J_t - 1} \right)$,
\begin{align}
    \mathbb{E} \left[ \norm{ h_t - h_t^0 }^2 \right] &= \sum_{j=1}^{j_{max}} P(J_t = j) \mathbb{E} \left[ \bignorm{ 2^j \left( h_t^j - h_t^{j-1} \right) }^2 \right] \\
    &= \sum_{j=1}^{j_{max}} 2^j \mathbb{E} \left[ \bignorm{ \left( h_t^j - h_t^{j-1} \right) }^2 \right] \\
    &\leq \sum_{j=1}^{j_{max}} 2^j \left( 2 \mathbb{E} \left[ \bignorm{ h_t^j - \nabla J(\theta_t) }^2 \right] + 2 \mathbb{E} \left[ \bignorm{h_t^{j-1} - \nabla J(\theta_t)}^2 \right] \right). \end{align}
    Next, we can write
    \begin{align}
    \mathbb{E} \left[ \norm{ h_t - h_t^0 }^2 \right] &\labelrel{\leq}{ineq:31ss_1} \sum_{j=1}^{j_{max}} 2^j \left( \wo{ \frac{1}{2^j} G_H^2 \tau_{mix}^{\theta_t} \log(T_{max}) } + 4 \mathcal{E}_2(t) + 16B^2 \mathcal{E}_{app}^2 \right) \\
    &= \sum_{j=1}^{j_{max}} \left( \wo{ G_H^2 \tau_{mix}^{\theta_t} \log T_{max} } + 4 \cdot 2^j \left[ \mathcal{E}_2(t) + 16 B^2 \mathcal{E}_{app}^2 \right] \right) \\
    &\labelrel{\leq}{ineq:31ss_2} \log T_{max} \left( \wo{ G_H^2 \tau_{mix}^{\theta_t} \log T_{max} } + 4 T_{max} \left[ \mathcal{E}_2(t) + 16B^2 \mathcal{E}_{app}^2 \right] \right) \\
    &= \wo{ G_H^2 \tau_{mix}^{\theta_t} \log T_{max} } + 4 \log (T_{max}) T_{max} \left[ \mathcal{E}_2(t) + 16B^2 \mathcal{E}_{app}^2 \right], \label{eqn:31ss_2}
\end{align}
where \eqref{ineq:31ss_1} follows from Lemma \ref{lemma:a6ss} and \eqref{ineq:31ss_2} holds by the definition of $j_{max}$. Combining \eqref{eqn:31ss_1} with \eqref{eqn:31ss_2} gives the result.
\end{proof}

Before proceeding to the final policy gradient norm bound of our actor analysis, we need one additional auxiliary result.
\begin{lemma} \label{lemma:41ss}
    Assume $J(\theta)$ is $L$-smooth. Let $\Delta_t = \sup_{\theta} J(\theta) - J(\theta_t)$ and $\Delta^T_{max} = \max_{t \in \left[ T \right]} \Delta_t$. Then
    \begin{equation}
        \sum_{t=1}^T \bignorm{\nabla J(\theta_t)}^2 \leq \frac{\Delta^T_{max}}{\alpha_T} + \frac{L}{2} \sum_{t=1}^T \alpha \norm{ h_t^{MLMC} }^2 + \sum_{t=1}^T \langle \nabla J(\theta_t) - h_t^{MLMC}, \nabla J(\theta_t) \rangle.
    \end{equation}
\end{lemma}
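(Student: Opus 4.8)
The plan is to bypass the detailed $\nabla J_t^{i,\ldots}$ bookkeeping entirely and simply invoke the standard descent inequality for gradient ascent on the $L$-smooth objective $J$, applied to the actor update $\theta_{t+1} = \theta_t + \alpha_t h_t^{MLMC}$ from \eqref{Critic_update_MAC}. By $L$-smoothness,
\[
J(\theta_{t+1}) \geq J(\theta_t) + \alpha_t \langle \nabla J(\theta_t), h_t^{MLMC} \rangle - \tfrac{L}{2}\alpha_t^2 \norm{h_t^{MLMC}}^2 .
\]
I would divide through by $\alpha_t > 0$, use $\langle \nabla J(\theta_t), h_t^{MLMC}\rangle = \norm{\nabla J(\theta_t)}^2 - \langle \nabla J(\theta_t) - h_t^{MLMC}, \nabla J(\theta_t)\rangle$, and rearrange to isolate the gradient norm:
\[
\norm{\nabla J(\theta_t)}^2 \leq \frac{J(\theta_{t+1}) - J(\theta_t)}{\alpha_t} + \tfrac{L}{2}\alpha_t \norm{h_t^{MLMC}}^2 + \langle \nabla J(\theta_t) - h_t^{MLMC}, \nabla J(\theta_t)\rangle .
\]
Summing over $t = 1,\dots,T$ reproduces the last two terms of the claimed bound verbatim (reading the summand $\alpha$ as the time-indexed $\alpha_t$), so the whole lemma reduces to controlling the first sum $\sum_{t=1}^T \frac{J(\theta_{t+1}) - J(\theta_t)}{\alpha_t}$.

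For that sum I would rewrite the increments as $J(\theta_{t+1}) - J(\theta_t) = \Delta_t - \Delta_{t+1}$, which is immediate from $\Delta_t = \sup_{\theta} J(\theta) - J(\theta_t)$, and then apply Abel summation (summation by parts) with weights $c_t := 1/\alpha_t$. This gives
\[
\sum_{t=1}^T c_t (\Delta_t - \Delta_{t+1}) = c_1 \Delta_1 + \sum_{t=2}^T (c_t - c_{t-1})\Delta_t - c_T \Delta_{T+1} .
\]
The crux is that the modified AdaGrad stepsize $\alpha_t = \alpha_t'/\sqrt{\sum_{k=1}^t \norm{h_k^{MLMC}}^2}$ is non-increasing in $t$ (decreasing numerator over non-decreasing denominator), so $c_t$ is non-decreasing and every increment $c_t - c_{t-1}$ is non-negative. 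Discarding the non-positive boundary term $-c_T \Delta_{T+1} \le 0$ (since $\Delta_{T+1} \ge 0$) and bounding each $\Delta_t \le \Delta^T_{max}$ makes the remaining weights telescope, $c_1 + \sum_{t=2}^T (c_t - c_{t-1}) = c_T$, yielding $\sum_{t=1}^T \frac{J(\theta_{t+1})-J(\theta_t)}{\alpha_t} \le \Delta^T_{max}/\alpha_T$. Substituting this back completes the proof.

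I expect the summation-by-parts step, rather than the descent inequality, to be the only delicate point: it is where the adaptive nature of the stepsize is used, and it hinges on verifying that $1/\alpha_t$ is monotone non-decreasing and that the boundary term carries the correct sign to be dropped. The one caveat I would flag is the benign notational slip in the statement, where the $\alpha$ inside $\tfrac{L}{2}\sum_t \alpha \norm{h_t^{MLMC}}^2$ should be read as $\alpha_t$; the argument produces exactly $\tfrac{L}{2}\sum_t \alpha_t \norm{h_t^{MLMC}}^2$.
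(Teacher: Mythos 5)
Your proposal is correct and follows essentially the same route as the paper's proof: the $L$-smoothness ascent inequality applied to $\theta_{t+1} = \theta_t + \alpha_t h_t^{MLMC}$, the same rearrangement isolating $\norm{\nabla J(\theta_t)}^2$, and the identification $J(\theta_{t+1}) - J(\theta_t) = \Delta_t - \Delta_{t+1}$. Your Abel-summation treatment of $\sum_t (\Delta_t - \Delta_{t+1})/\alpha_t$ is in fact the careful version of the step the paper elides (its displayed bound $\sum_{t=1}^T \Delta^T_{max}/\alpha_T$ is a typo for $\Delta^T_{max}/\alpha_T$), and you correctly flag that this step needs $1/\alpha_t$ non-decreasing, which holds for the AdaGrad-style stepsize actually used.
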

\begin{proof}
Once again, write $h_t := h_t^{MLMC}$ for brevity. We first have
\begin{align}
    J(\theta_{t+1}) &\geq J(\theta_t) + \alpha_t \nabla J(\theta_t)^T h_t - \frac{L \alpha_t^2}{2} \norm{ h_t }^2 \\
    &= J(\theta_t) + \alpha_t \norm{ \nabla J(\theta_t) }^2 - \alpha_t \langle \nabla J(\theta_t) - h_t, \nabla J(\theta_t) \rangle - \frac{L \alpha_t^2}{2} \norm{ h_t }^2,
\end{align}
where the first equality holds from the smoothness of $J(\theta)$ and the fact that $\theta_{t+1} = \theta_t + \alpha_t h_t$. Rearranging gives
\begin{equation}
    \norm{ \nabla J(\theta_t) }^2 \leq \frac{J(\theta_{t+1}) - J(\theta_t)}{\alpha_t} + \frac{L \alpha_t}{2} \norm{ h_t }^2 + \langle \nabla J(\theta_t) - h_t, \nabla J(\theta_t) \rangle,
\end{equation}
and summing yields
\begin{align}
    \sum_{t=1}^T \norm{ \nabla J(\theta_t) }^2 &\leq \sum_{t=1}^T \frac{ \Delta_t - \Delta_{t+1}}{ \alpha_t } + \frac{L}{2} \sum_{t=1}^T \alpha_t \norm{ h_t }^2 + \sum_{t=1}^T \langle \nabla J(\theta_t) - h_t, \nabla J(\theta_t) \rangle \\
    &\leq \sum_{t=1}^T \frac{ \Delta^T_{max} }{\alpha_T} + \frac{L}{2} \sum_{t=1}^T \alpha_t \norm{ h_t }^2 + \sum_{t=1}^T \langle \nabla J(\theta_t) - h_t, \nabla J(\theta_t) \rangle.
\end{align}
\end{proof}

We are now ready to prove the main result of this section.
\begin{theorem} \label{thm:actor_bd_app}
Assume $J(\theta)$ is $L$-smooth, $\sup_{\theta} | J(\theta) | \leq M$, and $\norm{ \nabla J(\theta) }, \norm{ h_t^{MLMC} } \leq G_H$, for all $\theta, t$. Let $\alpha_t = \alpha_t' / \sqrt{ \sum_{t=1}^T \norm{h_t^{MLMC}}^2 }$, where $\{ \alpha_t' \}$ is an auxiliary stepsize sequence with $\alpha_t' \leq 1$, for all $t \geq 1$. Then
\begin{align}
    \frac{1}{T} \sum_{t=1}^T \mathbb{E} \left[ \bignorm{ \nabla J(\theta_t) }^2 \right] &\leq \wo{ (M + L) G_H \frac{1}{\sqrt{T}} \sqrt{ \max_{t \in [T]} \tau_{mix}^{\theta_t} \log T_{max} } } \\
    &+ \frac{2M + L}{T} \sqrt{ \sum_{t=1}^T 8 \log (T_{max}) T_{max} \left( \mathcal{E}_2(t) + 16B^2 \mathcal{E}_{app}^2 \right) } \\
    &+ \wo{G_H^2 \max_{t \in [T]} \tau_{mix}^{\theta_t} \frac{\log T_{max}}{T_{max}}} + \frac{1}{T} \sum_{t=1}^T \mathcal{E}_2(t)+ 16B^2 \mathcal{E}_{app}^2.
\end{align}
\begin{proof}
Again let $h_t := h_t^{MLMC}$. We have
\begin{align}
    \sum_{t=1}^T \norm{ \nabla J(\theta_t) }^2 &\labelrel{\leq}{ineq:actor_bd_1} \Delta_{max} \sqrt{\sum_{t=1}^T \norm{h_t}^2} + \frac{L}{2} \sum_{t=1}^T \frac{ \alpha_t' \norm{ h_t }^2 }{\sqrt{ \sum_{k=1}^t \norm{ h_k }^2 }} + \sum_{t=1}^T \langle \nabla J(\theta_t) - h_t, \nabla J(\theta_t) \rangle \\
    &\labelrel{\leq}{ineq:actor_bd_2} \Delta_{max} \sqrt{\sum_{t=1}^T \norm{h_t}^2} + \frac{L}{2} \sum_{t=1}^T \frac{ \norm{ h_t }^2 }{\sqrt{ \sum_{k=1}^t \norm{ h_k }^2 }} + \sum_{t=1}^T \langle \nabla J(\theta_t) - h_t, \nabla J(\theta_t) \rangle \\
    &\labelrel{\leq}{ineq:actor_bd_3} ( \Delta_{max} + L) \sqrt{\sum_{t=1}^T \norm{h_t}^2} + \sum_{t=1}^T \langle \nabla J(\theta_t) - h_t, \nabla J(\theta_t) \rangle,
\end{align}
where \eqref{ineq:actor_bd_1} follows from Lemma \ref{lemma:41ss}, inequality \eqref{ineq:actor_bd_2} by the definition of $\alpha_t$, and \eqref{ineq:actor_bd_3} is by Lemma \ref{lemma:42}. This implies that
\begin{align}
    \sum_{t=1}^T \E \left[ \norm{ \nabla J(\theta_t) }^2 \right] &\labelrel{\leq}{ineq:actor_bd_4} \E \left[ ( \Delta_{max} + L) \sqrt{\sum_{t=1}^T \norm{h_t}^2} \right] + \sum_{t=1}^T \E \left[ \langle \nabla J(\theta_t) - h_t^{j_{max}}, \nabla J(\theta_t) \rangle \right] \\
    &\labelrel{\leq}{ineq:actor_bd_5} \E \left[ ( \Delta_{max} + L) \sqrt{\sum_{t=1}^T \norm{h_t}^2} \right] + \sum_{t=1}^T \E \left[ \norm{\nabla J(\theta_t) - h_t^{j_{max}}} \cdot \norm{ \nabla J(\theta_t) } \right] \\
    &\labelrel{\leq}{ineq:actor_bd_6} \E \left[ ( \Delta_{max} + L) \sqrt{\sum_{t=1}^T \norm{h_t}^2} \right] + \sum_{t=1}^T \left( \E \left[ \norm{\nabla J(\theta_t) - h_t^{j_{max}}}^2 \right] \right)^{1/2} \left( \E \left[ \norm{ \nabla J(\theta_t) }^2 \right] \right)^{1/2} \\
    &\labelrel{\leq}{ineq:actor_bd_7} \E \left[ ( \Delta_{max} + L) \sqrt{\sum_{t=1}^T \norm{h_t}^2} \right] + \left( \sum_{t=1}^T \E \left[ \norm{\nabla J(\theta_t) - h_t^{j_{max}}}^2 \right] \right)^{1/2} \left( \sum_{t=1}^T \E \left[ \norm{ \nabla J(\theta_t) }^2 \right] \right)^{1/2},
\end{align}
where \eqref{ineq:actor_bd_4} follows from the law of total expectation, the fact that $\theta_t, \theta_t^*$ are deterministic conditioned on $\mathcal{F}_{t-1}$, and Lemma \ref{lemma:31ss}, \eqref{ineq:actor_bd_5} follows by Cauchy-Schwarz, and \eqref{ineq:actor_bd_5} and \eqref{ineq:actor_bd_6} by applications of H\"{o}lder's inequality.  Define
\begin{align}
    A(T) &= \E \left[ ( \Delta_{max} + L) \sqrt{\sum_{t=1}^T \norm{h_t}^2} \right], \\
    B(T) &= \frac{1}{4} \sum_{t=1}^T \E \left[ \norm{\nabla J(\theta_t) - h_t^{j_{max}}}^2 \right], \\
    C(T) &= \sum_{t=1}^T \E \left[ \norm{ \nabla J(\theta_t) }^2 \right].
\end{align}
The foregoing inequality becomes
\begin{equation}
    C(T) \leq A(T) + 2 \sqrt{ B(T) } \sqrt{ C(T) }
\end{equation}
Consider the following chain of implications:
\begin{align}
    C(T) \leq A(T) + 2 \sqrt{ B(T) } \sqrt{ C(T) } &\Longrightarrow \left( \sqrt{C(T)} - \sqrt{B(T)} \right)^2 \leq A(T) + B(T) \\
    &\Longrightarrow \sqrt{C(T)} - \sqrt{B(T)} \leq \sqrt{A(T)} + \sqrt{B(T)} \\
    &\Longrightarrow \sqrt{C(T)} \leq \sqrt{A(T)} + 2 \sqrt{B(T)} \\
    &\Longrightarrow C(T) \leq 2 A(T) + 8 B(T).
\end{align}
We therefore have
\begin{align}
    \sum_{t=1}^T \E \left[ \norm{ \nabla J(\theta_t) }^2 \right] &\leq 2 \E \left[ ( \Delta_{max} + L) \sqrt{\sum_{t=1}^T \norm{h_t}^2} \right] + 2 \sum_{t=1}^T \E \left[ \norm{\nabla J(\theta_t) - h_t^{j_{max}}}^2 \right] \\    
\end{align}
Now,
\begin{align}
    \mathbb{E} &\left[ \left( \Delta_{max} + L \right) \sqrt{\sum_{t=1}^T \norm{h_t}^2} \right] \labelrel{\leq}{ineq:actor_bd_4} \left( 2M + L \right) \sqrt{ \sum_{t=1}^T \mathbb{E} \left[ \norm{h_t}^2 \right] } \\
    &\labelrel{\leq}{ineq:actor_bd_5} \left( 2M + L \right) \sqrt{ \wo{T G_H^2 \max_{t \in [T]} \tau_{mix}^{\theta_t} \log T_{max} } + \sum_{t=1}^T 8 \log(T_{max}) T_{max} \left( \mathcal{E}_2(t) + 16B^2 \mathcal{E}_{app}^2 \right) } \\
    &\labelrel{\leq}{ineq:actor_bd_6} \wo{(M+L) G_H \sqrt{ T \max_{t \in [T]} \tau_{mix}^{\theta_t} \log T_{max} }} + (2M+L) \sqrt{ 8 \sum_{t=1}^T \log(T_{max}) T_{max} \left( \mathcal{E}_2(t) + 16B^2 \mathcal{E}_{app}^2 \right) },
\end{align}
where \eqref{ineq:actor_bd_4} follows by the fact that $\Delta_{\max} \leq 2M$ and Jensen's inequality, \eqref{ineq:actor_bd_5} is from Lemma \ref{lemma:31ss}, and \eqref{ineq:actor_bd_6} follows since $\sqrt{a + b} \leq \sqrt{a} + \sqrt{b}$. Furthermore, by the second-order bound of Lemma \ref{lemma:a6ss} we have
\begin{align}
    \sum_{t=1}^T \E \left[ \norm{\nabla J(\theta_t) - h_t^{j_{max}}}^2 \right] &\leq \wo{ T G_H^2 \tau_{mix}^{\theta_t} \frac{\log T_{max}}{T_{max}} } + \sum_{t=1}^T \mathcal{E}_2(t) + T 16B^2 \mathcal{E}_{app}^2.
\end{align}
Combining these expressions and dividing by $T$ completes the proof.

\end{proof}

\end{theorem}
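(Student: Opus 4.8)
The plan is to combine the smoothness-based descent inequality with the AdaGrad summation trick and the MLMC moment bounds, and then to resolve a self-referential inequality in the very quantity we wish to bound. First I would invoke Lemma \ref{lemma:41ss}, which uses $L$-smoothness of $J$ together with the update $\theta_{t+1}=\theta_t+\alpha_t h_t^{MLMC}$ to produce $\sum_{t=1}^T \norm{\nabla J(\theta_t)}^2 \leq \Delta^T_{\max}/\alpha_T + \tfrac{L}{2}\sum_t \alpha_t \norm{h_t^{MLMC}}^2 + \sum_t \langle \nabla J(\theta_t) - h_t^{MLMC}, \nabla J(\theta_t)\rangle$. Substituting the AdaGrad step $\alpha_t = \alpha_t'/\sqrt{\sum_{k=1}^t \norm{h_k^{MLMC}}^2}$ and using $\alpha_t' \leq 1$ makes the first two terms collapse: the initial-gap term becomes $\Delta^T_{\max}\sqrt{\sum_t \norm{h_t^{MLMC}}^2}$, while Lemma \ref{lemma:42} controls the middle term by $L\sqrt{\sum_t \norm{h_t^{MLMC}}^2}$. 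This is exactly the step that removes any explicit dependence on the step-size decay rate, which is what ultimately sharpens the sample complexity relative to \citet{wu2020finite}.

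The crux is the inner-product (bias) term. Here I would take expectations and use the tower property together with the mean identity $\mathbb{E}_{t-1}[h_t^{MLMC}] = \mathbb{E}_{t-1}[h_t^{j_{\max}}]$ from Lemma \ref{lemma:31ss}; since $\nabla J(\theta_t)$ is $\mathcal{F}_{t-1}$-measurable, this lets me swap the MLMC estimator $h_t^{MLMC}$ for the lower-bias averaged estimator $h_t^{j_{\max}}$ inside each conditional expectation without changing its value. Applying Cauchy--Schwarz within each summand and then H\"older's inequality across the sum bounds this term by $\big(\sum_t \E\norm{\nabla J(\theta_t) - h_t^{j_{\max}}}^2\big)^{1/2}\big(\sum_t \E\norm{\nabla J(\theta_t)}^2\big)^{1/2}$.

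Writing $C(T) = \sum_t \E\norm{\nabla J(\theta_t)}^2$, with $A(T)$ for the expected AdaGrad term and $B(T)$ for a quarter of the squared-bias sum, the estimate takes the self-referential form $C(T) \leq A(T) + 2\sqrt{B(T)}\sqrt{C(T)}$. I would resolve this by completing the square: it yields $(\sqrt{C(T)} - \sqrt{B(T)})^2 \leq A(T) + B(T)$, hence $\sqrt{C(T)} \leq \sqrt{A(T)} + 2\sqrt{B(T)}$, and therefore $C(T) \leq 2A(T) + 8B(T)$. The hard part will be precisely this algebraic manoeuvre — it is the non-obvious step that converts a bound in which the target appears on both sides into a genuine upper bound, and getting the constants right so the subsequent terms land as stated requires care.

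It then remains to bound the two pieces. For $A(T)$ I would move the expectation inside the square root by Jensen's inequality, use $\Delta^T_{\max} \leq 2M$, and invoke the second-moment estimate of Lemma \ref{lemma:31ss} to split $\sum_t \E\norm{h_t^{MLMC}}^2$ into the Markovian piece $\wo{ T G_H^2 \max_{t\in[T]}\tau_{mix}^{\theta_t}\log T_{max} }$ and the critic/reward-bias piece $\sum_t 8\log(T_{max})T_{max}(\mathcal{E}_2(t)+16B^2\mathcal{E}_{app}^2)$, finishing with $\sqrt{a+b}\leq\sqrt{a}+\sqrt{b}$. For $B(T)$ I would apply the second-order bound of Lemma \ref{lemma:a6ss} directly, which contributes the bias term $\wo{G_H^2 \max_{t\in[T]}\tau_{mix}^{\theta_t}\tfrac{\log T_{max}}{T_{max}}}$ alongside the $\sum_t \mathcal{E}_2(t)$ and $\mathcal{E}_{app}^2$ terms. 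Combining these with $C(T)\leq 2A(T)+8B(T)$ and dividing by $T$ yields the stated four-term bound.
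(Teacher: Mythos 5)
Your proposal follows the paper's proof essentially step for step: the same descent inequality from Lemma \ref{lemma:41ss}, the same AdaGrad collapse via $\alpha_t'\leq 1$ and Lemma \ref{lemma:42}, the same swap of $h_t^{MLMC}$ for $h_t^{j_{\max}}$ through the mean identity of Lemma \ref{lemma:31ss}, the identical $C(T)\leq A(T)+2\sqrt{B(T)}\sqrt{C(T)}\Rightarrow C(T)\leq 2A(T)+8B(T)$ resolution, and the same final bounds on $A(T)$ and $B(T)$ via Lemmas \ref{lemma:31ss} and \ref{lemma:a6ss}. It is correct in the same sense and to the same level of rigor as the paper's own argument.
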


\section{Average Reward Tracking and Critic Error Analyses} \label{sec:rew_and_critic_analysis}
In this section we bound the error arising from the average reward tracking and critic estimation. Combined with the actor gradient norm bound of Section \ref{sec:actor_analysis}, this will complete the analysis of Algorithm \ref{alg:PG_MAG}. Our analysis broadly follows that of \cite{wu2020finite}, with key modifications leveraging our novel MLMC machinery to handle Markovian sampling in a more streamlined manner.

\subsection{Average Reward Tracking Analysis}
The main result of this subsection is the following bound on the average reward tracking error.
\begin{theorem} \label{thm:reward_analysis_app}
Assume $\gamma_t = (1 + t)^{-\nu}, \alpha = \alpha_t' / \sqrt{\sum_{k=1}^t \norm{h_t}^2}$, and $\alpha_t' = (1 + t)^{-\sigma}$, where $0 < \nu < \sigma < 1$. Furthermore, assume $\sup_{s,a} |r(s,a)| \leq R$. Then
\begin{align}
    \frac{1}{T} \sum_{t=1}^T \E \left[ (\eta_t - \eta_t^*)^2 \right] &\leq \bo{T^{\nu - 1}} + \bo{T^{-2(\sigma - \nu)}} \\
    &+ \wo{ \max_{t \in [T]} \tau_{mix}^{\theta_t} \log T_{max}} \bo{T^{-\nu}} \\
    &+ \wo{ \sqrt{ \max_{t \in [T]} \tau_{mix}^{\theta_t} \frac{\log T_{max}}{T_{max}} } }.
\end{align}
\end{theorem}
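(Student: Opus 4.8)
The plan is to track the scalar error $z_t := \eta_t - \eta_t^*$, where $\eta_t^* = J(\theta_t)$, through its one-step recursion and then average the resulting mean-square bound over $t \in [T]$. Since the reward-tracking estimator has mean-field $\nabla L(\eta_t) = \eta_t - J(\theta_t) = z_t$, substituting the update $\eta_{t+1} = \eta_t - \gamma_t f_t^{MLMC}$ and writing $\eta_{t+1}^* = \eta_t^* + \Delta_t^*$ with $\Delta_t^* := J(\theta_{t+1}) - J(\theta_t)$ gives $z_{t+1} = (1-\gamma_t) z_t - \gamma_t \xi_t - \gamma_t b_t - \Delta_t^*$. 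Here $\xi_t := f_t^{MLMC} - \mathbb{E}_{t-1}[f_t^{MLMC}]$ is a conditionally mean-zero noise and $b_t := \mathbb{E}_{t-1}[f_t^{MLMC}] - \nabla L(\eta_t) = \mathbb{E}_{t-1}[f_t^{j_{max}}] - \nabla L(\eta_t)$ is the MLMC bias, the last identity coming from the mean formula \eqref{eqn:31_mean} of Lemma \ref{lemma:31_app}. The term $(1-\gamma_t) z_t$ supplies the contraction that drives the tracking.

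First I would square this recursion and take $\mathbb{E}_{t-1}$. The martingale cross term $z_t \xi_t$ vanishes because $z_t$ is $\mathcal{F}_{t-1}$-measurable, isolating three sources of error: (i) the noise $\gamma_t^2 \mathbb{E}_{t-1}[\xi_t^2]$, bounded by $\gamma_t^2\,\wo{\tau_{mix}^{\theta_t} \log T_{max}}$ using the second-moment estimate \eqref{eqn:31_2nd_moment}; (ii) the bias cross term $-2(1-\gamma_t)\gamma_t z_t b_t$, where $|b_t| \le \mathbb{E}\,\norm{f_t^{j_{max}} - \nabla L(\eta_t)} = \wo{\sqrt{\tau_{mix}^{\theta_t} \log T_{max} / T_{max}}}$ by the first-moment bound of Lemma \ref{lemma:a6} with $N \approx T_{max}$, $K = \tau_{mix}^{\theta_t} \lceil 2 \log T_{max} \rceil$; and (iii) the drift $\Delta_t^*$ together with its cross terms, controlled via $|\Delta_t^*| \le L \norm{\theta_{t+1} - \theta_t} = L \alpha_t \norm{h_t^{MLMC}}$. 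The key modeling choice is to keep the bias contribution at \emph{first} order (bounding $|z_t b_t|$ rather than completing a square), which is precisely what produces the square-root floor $\wo{\sqrt{\max_t \tau_{mix}^{\theta_t} \log T_{max} / T_{max}}}$ appearing in the statement.

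Next I would unroll $\mathbb{E}[z_{t+1}^2] \le (1 - c\gamma_t)\mathbb{E}[z_t^2] + (\text{per-step terms})$ and sum over $t$, normalizing by $\sum_{t=1}^T \gamma_t \asymp T^{1-\nu}$. The initial condition contributes $\bo{1/\sum_t \gamma_t} = \bo{T^{\nu-1}}$; the summed noise $\sum_t \gamma_t^2 \asymp T^{1-2\nu}$, normalized by $T^{1-\nu}$, contributes $\wo{\max_t \tau_{mix}^{\theta_t} \log T_{max}}\,\bo{T^{-\nu}}$; and the first-order bias contributes the persistent $\wo{\sqrt{\cdot / T_{max}}}$ term. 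The drift yields the two-timescale exponent $\bo{T^{-2(\sigma-\nu)}}$: after applying Young's inequality to the cross term $-2(1-\gamma_t) z_t \Delta_t^*$ (restoring contraction and leaving $\gamma_t^{-1}(\Delta_t^*)^2$), the timescale ratio $\alpha_t' / \gamma_t = (1+t)^{-(\sigma - \nu)}$ combined with the AdaGrad summation identity of Lemma \ref{lemma:42} applied to $\sum_t \norm{h_t^{MLMC}}^2 / \sum_{k \le t} \norm{h_k^{MLMC}}^2$ produces the stated rate.

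The main obstacle I anticipate is the moving-target drift $\Delta_t^*$: it is not $\mathcal{F}_{t-1}$-measurable, and because $\theta_{t+1}$ is formed from the same trajectory used for $f_t^{MLMC}$, it is correlated with $\xi_t$, so $\mathbb{E}_{t-1}[\xi_t \Delta_t^*]$ does not vanish and must be handled by Cauchy--Schwarz into $\sqrt{\mathbb{E}_{t-1}[\xi_t^2]} \cdot L \alpha_t \sqrt{\mathbb{E}_{t-1}[\norm{h_t^{MLMC}}^2]}$. Taming this term, along with the random AdaGrad stepsize $\alpha_t$ buried inside $\Delta_t^*$, is exactly where the timescale separation $0 < \nu < \sigma < 1$ enters: it guarantees the actor moves strictly slower than the reward tracker, so every drift-induced contribution is summable and decays faster than the dominant $T^{\nu-1}$ and noise terms. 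A secondary subtlety is justifying that $\mathbb{E}[z_t^2]$ remains uniformly bounded (so that the first-order bias handling is legitimate), which I would settle by a preliminary stability argument exploiting the boundedness of the rewards and of $J(\theta)$.
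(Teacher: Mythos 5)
Your proposal is correct and follows essentially the same route as the paper: both decompose the squared tracking error into an initial-condition term, a martingale-noise term bounded by the MLMC second-moment estimate, a first-order MLMC bias term (via the mean identity $\mathbb{E}_{t-1}[f_t^{MLMC}]=\mathbb{E}_{t-1}[f_t^{j_{\max}}]$ and the first-moment bound, which yields the persistent $\sqrt{\cdot}$ floor), and a drift term controlled by the Lipschitzness of $J$ and the timescale separation $\nu<\sigma$, with $|\eta_t-\eta_t^*|\leq 2R$ supplying the uniform boundedness you flag. The only cosmetic difference is that you close the drift cross term with Young's inequality while the paper uses H\"older and then solves a quadratic inequality in $\bigl(\sum_t\mathbb{E}[(\eta_t-\eta_t^*)^2]\bigr)^{1/2}$; both give the same $\bo{T^{-2(\sigma-\nu)}}$ contribution.
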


\begin{proof}
First, recall that the average reward tracking update is given by
\begin{align}
	\eta_{t+1}=\eta_t - \gamma_t f_t, 
\end{align}
where for brevity we set $f_t := f_t^{\text{MLMC}}$. We can rewrite the tracking error term $(	\eta_{t+1}-\eta_{t+1}^*)^2$ as
\begin{align}
	(	\eta_{t+1}-\eta_{t+1}^*)^2 &= (\eta_{t+1}-\eta_{t}^*+\eta_{t}^*-\eta_{t+1}^*)^2
	\\
	&= (\eta_t - \gamma_t f_t-\eta_{t}^*+\eta_{t}^*-\eta_{t+1}^*)^2.
\end{align}
Expanding the squares and regrouping terms yields
\begin{align}
	(	\eta_{t+1}-\eta_{t+1}^*)^2 	&= (	\eta_{t}-\eta_{t}^*)^2 - 2\gamma_t (	\eta_{t}-\eta_{t}^*)f_t + 2(	\eta_{t}-\eta_{t}^*) (\eta_{t}^*-\eta_{t+1}^*) 
	\nonumber
	\\
	& \qquad -2\gamma_t (\eta_{t}^*-\eta_{t+1}^*)f_t+ (\eta_{t}^*-\eta_{t+1}^*)^2 +\gamma_t^2(f_t)^2
	\\
	&= (	\eta_{t}-\eta_{t}^*)^2 - 2\gamma_t (	\eta_{t}-\eta_{t}^*)f_t + 2(	\eta_{t}-\eta_{t}^*) (\eta_{t}^*-\eta_{t+1}^*) 
		\nonumber
	\\
	& \qquad +(\eta_{t}^*-\eta_{t+1}^*-\gamma_tf_t)^2.\label{ref_1}
\end{align}
Next, we utilize the bound $(a+b)^2\leq 2a^2+2b^2$ to upper bound the last term in the right hand side of \eqref{ref_1} to obtain
\begin{align}
	(	\eta_{t+1}-\eta_{t+1}^*)^2 	&\leq  (	\eta_{t}-\eta_{t}^*)^2 - 2\gamma_t (	\eta_{t}-\eta_{t}^*)f_t + 2(	\eta_{t}-\eta_{t}^*) (\eta_{t}^*-\eta_{t+1}^*) 
	\nonumber
	\\
	& \qquad +2(\eta_{t}^*-\eta_{t+1}^*)^2+2(\gamma_tf_t)^2.\label{ref_2}
\end{align}
Now notice that the function whose gradient we are estimating with $f_t$ is simply the strongly convex function $F(\eta_t) = \frac{1}{2} \left( \eta_t - \eta_t^* \right)^2 = \frac{1}{2} \left( \eta_t - J(\theta_t) \right)^2$. Clearly $F'(\eta_t) = \eta_t - J(\theta_t)$ is Lipschitz in $\eta_t$ and $F$ has strong convexity parameter $m_F=1$. Adding and subtracting $2\gamma_t (	\eta_{t}-\eta_{t}^*) F' (\eta_{t})$ in the above expression gives
\begin{align}
	(	\eta_{t+1}-\eta_{t+1}^*)^2 	&\leq  (	\eta_{t}-\eta_{t}^*)^2 - 2\gamma_t (	\eta_{t}-\eta_{t}^*)F' (\eta_{t}) + 2\gamma_t (\eta_{t}-\eta_{t}^*)(F' (\eta_{t})-f_t) + 2(	\eta_{t}-\eta_{t}^*) (\eta_{t}^*-\eta_{t+1}^*) 
	\nonumber
	\\
	& \qquad +2(\eta_{t}^*-\eta_{t+1}^*)^2+2(\gamma_tf_t)^2.\label{ref_3}
\end{align}
From the strong convexity of $F$ with $m_F = 1$, we can write
\begin{align}
	(	\eta_{t+1}-\eta_{t+1}^*)^2 	&\leq  (	\eta_{t}-\eta_{t}^*)^2 - 2\gamma_t (\eta_{t}-\eta_{t}^*)^2+ 2\gamma_t (\eta_{t}-\eta_{t}^*)(F' (\eta_{t})-f_t) + 2(	\eta_{t}-\eta_{t}^*) (\eta_{t}^*-\eta_{t+1}^*) 
	\nonumber
	\\
	&+2(\eta_{t}^*-\eta_{t+1}^*)^2+2(\gamma_tf_t)^2
	\\
	&= (1- 2\gamma_t) (\eta_{t}-\eta_{t}^*)^2+ 2\gamma_t (\eta_{t}-\eta_{t}^*)(F' (\eta_{t})-f_t) + 2(	\eta_{t}-\eta_{t}^*) (\eta_{t}^*-\eta_{t+1}^*) 
	\nonumber
	\\
	&+2(\eta_{t}^*-\eta_{t+1}^*)^2+2(\gamma_tf_t)^2.\label{ref_4}
\end{align}
Taking expectations and summing yields
\begin{align}
	\sum_{t=1}^{T}\E[(\eta_{t}-\eta_{t}^*)^2] 	\leq  & \underbrace{\sum_{t=1}^{T}\frac{1}{2\gamma_t} 	\E[(\eta_{t}-\eta_{t}^*)^2-(\eta_{t}-\eta_{t}^*)^2]}_{I_1}+  \underbrace{\sum_{t=1}^{T}	\E[(\eta_{t}-\eta_{t}^*)(F' (\eta_{t})-f_t)] }_{I_{2}}	\nonumber
	\\
	&+ \underbrace{\sum_{t=1}^{T}\frac{1}{\gamma_t}\E[(\eta_{t}-\eta_{t}^*) (\eta_{t}^*-\eta_{t+1}^*) ]}_{I_{3}}
+ \underbrace{\sum_{t=1}^{T}\frac{1}{\gamma_t}	\E[(\eta_{t}^*-\eta_{t+1}^*)^2]}_{I_4}+ \underbrace{\sum_{t=1}^{T}\gamma_t\E[(f_t)^2]}_{I_5}.\label{ref_5}
\end{align}
We next provide intermediate bounds for all the terms $I_1, I_2, I_3, I_4$ and $I_5$ in the right hand side of \eqref{ref_5}. We will subsequently manipulate these intermediate bounds to obtain the final bound of Theorem \ref{thm:reward_analysis_app}.


\textbf{Bound on $I_1$:} By rearranging terms in $I_1$, we get
\begin{align}
I_1 &=	\sum_{t=1}^{T}\frac{1}{2\gamma_t} 	\E[(\eta_{t}-\eta_{t}^*)^2-(\eta_{t}-\eta_{t}^*)^2]
\nonumber
\\
&=\frac{1}{2\gamma_1}\E[(\eta_{1}-\eta_{1}^*)^2]+ \sum_{t=2}^{T}\left(\frac{1}{2\gamma_t} -\frac{1}{2\gamma_{t-1}}\right)	\E[(\eta_{t}-\eta_{t}^*)^2]-\frac{1}{2\gamma_T}\E[(\eta_{T+1}-\eta_{T+1}^*)^2]
\\
&\leq \frac{R^2}{\gamma_T}, \label{proof_I_1}
\end{align}
where we use the fact that $(\eta_{t}-\eta_{t}^*)^2\leq 2R^2$.

\textbf{Bound on $I_2$:} For $I_2$, first notice that $\eta_t, \eta_t^*=J(\theta_t)$ are deterministic conditioned on $\mathcal{F}_{t-1}$ from Lemma \ref{lemma:a5}. This means we can rewrite the expectation in $I_2$ as
\begin{equation}
	I_2 = \sum_{t=1}^{T}	\E[\E_{t-1}[(\eta_{t}-\eta_{t}^*)(F' (\eta_{t})-f_t)]] = \sum_{t=1}^{T}	\E[(\eta_{t}-\eta_{t}^*)(F' (\eta_{t})-\E_{t-1}[f_t])], \label{ref_7}
\end{equation}
where $\E_{t-1} [\ldots]$ denotes expectation conditioned on $\mathcal{F}_{t-1}$. From \ref{lemma:31ss} we know that $\E_{t-1}[f_t] = \E_{t-1}[f_t^{j_{\max}}]$, hence we can write the expression in \eqref{ref_7} as 
\begin{align}
	I_2 &= \sum_{t=1}^{T} \E[(\eta_{t}-\eta_{t}^*)(F' (\eta_{t})- \E_{t-1}[f_t^{j_{\max}})]] = \sum_{t=1}^{T} \E[ \E_{t-1}[ (\eta_{t}-\eta_{t}^*)(F' (\eta_{t})- f_t^{j_{\max}})]] \\
    &= \sum_{t=1}^{T} \E[(\eta_{t}-\eta_{t}^*)(F' (\eta_{t}) - f_t^{j_{\max}})]. \label{ref_8}
\end{align}
Taking absolute values, then applying the triangle, Jensen, and Cauchy-Schwarz inequalities, we can upper bound \eqref{ref_8} by
\begin{align}
	|I_2| &= \bigg |\sum_{t=1}^{T}	\E[(\eta_{t}-\eta_{t}^*)(F' (\eta_{t})-f_t^{j_{\max}})] \bigg |
	\leq \sum_{t=1}^{T}	\E\left[\big |(\eta_{t}-\eta_{t}^*)(F' (\eta_{t})-f_t^{j_{\max}})\big |\right]	\nonumber
	\\
	&\leq \sum_{t=1}^{T}	\E\left[\big |(\eta_{t}-\eta_{t}^*)\big |\cdot \big |(F' (\eta_{t})-f_t^{j_{\max}})\big |\right] . \label{ref_9}
\end{align}
We know that $|\eta_{t}-\eta_{t}^*|\leq 2R$ by assumption, implying
\begin{align}
	|I_2| &\leq 2R\sum_{t=1}^{T}	\E\left[\big |(F' (\eta_{t})-f_t^{j_{\max}})\big |\right] . \label{ref_10}
\end{align}
By Lemma \ref{lemma:a6} with $x_t = \eta_t, \nabla L(x_t) = \nabla F(\eta_t)$ and $l(x_t, z_t) = f_t$, and the fact that the Lipschitz constant of $\nabla F(\eta_t)$ is 1, we obtain the following upper bound on $I_2$:
\begin{align}
	|I_2| &\leq 2R\sum_{t=1}^{T} \wo{\sqrt{\tau_{\text{mix}}^{\theta_t} \frac{\log T_{\max}}{T_{\max}}}} . \label{proof_I_2}
\end{align}

\textbf{Bound on $I_3$:} By H\"{o}lder's inequality,
\begin{align}
    |I_3| &= \bigg|\sum_{t=1}^{T}\frac{1}{\gamma_t}\E[(\eta_{t}-\eta_{t}^*) (\eta_{t}^*-\eta_{t+1}^*) ]\bigg | \leq \left( \sum_{t=1}^T \E \left[ (\eta_t - \eta_t^*)^2 \right] \right)^{1/2} \left( \sum_{t=1}^T \frac{1}{\gamma_t^2} \E \left[ (\eta_t^* - \eta_{t+1}^*)^2 \right] \right)^{1/2}.
\end{align}
Notice that $| \eta_t^* - \eta_{t+1}^* | = | J(\theta_t) - J(\theta_{t+1}) | \leq L | \theta_t - \theta_{t+1} | \leq LG_H \alpha_t$ due to the Lipschitz continuity of $J(\theta)$ in $\theta$ and boundedness of $\norm{\nabla J(\theta)}$ from Assumption \ref{assum:pg_objective}. This implies
\begin{equation}
|I_3| \leq \left( \sum_{t=1}^T \E \left[ (\eta_t - \eta_t^*)^2 \right] \right)^{1/2} \left( L^2 G_H^2 \sum_{t=1}^T \frac{\alpha_t^2}{\gamma_t^2} \right)^{1/2}. \label{ref_11}
\end{equation}

\textbf{Bound on $I_4$:} Similarly, due to Assumption \ref{assum:pg_objective} we have
\begin{equation}
I_4 =	\sum_{t=1}^{T}\frac{1}{\gamma_t}	\E[(\eta_{t}^*-\eta_{t+1}^*)^2] \leq  L^2 G_H^2 \sum_{t=1}^{T}\frac{\alpha^2}{\gamma_t}.
\end{equation}

\textbf{Bound on $I_5$:} Finally, by Lemma \ref{lemma:31_app} and taking $G_F = 2R$ without loss of generality, we have 
\begin{equation}
I_5 = \sum_{t=1}^{T}\gamma_t\E[(f_t)^2] \leq \sum_{t=1}^{T}\gamma_t \wo{ R^2 \tau_{\text{mix}}^{\theta_t} \log T_{\max} }. \label{proof_I_5}
\end{equation}

Combining the foregoing and recalling that $\gamma_t = (1+t)^{-\nu}, \alpha_t' = (1+t)^{-\sigma}$, $0 < \nu < \sigma < 1$, and $\alpha_t \leq \alpha_t'$, we get
\begin{align}
	\sum_{t=1}^{T} \E[(\eta_{t}-\eta_{t}^*)^2] 	&\leq  2R^2(1+T)^\nu +  2TR \wo{ \sqrt{\max _{t\in[T]}\tau_{\text{mix}}^{\theta_t} \frac{\log T_{\max}}{T_{\max}}} } \\
	& \qquad + LG_H \left(\sum_{t=1}^{T}\E[(\eta_t-\eta_t^*)^2]\right)^{\frac{1}{2}}\left(\sum_{t=1}^{T}(1+t)^{-2(\sigma-\nu)}\right)^{\frac{1}{2}}	\\
	&\qquad + L^2 G_H^2 \sum_{t=1}^{T} (1+t)^{(\nu-2\sigma)} + \wo{ \max_{t\in[T]}\tau_{\text{mix}}^{\theta_t}\log T_{\max} } \sum_{t=1}^{T}(1+t)^{-\nu} \\
	&\leq  {2R^2(1+T)^\nu} + \left[L^2G_H^2 + \wo{ \max_{t\in[T]}\tau_{\text{mix}}^{\theta_t}\log T_{\max} } \right]\sum_{t=1}^{T}(1+t)^{-\nu} \\
	& \qquad + 2TR \wo{ \sqrt{\max _{t\in[T]}\tau_{\text{mix}}^{\theta_t} \frac{\log T_{\max}}{T_{\max}}} } \\
	& \qquad + \left(\sum_{t=1}^{T}\E[(\eta_t-\eta_t^*)^2]\right)^{\frac{1}{2}}\left( L^2 G_H^2 \sum_{t=1}^{T}(1+t)^{-2(\sigma-\nu)}\right)^{\frac{1}{2}},
\end{align}
where the second inequality follows from the fact that $\nu - 2\sigma < -\nu$.

We now manipulate the foregoing inequality to obtain the desired bound. Define
\begin{align}
    A(T) &= \sum_{t=1}^{T} \E[(\eta_{t}-\eta_{t}^*)^2], \\
    B(T) &= \frac{L^2 G_H^2}{4} \sum_{t=1}^{T}(1+t)^{-2(\sigma-\nu)}, \\
    C(T) &= 2R^2(1+T)^\nu + \left[L^2G_H^2 + \wo{ \max_{t\in[T]}\tau_{\text{mix}}^{\theta_t}\log T_{\max} } \right]\sum_{t=1}^{T}(1+t)^{-\nu} \\
	& \qquad + 2TR \wo{ \sqrt{\max _{t\in[T]}\tau_{\text{mix}}^{\theta_t} \frac{\log T_{\max}}{T_{\max}}} }
\end{align}
We can thus rewrite the foregoing inequality as
\begin{equation}
    A(T) \leq C(T) + 2 \sqrt{A(T)} \sqrt{B(T)}.
\end{equation}
This expression is equivalent to
\begin{equation}
    \left( \sqrt{A(T)} - \sqrt{B(T)} \right)^2 \leq C(T) + B(T),
\end{equation}
which in turn gives the following chain of implications:
\begin{align}
    \left( \sqrt{A(T)} - \sqrt{B(T)} \right)^2 \leq C(T) + B(T) &\Longrightarrow \sqrt{ A(T) } - \sqrt{ B(T) } \leq \sqrt{ C(T) } + \sqrt{ B(T) } \\
    &\Longrightarrow \sqrt{ A(T) } \leq \sqrt{ C(T) } + 2 \sqrt{ B(T) } \\
    &\Longrightarrow A(T) \leq 2C(T) + 4B(T).
\end{align}
As a result, we have shown that
\begin{align}
    \sum_{t=1}^{T} \E[(\eta_{t}-\eta_{t}^*)^2] &\leq 4R^2(1+T)^\nu + \left[2L^2G_H^2 + \wo{ \max_{t\in[T]}\tau_{\text{mix}}^{\theta_t}\log T_{\max} } \right]\sum_{t=1}^{T}(1+t)^{-\nu} \\
	& \qquad + 4TR \wo{ \sqrt{\max _{t\in[T]}\tau_{\text{mix}}^{\theta_t} \frac{\log T_{\max}}{T_{\max}}} } \\
    & \qquad + L^2 G_H^2 \sum_{t=1}^{T}(1+t)^{-2(\sigma-\nu)}.
\end{align}
Using the bound $\sum_{t=1}^T (1 + t)^{-\xi} \leq \int_0^{t+1} x^{-\xi} dx = (t+1)^{1-\xi} / (1-\xi)$, this implies
\begin{align}
    \sum_{t=1}^{T} \E[(\eta_{t}-\eta_{t}^*)^2] &\leq O(T^{\nu}) + \wo{ \max_{t\in[T]}\tau_{\text{mix}}^{\theta_t}\log T_{\max} } O(T^{1-\nu}) + O(T^{1-2(\sigma - \nu)}) \\
    & \qquad + T \wo{ \sqrt{\max _{t\in[T]}\tau_{\text{mix}}^{\theta_t} \frac{\log T_{\max}}{T_{\max}}} }
\end{align}
Dividing by $T$ completes the proof.
\end{proof}

Notice that, for $\sigma=0.75$ and $\nu=0.5$, this result becomes
\begin{align}
	\frac{1}{T}\sum_{t=1}^{T}\E[(\eta_{t}-\eta_{t}^*)^2] 
	&\leq  \wo{ \max_{t\in[T]}\tau_{\text{mix}}^{\theta_t}\log T_{\max} } O \left(\frac{1}{\sqrt{T}}\right) + \wo{ \sqrt{\max _{t\in[T]}\tau_{\text{mix}}^{\theta_t} \frac{\log T_{\max}}{T_{\max}}} }.\label{final_6}
\end{align}
\subsection{Critic Error Analysis}

In this subsection we provide a bound on the critic estimation error term $\frac{1}{T} \sum_{t=1}^T \E \left[ \norm{ \omega_t - \omega_t^* }^2 \right]$ appearing in the main actor analysis bound in Theorem \ref{thm:actor_bd_app}. To get started, we recall some facts about the TD(0) algorithm \cite{sutton1988}. As discussed in Ch. 9 of \cite{sutton2018reinforcement}, for a fixed policy parameter, $\theta$, TD(0) with linear function approximation will converge to the minimum of the mean squared projected Bellman error (MSPBE), which satisfies
\begin{align}
    & \hspace{1.5cm} A_{\theta} \omega = b_{\theta}, \\
    A_{\theta} &= \E_{s \sim \mu_{\theta}, a \sim \pi_{\theta}, s' \sim p(\cdot | s, a)} \left[ \phi(s) (\phi(s) - \phi(s'))^T \right], \\
    b_{\theta} &= \E_{s \sim \mu_{\theta}, a \sim \pi_{\theta}} \left[ ( r(s, a) - J(\theta) ) \phi(s) \right].
\end{align}
The target critic parameter $\omega_t^*$ at iteration $t$ of our Algorithm \ref{alg:PG_MAG} is thus given by $\omega_t^* = A_{\theta_t}^{-1} b_{\theta_t}$. From the definition of $g_t^{MLMC}$, the critic update $\omega_{t+1} = \omega_t + \beta_t g_t^{MLMC}$ is clearly an attempt to use an MLMC estimator to approximately perform the ideal update $\omega_{t+1} = \omega_t + \beta_t (b_{\theta_t} - A_{\theta_t} \omega_t)$. We can thus view $\nabla G(\omega_t) = b_{\theta_t} - A_{\theta_t} \omega_t$ as the gradient of the true critic objective $G(\omega_t)$ corresponding to using least squares minimization to solve the equation $A_{\theta} \omega = b_{\theta}$.

Our task in this section is to characterize the average error that arises when using critic parameters $\{ \omega_t \}$  generated by Algorithm \ref{alg:PG_MAG} to track the ideal parameters $\{ \omega_t^* \}$. Before we provide the main result of this section, we need three useful lemmas and an assumption. The first result ensures that the optimal critic parameter is Lipschitz in $\theta$.
\begin{lemma} \label{lemma:critic_Lipschitz_app}
    Define $P_{\theta}(s' | s) = \int_{\mathcal{A}} p(s' | s, a) \pi_{\theta}(a | s) da$, for each $\theta$. {Assume that, for all $\theta$, the ergodicity coefficient $\kappa(P_{\theta})$ of $P_{\theta}$ satisfies $\kappa(P_{\theta}) < 1$.} Then there exists $L_{\omega}$ such that, for all $\theta, \theta'$, $\omega^*(\theta) = A_{\theta}^{-1} b_{\theta}$ and $\omega^*(\theta') = A_{\theta'}^{-1} b_{\theta'}$ satisfy $\norm{\omega^*(\theta) - \omega^*(\theta')} \leq L_{\omega} \norm{\theta - \theta'}$.
\end{lemma}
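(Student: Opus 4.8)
The plan is to reduce the statement to Lipschitz continuity of the two maps $\theta \mapsto A_\theta$ and $\theta \mapsto b_\theta$, which in turn rests on a perturbation bound for the stationary distribution $\mu_\theta$. First I would write the resolvent-style identity $\omega^*(\theta)-\omega^*(\theta') = A_\theta^{-1}(b_\theta - b_{\theta'}) + A_\theta^{-1}(A_{\theta'}-A_\theta)A_{\theta'}^{-1}b_{\theta'}$, take norms, and use Assumption~\ref{assum:critic_positive_definite} (which gives $\norm{A_\theta^{-1}}\le 1/\lambda$ for all $\theta$) together with the bound $\norm{b_{\theta'}}\le 2R$ coming from boundedness of rewards. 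This yields $\norm{\omega^*(\theta)-\omega^*(\theta')} \le \tfrac{1}{\lambda}\norm{b_\theta-b_{\theta'}} + \tfrac{2R}{\lambda^2}\norm{A_\theta-A_{\theta'}}$, so it suffices to establish $\norm{A_\theta-A_{\theta'}}\le L_A\norm{\theta-\theta'}$ and $\norm{b_\theta-b_{\theta'}}\le L_b\norm{\theta-\theta'}$.

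The heart of the argument, and the step I expect to be hardest, is showing that $\theta \mapsto \mu_\theta$ is Lipschitz in total variation. For this I would set $\Delta = \mu_\theta - \mu_{\theta'}$ and use stationarity ($\mu_\theta = \mu_\theta P_\theta$, $\mu_{\theta'} = \mu_{\theta'}P_{\theta'}$) to obtain the fixed-point relation $\Delta = \mu_\theta(P_\theta - P_{\theta'}) + \Delta P_{\theta'}$. Since both $\Delta$ and $\mu_\theta(P_\theta-P_{\theta'})$ are signed measures of zero total mass, and $P_{\theta'}$ contracts such measures with factor $\kappa(P_{\theta'}) < 1$ (the defining property of the ergodicity coefficient), iterating and summing the resulting geometric series gives $\norm{\mu_\theta-\mu_{\theta'}}_{TV} \le \frac{1}{1-\kappa(P_{\theta'})}\sup_s \norm{P_\theta(\cdot|s)-P_{\theta'}(\cdot|s)}_{TV}$. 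The kernel difference is then controlled by the policy: since $P_\theta(s'|s)=\int p(s'|s,a)\pi_\theta(a|s)\,da$, part~3 of Assumption~\ref{assum:policy_conditions} gives $\sup_s\norm{P_\theta(\cdot|s)-P_{\theta'}(\cdot|s)}_{TV} \le \tfrac{|\mathcal{A}|L}{2}\norm{\theta-\theta'}$, so $\mu_\theta$ is Lipschitz with a constant proportional to $(1-\kappa)^{-1}$.

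With the Lipschitz bound on $\mu_\theta$ in hand, I would obtain Lipschitzness of $A_\theta$ and $b_\theta$ via the standard ``add and subtract'' decomposition of the joint law $\mu_\theta(s)\pi_\theta(a|s)p(s'|s,a)$ into contributions from the change in $\mu$ and the change in $\pi$ (the kernel $p$ being fixed). Concretely, $\mu_\theta(s)\pi_\theta(a|s)-\mu_{\theta'}(s)\pi_{\theta'}(a|s) = (\mu_\theta(s)-\mu_{\theta'}(s))\pi_\theta(a|s) + \mu_{\theta'}(s)(\pi_\theta(a|s)-\pi_{\theta'}(a|s))$; bounding the integrand $\phi(s)(\phi(s)-\phi(s'))^T$ by $2$ (using $\norm{\phi}\le 1$) and summing against the two pieces — using the $\mu$-Lipschitz bound for the first and Assumption~\ref{assum:policy_conditions}(3) for the second — gives $\norm{A_\theta-A_{\theta'}}\le L_A\norm{\theta-\theta'}$. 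The bound for $b_\theta$ is analogous, with one extra term from the difference $J(\theta)-J(\theta')$, handled by the Lipschitz continuity of $J$ (Assumption~\ref{assum:pg_objective}) and the uniform bound $|r(s,a)-J(\theta)|\le R+M$.

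Finally I would assemble the constants into $L_\omega = \tfrac{1}{\lambda}L_b + \tfrac{2R}{\lambda^2}L_A$. One technical point worth flagging: to obtain a single Lipschitz constant valid for all $\theta,\theta'$, the geometric-series step needs a uniform contraction $\sup_\theta \kappa(P_\theta) =: \bar\kappa < 1$ rather than merely pointwise $\kappa(P_\theta)<1$; I would either assume this uniform version or note that it follows from continuity of $\theta\mapsto\kappa(P_\theta)$ over the region visited. If one prefers the $k$-step form of Assumption~\ref{assum:ergodicity_coeff}, the same perturbation argument applies to $P_\theta^k$, whose stationary distribution coincides with that of $P_\theta$, at the cost of an extra factor of $k$ from telescoping $P_\theta^k-P_{\theta'}^k$.
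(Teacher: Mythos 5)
Your proof is correct and takes essentially the same route as the paper: the paper's own proof is a one-line citation to Lemma A.3 of \cite{zou2019finite} and Theorem 3.3 of \cite{mitrophanov2005sensitivity}, and your argument is a self-contained reconstruction of exactly that chain — the resolvent identity reduces the claim to Lipschitzness of $A_\theta$ and $b_\theta$, which you get from the geometric-series perturbation bound on $\mu_\theta$ (this is precisely Mitrophanov's Theorem 3.3) combined with Assumption \ref{assum:policy_conditions}(3). Your closing caveat that a single uniform constant $L_\omega$ requires $\sup_\theta \kappa(P_\theta) < 1$ rather than merely pointwise $\kappa(P_\theta) < 1$ is a fair observation that applies equally to the argument the paper cites.
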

\begin{proof}
    The result follows by applying the same reasoning as that for Lemma A.3 in \cite{zou2019finite} to the bound from Theorem 3.3 in \cite{mitrophanov2005sensitivity}.
\end{proof}

The next result is an extension of Lemma \ref{lemma:a6} to our MLMC critic gradient estimator.
\begin{lemma} \label{lemma:a6sss}
    Assume $\norm{\nabla G(\omega)} \leq G_G$, for all $\omega$ such that $\norm{\omega} \leq R_{\omega}$. Define $D = \sup_s \norm{\phi(s)}$. Fix $T_{max} \in \mathbb{N}, \theta_t$ measurable with respect to $\mathcal{F}_{t-1}$, and let $K = \tau_{max}^{\theta_t} \ceil{2 T_{max}}$. Define $g_t^N = \frac{1}{N} \sum_{i=1}^N \delta_t^i \phi(s_t^i)$, for $N \in \left[ T_{max} \right]$, where $\delta_t^i = r_t^i - \eta_t + (\phi(s_t^{i+1}) - \phi(s_t^i))^T \omega_t$. Then, for all $N \in [T_{max}]$,
    \begin{align}
        \E \left[ \bignorm{ g_t^N - \nabla G(\omega_t) } \right] &\leq O \left( G_G \sqrt{ \log KN } \sqrt{\frac{K}{N}} \right) + D \E \left[ | \eta_t - \eta_t^* | \right], \\
        \E \left[ \bignorm{ g_t^N - \nabla G(\omega_t) }^2 \right] &\leq O \left( G_G^2 \log(KN) \frac{K}{N} \right) + D^2 \E \left[ ( \eta_t - \eta_t^* )^2 \right].
    \end{align}
\end{lemma}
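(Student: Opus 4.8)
The plan is to reduce this to Lemma \ref{lemma:a6} by isolating the single place where $g_t^N$ fails to be a genuine empirical average of an estimator whose stationary expectation is $\nabla G(\omega_t)$: namely, the appearance of the running average-reward estimate $\eta_t$ in the temporal difference $\delta_t^i$, rather than the true value $\eta_t^* = J(\theta_t)$. Concretely, I would introduce the idealized TD term $\tilde{\delta}_t^i = r_t^i - \eta_t^* + (\phi(s_t^{i+1}) - \phi(s_t^i))^T \omega_t$ and the corresponding idealized estimator $\tilde{g}_t^N = \frac{1}{N}\sum_{i=1}^N \tilde{\delta}_t^i \phi(s_t^i)$. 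A direct computation against the definitions of $A_{\theta}, b_{\theta}$ shows $\E_{s \sim \mu_{\theta_t}, a \sim \pi_{\theta_t}, s' \sim p}[\tilde{\delta}_t^i \phi(s_t^i)] = b_{\theta_t} - A_{\theta_t}\omega_t = \nabla G(\omega_t)$, so $\tilde{g}_t^N$ fits the template of Lemma \ref{lemma:a6} with $x_t = \omega_t$, $l(\omega_t, z_t^i) = \tilde{\delta}_t^i\phi(s_t^i)$, and $\nabla L(\omega_t) = \nabla G(\omega_t)$.

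Next I would quantify the discrepancy between $g_t^N$ and $\tilde{g}_t^N$. Since $\delta_t^i - \tilde{\delta}_t^i = \eta_t^* - \eta_t$ is constant across $i$ and, crucially, deterministic conditioned on $\mathcal{F}_{t-1}$ (both $\eta_t$ and $\eta_t^* = J(\theta_t)$ are $\mathcal{F}_{t-1}$-measurable), we obtain $g_t^N - \tilde{g}_t^N = (\eta_t^* - \eta_t)\cdot\frac{1}{N}\sum_{i=1}^N \phi(s_t^i)$, whence $\norm{g_t^N - \tilde{g}_t^N} \leq D|\eta_t - \eta_t^*|$ using $\norm{\phi(s)} \leq D$ and the triangle inequality. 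A triangle inequality then gives $\norm{g_t^N - \nabla G(\omega_t)} \leq \norm{\tilde{g}_t^N - \nabla G(\omega_t)} + D|\eta_t - \eta_t^*|$; taking expectations and applying the first-moment bound of Lemma \ref{lemma:a6} to $\tilde{g}_t^N$ delivers the first claim. For the second moment I would apply $(a+b)^2 \leq 2a^2 + 2b^2$, absorbing the factor $2$ on the sampling term into the $O(\cdot)$ and the $D^2$ term, then take expectations and again invoke Lemma \ref{lemma:a6}.

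Before invoking Lemma \ref{lemma:a6}, the hypothesis that needs checking is the uniform boundedness of the summand: the lemma requires $\norm{l(\omega_t, z)}\le G_G$ in addition to $\norm{\nabla G(\omega_t)} \le G_G$. The latter is assumed; for the former, $\norm{\tilde{\delta}_t^i \phi(s_t^i)} \le |\tilde{\delta}_t^i| D$, and $|\tilde{\delta}_t^i|$ is controlled using boundedness of rewards, boundedness of $\eta_t^* = J(\theta_t)$, $\norm{\phi} \le D$, and $\norm{\omega_t} \le R_{\omega}$ (guaranteed by the projection in the critic update, Assumption \ref{assum:omega_projection}), so one takes $G_G$ large enough to bound both quantities. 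The measurability of $\omega_t, \eta_t, \theta_t$ with respect to $\mathcal{F}_{t-1}$ --- needed so that $\tilde{g}_t^N$ is an empirical average of a fixed estimator along a fresh $\pi_{\theta_t}$-trajectory, and so that the scalar $\eta_t^* - \eta_t$ can be pulled outside the conditional expectation --- follows from the filtration definition and the algorithm's update order. The only genuinely delicate point, and the main thing to get right, is this conditioning bookkeeping: the average-reward error must emerge as a clean outer expectation $D\,\E[|\eta_t - \eta_t^*|]$ (resp. $D^2\E[(\eta_t-\eta_t^*)^2]$) rather than being entangled with the Markovian concentration estimate, which is precisely what lets the reward-tracking error enter the critic bound as a separate additive term feeding into the downstream two-timescale analysis.
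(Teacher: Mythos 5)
Your proposal follows essentially the same route as the paper's proof: the paper likewise introduces the idealized TD term $\delta_t^{i,\eta}$ with $\eta_t^*$ in place of $\eta_t$, splits $\norm{g_t^N - \nabla G(\omega_t)}$ by the triangle inequality into the $D|\eta_t-\eta_t^*|$ discrepancy plus a term to which Lemma \ref{lemma:a6} applies, and handles the second moment analogously. The additional bookkeeping you supply (verifying the stationary mean is $b_{\theta_t}-A_{\theta_t}\omega_t$, the boundedness hypothesis via the projection, and the $\mathcal{F}_{t-1}$-measurability) is left implicit in the paper but is exactly what its one-line appeal to the earlier lemmas relies on.
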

\begin{proof}
    Define
    \begin{align}
        \delta_t^{i, \eta} &= r_t^i - \eta_t^* + (\phi(s_t^{i+1}) - \phi(s_t^i))^T \omega_t, \\
        g_t^{N, \eta} &= \frac{1}{N} \sum_{i=1}^{N} \delta_t^{i, \eta} \phi(s_t^i).
    \end{align}
    Clearly
    \begin{align}
        \bignorm{ g_t^N - \nabla G(\omega_t) } &\leq \bignorm{ g_t^N - g_t^{N, \eta} } + \bignorm{ g_t^{N, \eta} - \nabla G(\omega_t) } \\
        &= \bignorm{ \frac{1}{N} \sum_{i=1}^N \delta_t^i \phi(s_t^i) - \delta_t^{i, \eta} \phi(s_t^i) } + \bignorm{ \frac{1}{N} \sum_{i=1}^N \delta_t^{i, \eta} \phi(s_t^i) - \nabla G(\omega_t) }.
    \end{align}
    Notice that the first term can be bounded by $D | \eta_t - \eta_t^* |$ and that Lemma \ref{lemma:a6} applies to the second term. The remainder of the proof is analogous to that of Lemma \ref{lemma:a6ss}.
\end{proof}

Next, we need a critic version of Lemma \ref{lemma:31_app}.
\begin{lemma} \label{lemma:31sss}
    Let $j_{max} = \floor{ \log T_{max} }$ and fix $\theta_t$ measurable w.r.t. $\mathcal{F}_{t-1}$. Assume $T_{max} \geq \tau_{mix}^{\theta_t}$ and $\norm{ \nabla G(\omega) } \leq G_G$, for all $\omega$ such that $\norm{\omega} \leq R_{\omega}$. Then
    \begin{align}
        \E_{t-1} \left[ g_t \right] &= \E_{t-1} \left[ g_t^{j_{max}} \right] \\
        \E \left[ \norm{ g_t }^2 \right] &\leq \wo{ G_G^2 \tau_{mix}^{\theta_t} \log T_{max} } + 8 \log(T_{max}) T_{max} D^2 \E \left[ (\eta_t - \eta_t^*)^2 \right].
    \end{align}
\end{lemma}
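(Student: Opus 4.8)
The plan is to follow the proof of Lemma \ref{lemma:31ss} almost verbatim, swapping in the critic-specific error bound of Lemma \ref{lemma:a6sss} for the actor bound of Lemma \ref{lemma:a6ss}. The mean identity is immediate: since $g_t := g_t^{MLMC}$ is built from the level estimates $g_t^j$ through exactly the same MLMC construction \eqref{l_MLMC_Gradient}, the argument of Lemma \ref{lemma:31_app} applies unchanged, giving $\E_{t-1}[g_t] = \E_{t-1}[g_t^{j_{max}}]$.

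For the second-moment bound I would begin, as in Lemma \ref{lemma:31ss}, with Cauchy--Schwarz and the boundedness of each $g_t^j$ (which follows from the projection onto $\Omega$ together with the boundedness of the rewards and features), to write $\E[\norm{g_t}^2] \leq 2\E[\norm{g_t - g_t^0}^2] + 2G_G^2$. Using $g_t = g_t^0 + 2^{J_t}(g_t^{J_t} - g_t^{J_t - 1})$ with $P(J_t = j) = 2^{-j}$, the cross terms cancel and
\[ \E[\norm{g_t - g_t^0}^2] = \sum_{j=1}^{j_{max}} 2^j \E[\norm{g_t^j - g_t^{j-1}}^2]. \]
I would then split $\norm{g_t^j - g_t^{j-1}}^2 \leq 2\norm{g_t^j - \nabla G(\omega_t)}^2 + 2\norm{g_t^{j-1} - \nabla G(\omega_t)}^2$ and apply the second-moment estimate of Lemma \ref{lemma:a6sss} (with $N = 2^j$, so that the $K/N$ factor carries a $2^{-j}$) to each term. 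This yields, for each level $j$, a contribution of the form $\wo{2^{-j} G_G^2 \tau_{mix}^{\theta_t} \log T_{max}} + D^2 \E[(\eta_t - \eta_t^*)^2]$.

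The final step is to multiply through by $2^j$ and sum. The variance-type term carries a $2^{-j}$ factor that exactly cancels the $2^j$ weight, so summing over the $j_{max} = \floor{\log T_{max}}$ levels gives $\wo{G_G^2 \tau_{mix}^{\theta_t} \log T_{max}}$, the extra $\log T_{max}$ from the level count being absorbed into the $\wo{\cdot}$ notation. The reward-tracking error term, by contrast, is constant in $j$, so after the $2^j$ weighting it accumulates: $\sum_{j=1}^{j_{max}} 2^j D^2 \E[(\eta_t - \eta_t^*)^2] \leq O(\log(T_{max}) T_{max}) \cdot D^2 \E[(\eta_t - \eta_t^*)^2]$, using $2^{j_{max}} \leq T_{max}$ and $j_{max} \leq \log T_{max}$. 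Combining with the $2G_G^2$ term from the first step and collecting constants produces the stated bound with the $8 \log(T_{max}) T_{max} D^2 \E[(\eta_t - \eta_t^*)^2]$ summand.

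The one genuinely substantive point---and where this lemma departs from the plain MLMC bound of Lemma \ref{lemma:31_app}---is the asymmetric treatment of the two terms under the $2^j$ reweighting. The bias induced by the average-reward tracking error $\eta_t - \eta_t^*$ does not enjoy the $1/N$ concentration decay that the Markovian sampling error does, so it is amplified by the full rollout length $T_{max}$ rather than cancelled. Verifying the bookkeeping here---confirming that this term is genuinely $j$-independent so the geometric sum telescopes to $O(T_{max})$, and tracking the constant $8$ through the two applications of $(a+b)^2 \leq 2a^2 + 2b^2$---is the main thing to carry out carefully; everything else is a direct transcription of the actor argument.
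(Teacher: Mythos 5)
Your proposal is correct and follows essentially the same route as the paper, which itself proves this lemma by pointing back to Lemma \ref{lemma:a6sss} and the argument of Lemma \ref{lemma:31ss}; your accounting of the $2^j$ reweighting, the $j$-independence of the $D^2\E[(\eta_t-\eta_t^*)^2]$ term, and the constant $8$ all match the paper's computation. (The only nit is the phrase ``the cross terms cancel'': there are no cross terms, since $g_t - g_t^0 = 2^{J_t}(g_t^{J_t}-g_t^{J_t-1})$ exactly and the identity follows by conditioning on $J_t$.)
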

\begin{proof}
    The claim follows from Lemma \ref{lemma:a6sss} by the same argument as that used in the proof of Lemma \ref{lemma:31ss}.
\end{proof}

We now provide the main result of this section. The analysis is a modification of that used for the average reward tracking setting.
\begin{theorem} \label{thm:critic_bd_app}
Assume $\beta_t = (1 + t)^{-\nu}, \alpha_t = \alpha_t' / \sqrt{\sum_{k=1}^t \norm{h_t}^2}$, and $\alpha_t' = (1 + t)^{-\sigma}$, where $0 < \nu < \sigma < 1$. Assume without loss of generality that $\alpha_t \leq \alpha_t'$, for all $t$. Furthermore, assume that Assumptions \ref{assum:omega_projection} and \ref{assum:positive_definite} hold. Then
\begin{align}
    \frac{1}{T} \sum_{t=1}^T \E \left[ \norm{\omega_t - \omega_t^*}^2 \right] &\leq \bo{T^{\nu - 1}} + \bo{T^{-2(\sigma - \nu)}} \\
    &\qquad + \wo{ \max_{t \in [T]} \tau_{mix}^{\theta_t} \log T_{max}} \bo{T^{-\nu}} \\
    %
    %
    &\qquad + \wo{ \max_{t \in [T]} \tau_{mix}^{\theta_t} \frac{\log T_{max}}{T_{max}} }.
\end{align}
\end{theorem}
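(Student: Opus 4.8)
The plan is to reproduce the argument of Theorem \ref{thm:reward_analysis_app} almost line for line, replacing the scalar strongly convex potential $F(\eta)=\tfrac12(\eta-\eta^*)^2$ with the quadratic $H(\omega)=\tfrac12\omega^\top A_{\theta_t}\omega-b_{\theta_t}^\top\omega$, whose unique minimizer is $\omega_t^*=A_{\theta_t}^{-1}b_{\theta_t}$ and which satisfies $\nabla H(\omega_t)=-\nabla G(\omega_t)=A_{\theta_t}\omega_t-b_{\theta_t}$. Under Assumption \ref{assum:positive_definite} together with the eigenvalue lower bound $\lambda$ of Assumption \ref{assum:critic_positive_definite}, the positive-definiteness of $A_{\theta_t}$ gives $(\omega_t-\omega_t^*)^\top A_{\theta_t}(\omega_t-\omega_t^*)\ge\lambda\|\omega_t-\omega_t^*\|^2$, which plays the exact role of the $m_F=1$ strong-convexity step in the reward proof (with $\lambda$ now in place of $1$). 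First I would dispose of the projection: since $\|\omega_t^*\|=\|A_{\theta_t}^{-1}b_{\theta_t}\|\le 2R/\lambda=R_\omega$, the fixed point lies in $\Omega$, so by non-expansiveness of $\Pi_\Omega$ (Assumption \ref{assum:omega_projection}) we may drop the projection, $\|\omega_{t+1}-\omega_{t+1}^*\|^2\le\|\omega_t+\beta_t g_t^{MLMC}-\omega_{t+1}^*\|^2$, and proceed with the unprojected recursion. Inserting $\pm\omega_t^*$ and $\pm\,2\beta_t(\omega_t-\omega_t^*)^\top\nabla G(\omega_t)$ exactly as in \eqref{ref_1}--\eqref{ref_4} then yields the one-step contraction
\begin{align*}
\E[\|\omega_{t+1}-\omega_{t+1}^*\|^2]\le{}&(1-2\lambda\beta_t)\,\E[\|\omega_t-\omega_t^*\|^2]+2\beta_t\,\E[(\omega_t-\omega_t^*)^\top(g_t^{MLMC}-\nabla G(\omega_t))]\\
&+2\,\E[(\omega_t-\omega_t^*)^\top(\omega_t^*-\omega_{t+1}^*)]+2\,\E[\|\omega_t^*-\omega_{t+1}^*\|^2]+2\beta_t^2\,\E[\|g_t^{MLMC}\|^2].
\end{align*}

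Dividing by $2\lambda\beta_t$, summing over $t$, and regrouping produces five terms $I_1,\dots,I_5$ in exact analogy with \eqref{ref_5}. The transcription of their bounds is routine: $I_1$ telescopes using $\|\omega_t-\omega_t^*\|^2\le(2R_\omega)^2$ to give $O(1/\beta_T)=O(T^\nu)$; the drift terms $I_3,I_4$ use the Lipschitz estimate $\|\omega_t^*-\omega_{t+1}^*\|\le L_\omega G_H\alpha_t$ supplied by Lemma \ref{lemma:critic_Lipschitz_app} (in place of $|\eta_t^*-\eta_{t+1}^*|\le LG_H\alpha_t$) together with H\"older's inequality, producing an $O(T^{1-2(\sigma-\nu)})$ contribution and a $\sqrt{A(T)}\sqrt{B(T)}$ cross term, where $A(T)=\sum_t\E[\|\omega_t-\omega_t^*\|^2]$. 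For $I_2$ I would use $\E_{t-1}[g_t^{MLMC}]=\E_{t-1}[g_t^{j_{max}}]$ from Lemma \ref{lemma:31sss} to replace the MLMC estimate by its low-bias surrogate, then bound the resulting bias via Lemma \ref{lemma:a6sss}; folding the \emph{squared} bias $\wo{\tau_{mix}^{\theta_t}\log T_{max}/T_{max}}$ into $B(T)$ through Cauchy--Schwarz is precisely what yields the un-square-rooted $\wo{\max_t\tau_{mix}^{\theta_t}\log T_{max}/T_{max}}$ in the final bound, in contrast to the square-rooted term in the reward case. Finally, $I_5$ uses the second-moment bound of Lemma \ref{lemma:31sss}. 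The inequality then has the form $A(T)\le C(T)+2\sqrt{A(T)}\sqrt{B(T)}$, which I would solve as in the reward proof to get $A(T)\le 2C(T)+4B(T)$, and the integral estimate $\sum_{t=1}^T(1+t)^{-\xi}\le(1+T)^{1-\xi}/(1-\xi)$ converts the summed bounds into the stated powers of $T$ after dividing by $T$.

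The genuinely new difficulty, and what I expect to be the main obstacle, is the coupling to the average-reward error. Both Lemma \ref{lemma:a6sss} and Lemma \ref{lemma:31sss} carry an extra additive term proportional to $\E[(\eta_t-\eta_t^*)^2]$ (respectively $\E[|\eta_t-\eta_t^*|]$), arising because the estimated TD error $\delta_t^i$ contains $\eta_t$ rather than $\eta_t^*=J(\theta_t)$, and in the second-moment bound this term is moreover amplified by the MLMC factor $\log(T_{max})\,T_{max}$. The plan is to carry these terms symbolically through $I_2$ and $I_5$ and then substitute the average-reward tracking bound of Theorem \ref{thm:reward_analysis_app}, which controls $\tfrac1T\sum_t\E[(\eta_t-\eta_t^*)^2]$ at exactly the rate claimed here. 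Because the critic stepsize $\beta_t$ and the reward-tracking stepsize share the same exponent $\nu$ (with the actor relegated to the slower $\sigma$-timescale), the reward error lives on the same timescale as the critic error, so these coupling contributions are of the same order as the terms already present and are absorbed without degrading the stated rate. The delicate bookkeeping step on which the result hinges is verifying this absorption while carefully tracking the $T_{max}$ factors, so that the substituted reward bound enters at the right order and does not inflate the final critic bound.
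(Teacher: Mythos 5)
Your proposal follows essentially the same route as the paper's proof of Theorem \ref{thm:critic_bd_app}: non-expansiveness of the projection, a one-step contraction from the positive-definiteness of $A_{\theta_t}$ (the paper phrases this as strong concavity of $G$ with parameter $m$ where you use the eigenvalue bound $\lambda$), the same five-term decomposition mirroring Theorem \ref{thm:reward_analysis_app}, Lemmas \ref{lemma:a6sss} and \ref{lemma:31sss} for the bias and second-moment terms, resolution of the self-bounding quadratic inequality, and substitution of the reward-tracking bound to absorb the $\E[(\eta_t-\eta_t^*)^2]$ coupling. The only differences are cosmetic bookkeeping (you fold the two cross terms into a single $B(T)$ where the paper keeps $F(T)$ and $G(T)$ separate), so the argument is correct and matches the paper.
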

\begin{proof}
    By Assumption \ref{assum:positive_definite} and the fact that $\nabla^2 G(\omega) = -A_{\theta}$, $G(\omega)$ is strongly concave. Let $m$ denote its strong concavity parameter, so that $\langle \nabla G(\omega) - \nabla G(\omega'), \omega - \omega' \rangle \leq - m \norm{ \omega - \omega' }^2$, for all $\omega, \omega'$. Recall that $\omega_{t+1} = \Pi_{R_{\omega}} \left( \omega_t + \beta g_t \right)$, where we use $g_t = g_t^{MLMC}$ for brevity. We have
    \begin{align}
        \norm{ \omega_{t+1} - \omega_{t+1}^* }^2 &= \norm{ \Pi_{R_{\omega}} \left( \omega_t + \beta g_t \right) - \omega_{t+1}^* }^2 \leq \norm{ \omega_t + \beta g_t - \omega_{t+1}^* }^2,
    \end{align}
    where the inequality holds since $\norm{\omega_{t+1}^*} \leq R_{\omega}$ by definition, so projection can only reduce the distance. Furthermore,
    \begin{align}
        \norm{ w_{t+1}-w_{t+1}^* }^2 &\leq \norm{ w_t - \beta_t h_t-w_{t}^*+w_{t}^*-w_{t+1}^* }^2 \\
        &= \norm{ \omega_t - \omega_t^* }^2 + 2 \beta_t \langle \omega_t - \omega_t^*, g_t \rangle + 2 \langle \omega_t - \omega_t^*, \omega_t^* - \omega_{t+1}^* \rangle \\
        & \qquad + 2 \beta_t \langle \omega_t^* - \omega_{t+1}^*, g_t \rangle + \norm{ \omega_t^* - \omega_{t+1}^* }^2 + \beta_t^2 \norm{ h_t }^2 \\
        &\labelrel{\leq}{ineq:critic_1} \norm{ \omega_t - \omega_t^* }^2 + 2 \beta_t \langle \omega_t - \omega_t^*, g_t \rangle + 2 \langle \omega_t - \omega_t^*, \omega_t^* - \omega_{t+1}^* \rangle \\
        & \qquad + 2 \norm{ \omega_t^* - \omega_{t+1}^* }^2 + 2 \beta_t^2 \norm{ h_t }^2 \\
        &= \norm{ \omega_t - \omega_t^* }^2 + 2 \beta_t \langle \omega_t - \omega_t^*, \nabla G(\omega_t) \rangle + 2 \beta_t \langle \omega_t - \omega_t^*, g_t - \nabla G(\omega_t) \rangle \\
        & \qquad + 2 \langle \omega_t - \omega_t^*, \omega_t^* - \omega_{t+1}^* \rangle + 2 \norm{ \omega_t^* - \omega_{t+1}^* }^2 + 2 \beta_t^2 \norm{ h_t }^2 \\
        &\labelrel{\leq}{ineq:critic_2} (1 - 2 m \beta_t) \norm{ \omega_t - \omega_t^* }^2 + 2 \beta_t \langle \omega_t - \omega_t^*, g_t - \nabla G(\omega_t) \rangle \\
        & \qquad + 2 \langle \omega_t - \omega_t^*, \omega_t^* - \omega_{t+1}^* \rangle + 2 \norm{ \omega_t^* - \omega_{t+1}^* }^2 + 2 \beta_t^2 \norm{ h_t }^2,
    \end{align}
    where \eqref{ineq:critic_1} follows from completing the square with the last three terms and the fact that $(a + b)^2 \leq 2a^2 + 2b^2$, and \eqref{ineq:critic_2} follows from the strong concavity of $G(\omega)$.

    Rearranging, dividing by $2m\beta_t$, taking expectations, and summing yields
    \begin{align}
        \sum_{t=1}^{T}\E[\|w_{t}-w_{t}^*\|^2] 	\leq  & \underbrace{\sum_{t=1}^{T}\frac{1}{2 m \beta_t} 	\E[\|w_{t}-w_{t}^*\|^2-\|w_{t}-w_{t}^*\|^2]}_{M_1}+  \underbrace{\sum_{t=1}^{T}	\frac{1}{m} \E[ \langle w_{t}-w_{t}^*, \nabla G(w_{t}) - g_t\rangle] }_{M_{2}}	\nonumber
        \\
        &+ \underbrace{\sum_{t=1}^{T}\frac{1}{m \beta_t}\E[\langle 	w_{t}-w_{t}^*, w_{t}^*-w_{t+1}^*\rangle]}_{M_{3}}
        + \underbrace{\sum_{t=1}^{T}\frac{1}{m \beta_t}	\E[\|w_{t}^*-w_{t+1}^*\|^2]}_{M_4}+ \underbrace{\sum_{t=1}^{T} \frac{\beta_t}{m} \E[\|g_t\|^2]}_{M_5}.\label{ref_5_critic}
    \end{align}

    As in the proof of Theorem \ref{thm:reward_analysis_app}, we first provide intermediate bounds on $M_1, M_2, M_3, M_4, M_5$, then manipulate the resulting expressions to obtain the desired, final bound on the critic error. With the exception of $M_2$, the intermediate bounds follow by the same reasoning as their counterparts in Theorem \ref{thm:reward_analysis_app}.

    \textbf{Bound for $M_1$:} By the same reasoning as for $I_1$,
    \begin{equation}
        M_1 \leq \frac{2 R_{\omega}^2}{m \beta_t}.
    \end{equation}
    \textbf{Bound for $M_2$:} Since $\omega_t, \omega_t^*$ are deterministic given $\mathcal{F}_{t-1}$, by the law of total expectation and Lemma \ref{lemma:31sss} we have
    \begin{equation}
        M_2 = \sum_{t=1}^T \frac{1}{m} \E \left[ \langle \omega_t - \omega_t^*, g_t^{j_{max}} - \nabla G(\omega_t) \rangle \right].
    \end{equation}
    Furthermore,
    \begin{align}
        | M_2 | &\labelrel{\leq}{ineq:critic_3} \sum_{t=1}^T \frac{1}{m} \E \left[ \norm{ \omega_t - \omega_t^* } \cdot \norm{ g_t^{j_{max}} - \nabla G(\omega_t) } \right] \\
        &\labelrel{\leq}{ineq:critic_4} \sum_{t=1}^T \frac{1}{m} \left( \E \left[ \norm{ \omega_t - \omega_t^* }^2 \right] \right)^{1/2} \left( \E \left[ \norm{ g_t^{j_{max}} - \nabla G(\omega_t) }^2 \right] \right)^{1/2} \\
        &\labelrel{\leq}{ineq:critic_5} \left( \frac{1}{m^2} \sum_{t=1}^T \E \left[ \norm{ \omega_t - \omega_t^* }^2 \right] \right)^{1/2} \left( \sum_{t=1}^T \E \left[ \norm{ g_t^{j_{max}} - \nabla G(\omega_t) }^2 \right] \right)^{1/2} \\
        &\labelrel{\leq}{ineq:critic_6} \left( \frac{1}{m^2} \sum_{t=1}^T \E \left[ \norm{ \omega_t - \omega_t^* }^2 \right] \right)^{1/2} \left( T \wo{ G_G^2 \max_{t \in [T]} \tau_{mix}^{\theta_t} \frac{\log T_{max}}{T_{max}} } + D^2 \sum_{t=1}^T \E \left[ \norm{ \eta_t - \eta_t^* }^2 \right] \right)^{1/2},
    \end{align}
    where \eqref{ineq:critic_3} follows by applying the triangle, Jensen's, and Cauchy-Schwarz inequalities, \eqref{ineq:critic_4} and \eqref{ineq:critic_5} follow from H\"{o}lder's inequality, and \eqref{ineq:critic_6} results from applying Lemma \ref{lemma:a6sss}.
    
    \textbf{Bound for $M_3$:} Since $\omega^*(\theta)$ is $L_{\omega}$-Lipschitz in $\theta$ by Lemma \ref{lemma:critic_Lipschitz_app}, we have $\norm{ \omega_t^* - \omega_{t+1}^*} \leq L_{\omega} \norm{ \theta_t - \theta_{t+1} } \leq L_{\omega} G_H \alpha_t$, where we recall that $\sup_{\theta} \norm{ \nabla J(\theta) } \leq G_H$. Thus, by reasoning analogous to $I_3$,
    \begin{align}
        | M_3 | &\leq \left( \sum_{t=1}^T \E \left[ \norm{ \omega_t - \omega_t^* }^2 \right] \right)^{1/2} \left( \frac{L_{\omega}^2 G_H^2}{m^2} \sum_{t=1}^T \frac{\alpha_t^2}{\beta_t^2} \right)^{1/2}.
    \end{align}

    \textbf{Bound for $M_4$:} Similarly,
    \begin{align}
        M_4 &\leq \frac{L_{\omega}^2 G_H^2}{m} \sum_{k=1}^T \frac{\alpha_t^2}{\beta_t}.
    \end{align}

    \textbf{Bound for $M_5$:} Finally, by Lemma \ref{lemma:31sss} and the fact that $|\eta_t| \leq R$, for all $t$,
    \begin{align}
        M_5 &\leq \sum_{t=1}^T \frac{\beta_t}{m} \left[ \wo{ G_H^2 \tau_{mix}^{\theta_t} \log T_{max} } + 8 D^2 \log(T_{max}) T_{max} \E \left[ (\eta_t - \eta_t^*)^2 \right] \right] \\
        &\leq \left[ \wo{ G_H^2 \tau_{mix}^{\theta_t} \log T_{max} } + 16 D^2 R^2 \log(T_{max}) T_{max} \right] \sum_{k=1}^T \frac{\beta_t}{m}.
    \end{align}

    Combining the foregoing and recalling the definitions of $\beta_t, \alpha_t, \alpha_t'$, we have
    \begin{align}
        \sum_{t=1}^T \E &\left[ \norm{ \omega_t - \omega_t^* }^2 \right] \leq \frac{2 R_{\omega}}{m} (1 + t)^{\nu} \\
        & \qquad + \left( \frac{1}{m^2} \sum_{t=1}^T \E \left[ \norm{ \omega_t - \omega_t^* }^2 \right] \right)^{1/2} \left( T \wo{ G_G^2 \max_{t \in [T]} \tau_{mix}^{\theta_t} \frac{\log T_{max}}{T_{max}} } + D^2 \sum_{t=1}^T \E \left[ \norm{ \eta_t - \eta_t^* }^2 \right] \right)^{1/2} \\
        & \qquad + \left( \sum_{t=1}^T \E \left[ \norm{ \omega_t - \omega_t^* }^2 \right] \right)^{1/2} \left( \frac{L_{\omega}^2 G_H^2}{m^2} \sum_{t=1}^T (1+t)^{-2(\sigma - \nu)} \right)^{1/2} \\
        & \qquad + \frac{L_{\omega}^2 G_H^2}{m} \sum_{k=1}^T (1+t)^{\nu - 2\sigma} \\
        & \qquad + \wo{ G_H^2 \max_{t \in [T]} \tau_{mix}^{\theta_t} \log (T_{max}) T_{max}} \sum_{t=1}^T (1+t)^{-\nu}.
    \end{align}

    Define
    \begin{align}
        Z(T) &= \sum_{t=1}^T \E \left[ \norm{ \omega_t - \omega_t^* }^2 \right], \\
        F(T) &= \frac{L_{\omega}^2 G_H^2}{4m^2} \sum_{t=1}^T (1+t)^{-2(\sigma - \nu)}, \\
        G(T) &= \frac{1}{16m} \left[ T \wo{ G_G^2 \max_{t \in [T]} \tau_{mix}^{\theta_t} \frac{\log T_{max}}{T_{max}} } + D^2 \sum_{t=1}^T \E \left[ \norm{ \eta_t - \eta_t^* }^2 \right] \right], \\
        A(T) &= \frac{2 R_{\omega}}{m} (1 + t)^{\nu} + \frac{L_{\omega}^2 G_H^2}{m} \sum_{k=1}^T (1+t)^{\nu - 2\sigma} + \wo{ G_H^2 \max_{t \in [T]} \tau_{mix}^{\theta_t} \log (T_{max}) T_{max}} \sum_{t=1}^T (1+t)^{-\nu}.
    \end{align}

    The previous inequality is thus the same as
    \begin{align}
        Z(T) \leq A(T) + 2 \sqrt{ Z(T) } \sqrt{ F(T) } + 2 \sqrt{ Z(T) } \sqrt{ G(T) },
    \end{align}
    which is in turn equivalent to
    \begin{align}
        \left( \sqrt{ Z(T) } - \sqrt{ F(T) } - \sqrt{ G(T) } \right)^2 \leq A(T) + \left( \sqrt{ F(T) } + \sqrt{ G(T) } \right)^2.
    \end{align}
    This yields
    \begin{align}
        \sqrt{ Z(T) } - \sqrt{ F(T) } - \sqrt{ G(T) } &\leq \left( A(T) + \left( \sqrt{ F(T) } + \sqrt{ G(T) } \right)^2 \right)^{1/2} \\
        &\leq \sqrt{ A(T) } + \sqrt{ F(T) } + \sqrt{ G(T) },
    \end{align}
    whence
    \begin{align}
        \sqrt{ Z(T) } \leq \sqrt{ A(T) } + 2 \sqrt{ F(T) } + 2 \sqrt{ G(T) }
    \end{align}
    and thus
    \begin{align}
        Z(T) &\leq 2 A(T) + 2 \left( 2 \sqrt{ F(T) } + 2 \sqrt{ G(T) } \right)^2 \\
        &\leq 2 A(T) + 16 F(T) + 16 G(T).
    \end{align}

    Noticing that $2 A(T) + 16 F(T) = \bo{T^{\nu}} + \bo{ T^{1 + \nu - 2\sigma}} + \bo{T^{1-\nu}}$ and using the bound $\sum_{t=1}^T (1+t)^{-\xi} \leq (1 + t)^{1-\xi} / (1 - \xi)$, we have
    \begin{align}
        \sum_{t=1}^T \E \left[ \norm{ \omega_t - \omega_t^* }^2 \right] &\leq \frac{1}{m} \left[ T \wo{ G_G^2 \max_{t \in [T]} \tau_{mix}^{\theta_t} \frac{\log T_{max}}{T_{max}} } + D^2 \sum_{t=1}^T \E \left[ \norm{ \eta_t - \eta_t^* }^2 \right] \right] \\
        & \qquad + \bo{T^{\nu}} + \bo{ T^{1 + \nu - 2\sigma}} + \bo{T^{1-\nu}}.
    \end{align}
    Dividing by $T$, combining with Theorem \ref{thm:reward_analysis_app}, and absorbing constants into the order notation finishes the proof.

\end{proof}

\section{Proof of Theorem \ref{thm:convergence_rate}}\label{prof_thm:convergence_rate}
\begin{proof}
From the statement of Theorems \ref{thm:actor_bd} and \ref{thm:critic_analysis_main_body}, we have
\begin{align}\label{actor_adagrad_appendix}
    \frac{1}{T} \sum_{t=1}^T \mathbb{E} & \left[ \bignorm{ \nabla J(\theta_t) }^2 \right] \leq \bo{\frac{1}{\sqrt{T}} } + \bo{ \frac{1}{T} \sum_{t=1}^T \mathcal{E}(t) } + \wo{ \sqrt{\max_{t \in [T]} \tau_{mix}^{\theta_t} \frac{\log T_{\max}}{T_{\max}}} } + \bo{ \mathcal{E}_{app} },
\end{align}
and 
\begin{align}
    \frac{1}{T} \sum_{t=1}^T  \mathcal{E}(t) \leq &\bo{T^{\nu - 1}} + \bo{T^{-2(\sigma - \nu)}} + \wo{ \max_{t \in [T]} \tau_{mix}^{\theta_t} \log T_{\max}} \bo{T^{-\nu}} + \wo{ \sqrt{ \max_{t \in [T]} \tau_{mix}^{\theta_t} \frac{\log T_{\max}}{T_{\max}} } }. \label{ineq:critic_1_appendix}
\end{align}
utilizing the upper bound in \eqref{ineq:critic_1_appendix} into the right hand side of \eqref{actor_adagrad_appendix}, we get
\begin{align}\label{actor_adagrad_appendix2}
    \frac{1}{T} \sum_{t=1}^T \mathbb{E}  \left[ \bignorm{ \nabla J(\theta_t) }^2 \right] \leq & \bo{\frac{1}{\sqrt{T}} }+\bo{T^{\nu - 1}} + \bo{T^{-2(\sigma - \nu)}} + \wo{ \max_{t \in [T]} \tau_{mix}^{\theta_t} \log T_{\max}} \bo{T^{-\nu}}  
    \nonumber
    \\
    &+ \wo{ \sqrt{\max_{t \in [T]} \tau_{mix}^{\theta_t} \frac{\log T_{\max}}{T_{\max}}} } + \bo{ \mathcal{E}_{app} }.
\end{align}
For the selection $\nu=0.5$ and $\sigma=0.75$ (which satisfies the constraint that $0<\nu<\sigma<1$), we obtain
\begin{align}\label{actor_adagrad_appendix3}
    \frac{1}{T} \sum_{t=1}^T \mathbb{E}  \left[ \bignorm{ \nabla J(\theta_t) }^2 \right] \leq & \bo{\frac{1}{\sqrt{T}} }+\bo{\frac{1}{\sqrt{T}} } + \bo{\frac{1}{\sqrt{T}} } + \wo{ \max_{t \in [T]} \tau_{mix}^{\theta_t} \log T_{\max}} \bo{\frac{1}{\sqrt{T}}}  
    \nonumber
    \\
    &+ \wo{ \sqrt{\max_{t \in [T]} \tau_{mix}^{\theta_t} \frac{\log T_{\max}}{T_{\max}}} } + \bo{ \mathcal{E}_{app} }.
\end{align}
Therefore, after further simplification, we can write
\begin{align}\label{actor_adagrad_appendix4}
    \frac{1}{T} \sum_{t=1}^T \mathbb{E}  \left[ \bignorm{ \nabla J(\theta_t) }^2 \right] \leq & \wo{ \max_{t \in [T]} \tau_{mix}^{\theta_t} \log T_{\max}} \bo{\frac{1}{\sqrt{T}}}  
 + \wo{ \sqrt{\max_{t \in [T]} \tau_{mix}^{\theta_t} \frac{\log T_{\max}}{T_{\max}}} } + \bo{ \mathcal{E}_{app} }.
\end{align}
 completes the proof. 

\end{proof}

\section{Hyperparametrs for the Experiments}\label{experiments_details}
We list all the hyperparameters in Table \ref{table_1} here.

\begin{table*}[h]
\centering
	\caption{ This table compares the hyperparameters and performance between the four experiments, each run for five trials. From the table, we see that given the same learning rates, environment, and the number of samples, MAC and Vanilla AC converge to the same reward value.  }
 \vspace{2mm}
	\begin{tabular}{|c|c|c|c|c|c|c|c|c|}
		\hline
	\multirow{2}{*}{Method} & \multicolumn{3}{c|}{Learning Rate}  & Grid Size & $T_{\max}$ & Samples  & Limiting  & Limiting Policy
	\\ 
	\cline{2-4}
	 ~ & ~~Actor~~ & Critic & ~  Reward Estimator &  ~& &  Processed & Mean Reward & Gradient Norm
	 \\
	 \hline{}
		MAC  & $.01$\ & $.01$\ & $.01$ & $6 \times 6$ & $8$ & $3 \cdot 10^6$& $0.4$ & 0\\ \cline{1-9}
	  Vanilla AC & $.01$\ & $.01$\ & $.01$ & $6 \times 6$ & $3$ & $3 \cdot 10^6$& $0.4$ & 0\\ \cline{1-9}
	         MAC  & $.005$\ & $.005$\ & $.005$ & $10 \times 10$ & $16$ & $4 \cdot 10^6$& $0.5$ & 0\\ \cline{1-9}
Vanilla AC & $.005$\ & $.005$\ & $.005$ & $10 \times 10$ & $4$ &$4 \cdot 10^6$& $0.5$ & 0\\ \cline{1-9}

	\end{tabular}
%
%
\label{table_1}
\end{table*}

\end{document}